\newcommand{\thickhline}{\noalign{\hrule height 1.0pt}}
\newcommand{\mat}[1]{\mathbf{#1}}
\newcommand{\zz}[1]{\textcolor{red}{#1}}
\newcommand{\zm}[1]{\textcolor{blue}{#1}}
\begin{document}

\setcounter{tocdepth}{4} \setcounter{secnumdepth}{4}

\title{Quantum-Inspired Hamiltonian Monte Carlo for Bayesian Sampling}

\author{\name Ziming Liu\thanks{This work was done when the first author was visiting University of California, Santa Barbara.} \email zmliu@mit.edu \\
       \addr Department of Physics\\
       Massachusetts Institute of Technology\\
       Cambridge, MA, USA
       \AND
       \name Zheng Zhang \email zhengzhang@ece.ucsb.edu \\
       \addr Department of Electrical \& Computer Engineering\\
       University of California, Santa Barbara\\
       Santa Barbara, CA, USA}

\editor{}

\maketitle

\begin{abstract}%   <- trailing '%' for backward compatibility of .sty file
Hamiltonian Monte Carlo (HMC) is an efficient Bayesian sampling method that can make distant proposals in the parameter space by simulating a Hamiltonian dynamical system. Despite its popularity in machine learning and data science, HMC is inefficient to sample from spiky and multimodal distributions. Motivated by the energy-time uncertainty relation from quantum mechanics, we propose a Quantum-Inspired Hamiltonian Monte Carlo algorithm (QHMC). This algorithm allows a particle to have a random mass matrix with a probability distribution rather than a fixed mass. We prove the convergence property of QHMC and further show why such a random mass can improve the performance when we sample a broad class of distributions. In order to handle the big training data sets in large-scale machine learning, we develop a stochastic gradient version of QHMC using Nos{\'e}-Hoover thermostat called QSGNHT, and we also provide theoretical justifications about its steady-state distributions. Finally in the experiments, we demonstrate the effectiveness of QHMC and QSGNHT on synthetic examples, bridge regression, image denoising and neural network pruning. The proposed QHMC and QSGNHT can indeed achieve much more stable and accurate sampling results on the test cases.
\end{abstract}

\begin{keywords}
  Hamiltonian Monte Carlo, Quantum-Inspired Methods, Sparse Modeling
\end{keywords}

\section{Introduction}

Hamiltonian Monte Carlo (HMC)~\citep{Duane:1987de,neal2011mcmc} improves the traditional Markov Chain Monte Carlo (MCMC) method ~\citep{hastings1970monte} by introducing an extra momentum vector $\mat{q}$ conjugate to a state vector $\mat{x}$. In MCMC, a particle is usually only allowed to randomly walk in the state space. However, in HMC, a particle can flow quickly along the velocity direction and make distant proposals in the state space based on Hamiltonian dynamics. The energy-conservation property in continuous Hamiltonian dynamics can largely increase the acceptance rate, resulting in much faster mixing rate than standard MCMC. In numerical simulations where continuous time is quantized by discrete steps, the Metropolis-Hastings (MH) correction step can guarantee the correctness of the invariant distribution. HMC plays an increasingly important role in machine learning~\citep{NIPS2016_6117,myshkov2016posterior}.

HMC requires computation of the gradients of the energy function, which is the negative logarithm of the posterior probability in a Bayesian model. Therefore, HMC is born for smooth and differentiable functions but not immediate for non-smooth functions. In the case where functions have discontinuities, extra reflection and refraction steps~\citep{NIPS2015_5801,nishimura2017discontinuous} need to be involved in order to enhance the sampling efficiency. In the case characterized by $\ell_p$ prior where $1\leq p<2$, proximity operator methods~\citep{chaari2016hamiltonian,chaari17nshmc} can be used to increase the accuracy for simulating the Hamiltonian dynamics. 

This paper aims to develop a new type of HMC method in order to sample efficiently from a possibly spiky or multimodal posterior distribution. A representative example is Bayesian model of sparse or low-rank modeling using an $\ell_p$ ($0<p\leq1$) norm as the penalty. It is shown that sparser models can be obtained because $\ell_p$ $(0<p<1)$ prior~\citep{zhao2014p,xu20101} is a better, although non-convex, approximation for $\ell_0$ norm~\citep{Polson2017BayesianL0} than $\ell_1$ norm that is widely used in compressed sensing~\citep{donoho2006compressed,eldar2012compressed} and model reductions~\citep{hou2018structural}.

Leveraging the energy-time uncertainty relation in quantum mechanics, this paper proposes a quantum-inspired HMC (QHMC) method. In quantum mechanics, a particle can have a mass distribution other than a fixed mass value. Although being a pedagogical argument from physics, we will show from both theories and experiments that this principle can actually help achieve better sampling efficiency for non-smooth functions than standard HMC with fixed mass. The main idea of QHMC is visualized in Fig.~\ref{fig:qhmc_illus} with a one-dimensional harmonic oscillator. Assume that we have a ball with mass $m$ attached to a spring at the origin. The restoring force exerted on the ball pulls the ball back to the origin and the magnitude of the force is proportional to the displacement $x$, i.e., $F=-kx$. The ball oscillates around the origin with the time period $T=2\pi\sqrt{\frac{m}{k}}$. In standard HMC, $m$ is fixed and usually set as 1. In contrast, QHMC allows $m$ to be time-varying, meaning that the particle is sometimes moving fast and sometimes moving slowly. This is equivalent to employing a varying time scale. Consequently, different distribution landscapes can be explored more effectively with different time scales. In a flat but broad region, QHMC can quickly scan the whole region with a small time period $T$ (or small $m$). In a spiky region, we need to carefully explore every corner of the landscape, therefore a large $T$ (or large $m$) is preferred. In standard HMC, it is hard to choose a fixed time scale or mass to work well for both cases. This physical intuition is similar to the key idea of randomized HMC~\citep{Bou_Rabee_2017}, but this work consider more general cases where the mass can be a positive definite matrix $\mat{M}$.

\begin{figure}[t]
\centering
    \includegraphics[width=0.8\linewidth]{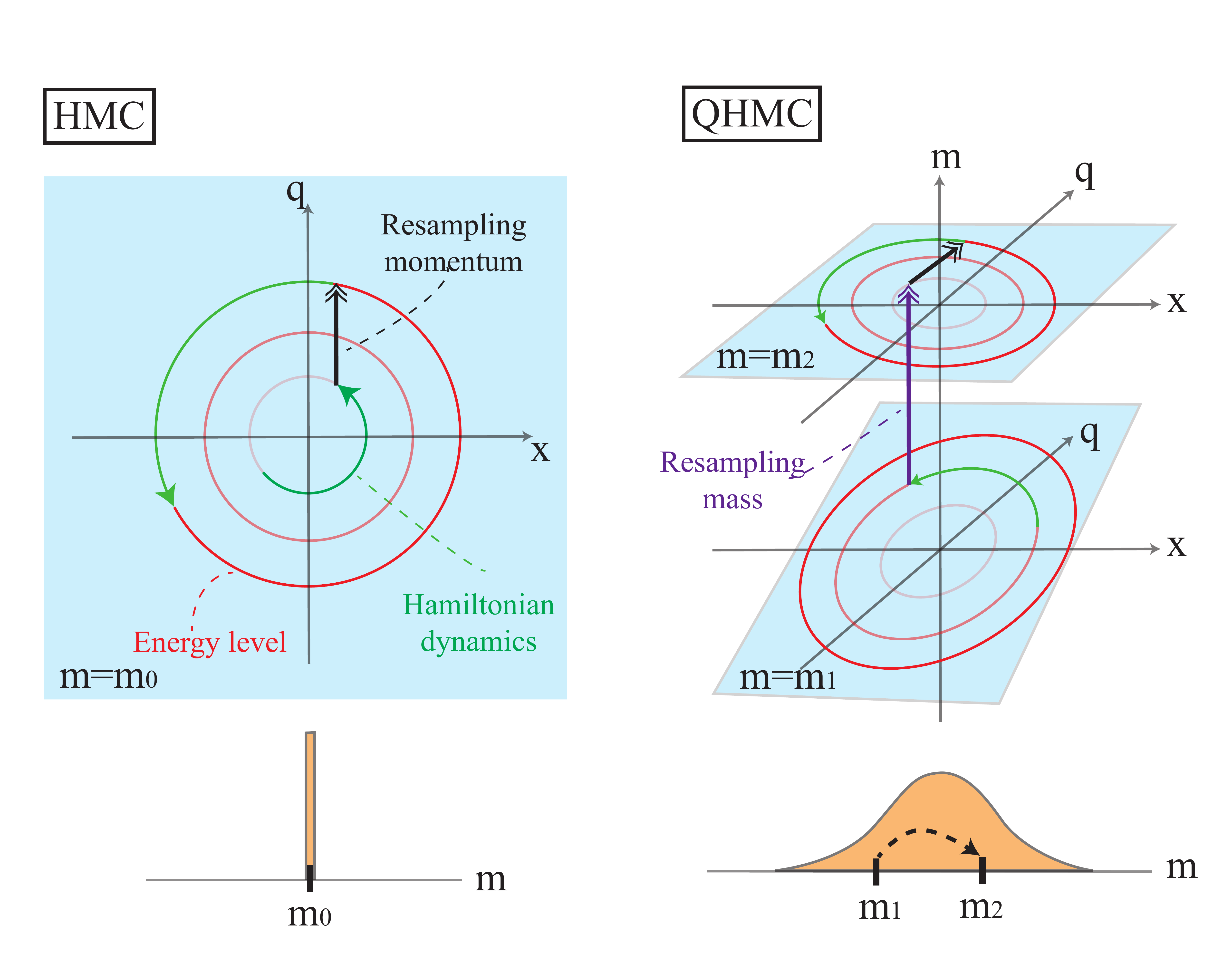}
    \caption{Illustration of Quantum-Inspired Hamiltonian Monte Carlo (QHMC) with the simple case of a one-dimensional harmonic oscillator. For HMC, the trajectories (energy levels) are confined in $(x,q)$ space with fixed mass. For QHMC, the mass is the third dimension that needs sampling. Here we use a scalar mass for illustration, however in general our framework allows the mass to be a positive definite matrix.}
    \label{fig:qhmc_illus}
    \vspace{-10pt}
\end{figure}

This paper is organized as follows. In Section
\ref{sec:HMC}, we review the standard HMC and summarize several HMC variants that are related to the physics literature. In Section \ref{sec:QHMC}, we propose the novel QHMC algorithm that treats mass as a random (positive definite) matrix rather than a fixed real positive scalar. We investigate the convergence properties of QHMC and demonstrate the advantages of QHMC with some toy examples on which HMC fails to work. Section~\ref{sec:mass_adapt} explains why treating the mass as a random variable improves the sampling performance for a large class of distributions. Section \ref{sec:SQHMC} proposes quantum stochastic gradient Nos{\'e}-Hoover thermostat (QSGNHT) to implement QHMC with massive training data, and proves its convergence based on stochastic differential equations (SDE). In Section \ref{sec:exp}, we use both synthetic and realistic examples to demonstrate the effectiveness of QHMC and QSGNHT. They can avoid parameter tunings and achieve superior accuracy for a wide range of distributions, especially for spiky and multi-modal ones that are common in Bayesian learning.

\begin{algorithm}[t]
\SetAlgoLined
\KwResult{starting point $\mat{x_0}$, step size $\epsilon$, simulation steps $L$, mass $\mat{M}=m\mat{I}$.}
 \For{$t=1,2,\cdots$}{
    Resample $\mat{q}\sim {\cal N}(0,\mat{M})$\;
    ($\mat{x}_0,\mat{q}_0$)=($\mat{x}^{(t)},\mat{q}^{(t)}$)\;
    Simulate dynamics based on Eq.~(\ref{eq:hd})\;
    $\mat{q}_0\gets \mat{q}_0-\frac{\epsilon}{2}\nabla U(\mat{x}_0)$\;
    \For{$i=1,\cdots,L-1$}{
    $\mat{x}_i\gets \mat{x}_{i-1}+\epsilon \mat{M}^{-1}\mat{q}_{i-1}$\;
    $\mat{q}_i\gets \mat{q}_{i-1}-\epsilon\nabla U(\mat{x}_i)$
    }
    $\mat{x}_L\gets \mat{x}_{L-1}+\epsilon \mat{M}^{-1}\mat{q}_{L-1}$\;
    $\mat{q}_L\gets \mat{q}_{L-1}-\frac{\epsilon}{2}\nabla U(\mat{x}_L)$\;
    ($\hat{\mat{x}},\hat{\mat{q}}$)=($\mat{x}_L,\mat{q}_L$)\;
    M-H step: $u\sim\mathrm{Uniform[0,1]}$\;
    $\rho = e^{-H(\hat{\mat{x}},\hat{\mat{q}})+H(\mat{x}^{(t)},\mat{q}^{(t)})}$\;
    \eIf{$u<\mathrm{min}(1,\rho)$}{$(\mat{x}^{(t+1)},\mat{q}^{(t+1)})=(\hat{\mat{x}},\hat{\mat{q}})$}{$(\mat{x}^{(t+1)},\mat{q}^{(t+1)})=(\mat{x}^{(t)},\mat{q}^{(t)})$}
    }
    \KwResults{$\{\mat{x}^{(1)},\mat{x}^{(2)},\cdots\}$}
 \caption{Hamiltonian Monte Carlo}
 \label{alg:hmc}
\end{algorithm}

\section{Hamiltonian Monte Carlo}\label{sec:HMC}
In Bayesian inference, we want to sample the parameters $\mat{x}$ from the posterior distribution $p(\mat{x}|{\cal D})$ given the dataset ${\cal D}$. Instead of directly sampling $\mat{x}$, HMC introduces an extra momentum variable $\mat{q}$, and samples in the joint space of $(\mat{x},\mat{q})$. By defining a potential function $U(\mat{x})=-\mathrm{log}\ p(\mat{x}|{\cal D})$ and a kinetic energy $K(\mat{q})=\frac{1}{2}\mat{q}^T\mat{M}_0^{-1}\mat{q}$ where $\mat{M}_0$ is a time-independent positive-definite mass matrix, HMC~\citep{Duane:1987de,neal2011mcmc} samples from the joint density of $(\mat{x},\mat{q})$ by simulating the following Hamiltonian dynamics:
\begin{equation}\label{eq:hd}
    d   \begin{pmatrix}
   \mat{x}\\\mat{q}
   \end{pmatrix}
   =dt
   \begin{pmatrix}
   \mat{M}_0^{-1}\mat{q}\\-\nabla U(\mat{x})
   \end{pmatrix}
\end{equation}
The resulting steady-state distribution is \begin{equation}
    \pi(\mat{x},\mat{q})\propto {\rm exp}\left( -\beta U\left( \mat{x} \right)-\beta K\left( \mat{q} \right)\right).
\end{equation}
where $\beta=1/k_BT$~\footnote{In statistical physics, $k_B$ is the Boltzmann constant, $T$ is the temperature.} and is set as 1 in standard HMC. Because $\mat{x}$ and $\mat{q}$ are independent with each other, one can easily marginalize the joint density over $\mat{q}$ to obtain the invariant distribution in the parameter space $\pi(\mat{x})\propto {\rm exp}(- U(\mat{x}))=p(\mat{x}|{\cal D})$. %This justifies the validity of HMC to sample from posterior distributions.

The algorithm flow of a standard HMC is summarized in Alg.~\ref{alg:hmc}. The HMC sampler switches between two steps: 1) travel on a constant energy surface according to Hamiltonian dynamics in Eq.~(\ref{eq:hd}) with step size $\epsilon$ and number of simulation steps $L$, and 2) maintain state $\mat{x}$ but resample momentum $\mat{q}$ to transit to another energy surface. In order to guarantee volume preservation and accuracy of the simulations~\citep{neal2011mcmc}, the ``leapfrog" integrator is commonly used in HMC. In practice, one needs to choose a step size $\epsilon$ to discretize the continuous time $t$, and the number of simulation steps $L$ to decide how many steps the dynamics runs before resampling the momentum.

However, the performance of a standard HMC can degrade due to the following limitations: 

\begin{itemize}
    \item Ill-conditioned distribution: the isotropic mass in HMC assumes the target distribution has the same scale in all directions, so HMC is poor at exploring ill-conditioned distributions, e.g., a Gaussian distribution with a covariance matrix whose eigenvalues have a wide spread.
    \item Multimodal distribution:
    although the momentum variable can help the particle reach higher energy levels than MCMC, it is still hard for HMC to explore multimodal distributions because of the fixed temperature ($k_BT=1$).
    \item Stochastic implementation: standard HMC involves full gradient computations over the whole data set. This can be very expensive when dealing with big-data problems. 
    \item Non-smooth distribution: HMC needs computing gradients of the energy function, therefore it can behave badly or even fail for non-smooth (discontinuous or spiky) functions.
\end{itemize}

It is interesting to note that some physical principles have been employed to overcome some aforementioned limitations of HMC. Specifically, magnetic HMC~\citep{tripuraneni2017magnetic} can efficiently explore multimodal distributions by introducing magnetic fields; Stochastic gradient HMC improves the efficiency of handling large data set by simulating the second-order Langevin dynamics~\citep{chen2014stochastic}; The relativistic HMC~\citep{lu2016relativistic} achieves a better convergence rate due to the analogy between the speed of light and gradient clipping; The reflection and refraction HMC~\citep{NIPS2015_5801} leverages optics theory to efficiently sample from discontinuous distributions; The wormhole HMC~\citep{lan2014wormhole}, motivated by Einstein's general relativity, builds shortcuts between two modes; Also inspired by general relativity, the Riemannian HMC~\citep{girolami2011riemann} can adapt a mass matrix pre-conditioner according to the geometry of a Riemannian manifold, which can better handle ill-conditioned distributions; Randomized HMC~\citep{Bou_Rabee_2017}, a special case of our work, models the lifetime of a path as an exponential distribution which is a common assumption in thermodynamics. Table \ref{tab:hmc_variant} has summarized some representative HMC algorithms and their corresponding physical models. 

Inspired by quantum mechanics, this work constructs a stochastic process with a time-varying mass matrix, aiming to sample efficiently from a possibly spiky or multi-modal distribution. Our method can be combined with other methods due to its efficiency (nearly no extra cost) and flexibility.

\begin{table}[]
    \centering
    \caption{Summary of various HMC methods.} \small
    \begin{tabular}{|l|l|l|}\hline
        Challenges to address & HMC variants & Physic theory \\\thickhline
        Ill-conditioned distribution& RMHMC~\citep{girolami2011riemann} & General relativity \\\hline
        \multirow{5}{*}{Multimodal dist.} & Magnetic HMC~\citep{tripuraneni2017magnetic} & Electromagnetism \\\cline{2-3}
        & Wormhole HMC~\citep{lan2014wormhole} & General relativity \\\cline{2-3}
        & Tempered HMC~\citep{graham2017continuously} & Thermodynamics 
        \\\cline{2-3}
        & RHMC~\citep{Bou_Rabee_2017} & Thermodynamics
        \\\cline{2-3}
        & Fractional HMC~\citep{ye2018stochastic} & L{\'e}vy Process
        \\\hline
        \multirow{3}{*}{Large training data set}
        & Stochastic Gradient HMC~\citep{chen2014stochastic} & Langevin dynamics \\\cline{2-3}
        & Thermostat HMC~\citep{ding2014bayesian} & Thermodynamics \\\cline{2-3}
        & Relativistic HMC~\citep{lu2016relativistic} & Special relativity \\\hline
        Discontinuous dist.& Optics HMC~\citep{NIPS2015_5801} & Optics\\\hline
        {\bf Spiky \& multimodal dist.}& {\bf Quantum-inspired HMC (this work) }& {\bf Quantum mechanics}\\\hline
    \end{tabular} \normalsize
    \label{tab:hmc_variant}
\end{table}

\section{Quantum-Inspired Hamiltonian Monte Carlo}\label{sec:QHMC}
In this section, we propose the physical model and numerical implementation of a quantum-inspired Hamiltonian Monte Carlo (QHMC). In Section \ref{sec:qhmc-steady} and \ref{sec:time-average} we prove that its resulting steady-state distribution and time-averaged distribution are indeed the targeted posterior distribution $p(\mat{x}| {\cal D})$. In Section \ref{sec:discuss}, we discuss a few possible extensions of QHMC.

\subsection{QHMC Model and Algorithm}

Different from HMC that simulates the Hamiltonian dynamics of a constant mass $\mat{M}_0$, our QHMC allows $\mat{M}(t)$ to be time-varying and random. Specifically, we construct a stochastic process and let each realization be $\mat{M}(t)$. 
\begin{comment}:
\begin{equation}\label{eq:m-stochastic}
    \mat{M}(t)=m(t)\mat{I},\ m(t)=10^{\omega(t)}, \;{\text{with}}\; \ \omega(t)\sim{\cal N}(\mu_m,\sigma_m^2).
\end{equation}
\end{comment}
In general at each time $t$, $\mat{M}(t)$ is sampled independently from a (time-independent) distribution $P_{\mat{M}}(\mat{M})$. In other words, $\{\mat{M}(t_1),\mat{M}(t_2),\cdots,\mat{M}(t_n)\}$ are i.i.d. positive definite matrices. Section \ref{sec:mass_adapt} will provide some heuristics on how to choose $P_\mat{M}(\mat{M})$ properly. After obtaining a realization of time-varying mass $\mat{M}(t)$, we simulate the following dynamical system:
\begin{equation}\label{eq:hmc_mass}
    d   \begin{pmatrix}
   \mat{x}\\\mat{q}
   \end{pmatrix}
   =dt
   \begin{pmatrix}
   \mat{M}(t)^{-1}\mat{q}\\-\nabla U(\mat{x})
   \end{pmatrix}
\end{equation}
where we assume that $U(\mat{x})$ a differentiable potential energy function with a well-defined gradient $\nabla U(\mat{x})$ everywhere, except for some points with zero measure~\footnote{A set of points on the $x$-axis is said to have measure zero if the sum of the lengths of intervals enclosing all the points can be made arbitrarily small. The extension to higher dimensions is straightforward.}. The discretized version of Eq.~(\ref{eq:hmc_mass}) is
\begin{equation}\label{eq:qhmc_update}
\left\{
\begin{aligned}
&\mat{q}_{i+\frac{1}{2}}\gets\ \mat{q}_i -\frac{\epsilon}{2}\nabla U(\mat{x}_i).\\
&\mat{x}_{i+1}\gets\ \mat{x}_i+\epsilon \mat{M}_t^{-1}\mat{q}_{i+\frac{1}{2}}\quad \\
&\mat{q}_{i+1}\gets\ \mat{q}_{i+\frac{1}{2}} -\frac{\epsilon}{2}\nabla U(\mat{x}_{i+1}).\\
\end{aligned}
\right.
\end{equation}
Where $t$ denotes the index of paths, $\mat{M}_t$ is the mass matrix used for the $t$-th path and is sampled from $P_\mat{M}(\mat{M})$ at the beginning of the path; $i=1,2,\cdots,L$ denotes the index of steps in each path. The resulting numerical algorithm flow is shown in Alg.~\ref{alg:qhmc}~\footnote{Some special yet useful cases of QHMC, including S-QHMC/D-QHMC/M-QHMC, will be discussed in Section \ref{sec:discuss}}. The implementation of QHMC is rather simple: compared with HMC, we only need an extra step of resampling the mass matrix. Note that HMC can be regarded as a special case of QHMC: QHMC becomes a standard HMC when $P_\mat{M}(\mat{M})$ is the Dirac delta function $\delta(\mat{M}-\mat{M}_0)$.

{\bf Physical Intuition:} In standard HMC~\citep{Duane:1987de,neal2011mcmc}  described by Eq.~(\ref{eq:hd}), the mass matrix $\mat{M}$ is usually chosen as diagonal such that $\mat{M}=m\mat{I}$ and further the scalar mass $m$ is commonly set as 1. Such a choice corresponds to the classical physics where the mass is scalar and deterministic. In the quantum physics, however, the energy and time obeys the following uncertainty relationship:
\begin{equation}\label{eq:energy-time}
    \Delta E\Delta t\approx \hbar.
\end{equation}
Here $\Delta E$ is the uncertainty of the static energy for a particle, and $\Delta t$ is seen as length of time the particle can survive (lifetime), and $\hbar\sim 1.05\times 10^{-34}$ is the Planck constant. Moreover, based on the well-known mass-energy relation discovered by Albert Einstein
\begin{equation}\label{eq:energy-mass}
    E=mc^2,
\end{equation}
we know that the mass of a particle is proportional to its static energy. Combining Eq. (\ref{eq:energy-time}) and (\ref{eq:energy-mass}), we conclude $\Delta m\approx \hbar/(c^2\Delta t)$, indicating that a quantum particle can have a random mass obeying the distribution $P_m(m)$ (more generally $P_\mat{M}(\mat{M})$), or equivalently, each realization of the stochastic process can produce a time-varying mass $\mat{M}=\mat{M}(t)$~\footnote{A classical particle differs from a quantum particle in many aspects. For instance, a classical particle has an infinite lifetime, deterministic position and momentum, and continuous energy; in contrast, a quantum particle has a finite lifetime, uncertain position and momentum, and discrete energy levels (for bound states). In this paper we only focus on the nature of mass uncertainty.}. In quantum mechanics, the mass distribution $P_\mat{M}(\mat{M})$ is only dependent on the types of the quantum particles, therefore we assume that $P_\mat{M}(\mat{M})$ is independent of $\mat{x}$ and $\mat{q}$ in this manuscript. Without this assumption, a wrong distribution may be produced, which will be shown in Section \ref{sec:time-average}.

To avoid possible confusions, we would like to state explicitly that the dynamics described by Eq.~(\ref{eq:hmc_mass}) is not a completely quantum model because both $\mat{x}$ and $\mat{q}$ are deterministic once an initial condition and a specific realization of $\mat{M}(t)$ are given. Indeed  a classical computer will probably not solve a quantum system with a lower computational cost than its classical counterpart.

\begin{algorithm}[t]
\SetAlgoLined
\KwResult{starting point $\mat{x}_0$, step size $\epsilon$, simulation steps $L$, and mass distribution $P_\mat{M}(\mat{M})$ \\ (For D-QHMC and S-QHMC, $P_\mat{M}(\mat{M})$ requires parameters $\vec{\mu}_m$/ $\vec{\sigma}_m$ and $\mu_m$/$\sigma_m$).}
% initialization\;
 \For{$t=1,2,\cdots$}{
    Resample $\mat{M}_t\sim P_{\mat{M}}(\mat{M})$\;
    \quad [D-QHMC:  $\omega_k\sim\mathcal{N}(\mu_m^{(k)},\sigma_m^{(k)2}), m_{kk}=10^{\omega_k}, \mat{M}=\text{diag}(m_{11},\cdots,m_{dd})$]\;
    \quad [S-QHMC:  $\omega\sim\mathcal{N}(\mu_m,\sigma_m^2), m_{kk}=10^{\omega}, \mat{M}=m\mat{I}$]\;
    
    Resample $q\sim {\cal N}(0,\mat{M})$\;
    ($\mat{x}_0,\mat{q}_0$)=($\mat{x}^{(t)}, \mat{q}^{(t)}$)\;
    Simulate dynamics based on Eq. (2)\;
    $\mat{q}_0\gets \mat{q}_0-\frac{\epsilon}{2}\nabla U(\mat{x}_0)$\;
    \For{$i=1,\cdots,L-1$}{
    $\mat{x}_i\gets \mat{x}_{i-1}+\epsilon \mat{M}_t^{-1}\mat{q}_{i-1}$\;
    $\mat{q}_i\gets \mat{q}_{i-1}-\epsilon\nabla U(\mat{x}_i)$
    }
    $\mat{x}_L\gets \mat{x}_{L-1}+\epsilon \mat{M}_t^{-1}\mat{q}_{L-1}$\;
    $\mat{q}_L\gets \mat{q}_{L-1}-\frac{\epsilon}{2}\nabla U(\mat{x}_L)$\;
    ($\hat{\mat{x}},\hat{\mat{q}}$)=($\mat{x}_L,\mat{q}_L$)\;
    M-H step: $u\sim\mathrm{Uniform[0,1]}$\;
    $\rho = e^{-H(\hat{\mat{x}},\hat{\mat{q}})+H(\mat{x}^{(t)},\mat{q}^{(t)})}$\;
    \eIf{$u<\mathrm{min}(1,\rho)$}{$(\mat{x}^{(t+1)},\mat{q}^{(t+1)})=(\hat{\mat{x}},\hat{\mat{q}})$}{$(\mat{x}^{(t+1)},\mat{q}^{(t+1)})=(\mat{x}^{(t)},\mat{q}^{(t)})$}
    }
    \KwResults{ $\{\mat{x}^{(1)},\mat{x}^{(2)},\cdots\}$}
 \caption{Quantum-inspired Hamiltonian Monte Carlo (QHMC)}
 \label{alg:qhmc}
\end{algorithm}

\subsection{Steady-State Distribution of QHMC (Space Domain)}\label{sec:qhmc-steady}
Now we show that the continuous-time stochastic process in Eq. (\ref{eq:hmc_mass}) and the discrete version in Eq. (\ref{eq:qhmc_update}) and Alg.~\ref{alg:qhmc} can produce a correct steady distribution that describes the desired posterior density $p(\mat{x}|{\cal D}) \sim {\rm exp}(-U(\mat{x}))$.

\begin{theorem}
\label{thm:qhmc-sd}
Consider a continuous-time Hamiltonian dynamics with a deterministic time-varying positive-definite matrix $\mat{M}(t)$ in Eq. (\ref{eq:hmc_mass}). The time-dependent distribution $p(\mat{x}, \mat{q}, t)\propto {\rm exp}(-U(\mat{x})-\frac{1}{2}\mat{q}^T\mat{M}(t)^{-1}\mat{q})$ satisfies the Fokker-Planck equation of Eq. (\ref{eq:hmc_mass}). Furthermore, the marginal density $p_s(\mat{x})\propto{\rm exp}(-U(\mat{x}))$ is a unique steady-state distribution in the $\mat{x}$ space if momentum resampling steps $p(\mat{q})\propto{\rm exp}(-\frac{1}{2}\mat{q}^T\mat{M}(t)^{-1}\mat{q})$ are included.
\end{theorem}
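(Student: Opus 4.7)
The plan is to establish the claimed invariance by treating Eq.~(\ref{eq:hmc_mass}) as a purely deterministic flow and applying the continuity/Liouville equation, and then to lift invariance to uniqueness by invoking the standard ergodicity argument for HMC with momentum refreshment.

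For the invariance, since Eq.~(\ref{eq:hmc_mass}) has no diffusion term, the Fokker--Planck equation collapses to
$$ \partial_t p + \nabla_{\mat{x}}\!\cdot\!\bigl(\mat{M}(t)^{-1}\mat{q}\,p\bigr) - \nabla_{\mat{q}}\!\cdot\!\bigl(\nabla U(\mat{x})\,p\bigr) = 0. $$
The drift is divergence-free in $(\mat{x},\mat{q})$, so this reduces to $\partial_t p + \mat{M}(t)^{-1}\mat{q}\cdot\nabla_{\mat{x}}p - \nabla U(\mat{x})\cdot\nabla_{\mat{q}}p=0$. Plugging in the candidate $p\propto \exp\bigl(-U(\mat{x})-\tfrac{1}{2}\mat{q}^T\mat{M}(t)^{-1}\mat{q}\bigr)$, I would use $\nabla_{\mat{x}}p = -\nabla U(\mat{x})\,p$ and $\nabla_{\mat{q}}p = -\mat{M}(t)^{-1}\mat{q}\,p$, so that the two transport terms cancel by symmetry of $\mat{M}(t)^{-1}$. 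What remains is $\partial_t p$, which is genuinely nonzero whenever $\mat{M}(t)$ is varying; this is the crux of the argument and the place where the momentum-resampling hypothesis has to be used. The cleanest way to close it is to decompose the process into (i)~intervals on which $\mat{M}$ is held fixed, where Eq.~(\ref{eq:hmc_mass}) is literally the standard HMC flow and the Liouville computation above applies verbatim with $\partial_t p=0$, and (ii)~instantaneous momentum-resampling events $\mat{q}\sim\mathcal{N}(0,\mat{M}(t))$ that leave the $\mat{x}$-marginal untouched while replacing $\mat{q}$ by the Gaussian compatible with the updated mass. Concatenating (i) and (ii) forces the joint law to always take the claimed form, so marginalizing out $\mat{q}$ gives $p_s(\mat{x})\propto\exp(-U(\mat{x}))$.

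For uniqueness in $\mat{x}$-space, I would observe that repeated Gaussian momentum refreshment with full support, together with an a.s.\ positive-definite $\mat{M}(t)$, makes the induced transition kernel on $\mat{x}$ irreducible and aperiodic on the support of $p_s$ under mild regularity on $U$, so $p_s$ is the unique stationary distribution. The main technical obstacle I anticipate is controlling the $\partial_t$ contribution coming from the time dependence of $\mat{M}(t)^{-1}$ in the continuity equation; the piecewise-constant viewpoint above sidesteps the need to differentiate $\mat{M}(t)^{-1}$ in any distributional sense, and reduces the whole argument to the classical HMC invariance applied leg by leg.
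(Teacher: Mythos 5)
Your argument is correct and reaches the paper's conclusion by a genuinely different route. You verify invariance directly in the original $(\mat{x},\mat{q})$ coordinates: the drift of Eq.~(\ref{eq:hmc_mass}) is divergence-free, the two transport terms cancel by symmetry of $\mat{M}(t)^{-1}$, and the leftover $\partial_t p$ term is disposed of by splitting the process into constant-mass legs interleaved with momentum refreshments that swap in the Gaussian for the new mass without touching the $\mat{x}$-marginal. The paper instead whitens the momentum via $\mat{q}'=\mat{M}(t)^{-1/2}\mat{q}$ so that the dynamics takes the symmetric ``time-modifier'' form of Lemma~\ref{lemma:time-dependent-hmc}, where the candidate density $\exp(-U(\mat{x}')-\mat{q}'^T\mat{q}'/2)$ is time-independent, and then transforms back with the Jacobian $({\rm det}\,\mat{M}(t))^{-1/2}$ before marginalizing over $\mat{q}$; your piecewise decomposition is essentially the content of the paper's Corollary~\ref{cor:qhmc_sd}, promoted to the main argument. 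What your route buys is that you never differentiate a time-dependent change of variables --- a step the paper performs only formally, since $d\mat{q}'/dt$ would pick up a $\tfrac{d}{dt}\bigl(\mat{M}(t)^{-1/2}\bigr)\mat{q}$ term unless $\mat{M}$ is locally constant --- so your version is, if anything, the more honest one for the algorithm actually implemented, where $\mat{M}$ is resampled once per path and held fixed. What the paper's route buys is the interpretation of the mass as a time rescaling, which it reuses in Theorem~\ref{thm:time-average} and in relating QHMC to randomized HMC. Two small points to tighten: your uniqueness sketch should also record positive recurrence (e.g.\ $U(\mat{x})\to\infty$ as $\|\mat{x}\|\to\infty$), which the paper lists alongside irreducibility and aperiodicity; and you should state explicitly that your argument establishes the theorem for piecewise-constant $\mat{M}(t)$ --- the only case the algorithm uses --- rather than for an arbitrarily varying $\mat{M}(t)$.
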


Theorem \ref{thm:qhmc-sd} is the key result of this work, and we would like to prove it from two perspectives. In Section \ref{physics-proof}, we treat $\mat{M}(t)$ as a deterministic time scale control parameter. This view admits a nice physical interpretation of a re-scaled Hamiltonian dynamics. Furthermore, the relation between QHMC and randomzied HMC~\citep{Bou_Rabee_2017} is natural to see in this setting. In Section~\ref{math-proof}, we treat $\mat{M}{(t)}$ similarly with state variables $\mat{x}$ and $\mat{q}$. This perspective can facilitate the proof with the Bayes rule.

\subsubsection{Proof of Theorem \ref{thm:qhmc-sd} from the Perspective of Time Scaling}
\label{physics-proof}

In Lemma \ref{lemma:time-dependent-hmc}, we first prove that any time-modifier $\mat{A}(t)$ (the meaning of which will be clear later) in Eq. (\ref{eq:time-depend-hmc}) preserves the correct steady distribution.

\begin{lemma}
\label{lemma:time-dependent-hmc}
Consider a time-dependent continuous Markov process
\begin{equation}\label{eq:time-depend-hmc}
d   \begin{pmatrix}
   \mat{x}\\\mat{q}
   \end{pmatrix}
   =dt
   \begin{pmatrix}
   \mat{A}(t)&0 \\
   0 & \mat{A}(t)
   \end{pmatrix}
   \begin{pmatrix}
   \mat{q}\\-\nabla U(\mat{x})
   \end{pmatrix}
\end{equation}
where $\mat{A}(t)$ is a deterministic time-varying symmetric matrix. For general symmetric $\mat{A}(t)$, we have a steady distribution for Eq. (\ref{eq:time-depend-hmc}) as $p_s(\mat{x},\mat{q})\propto {\rm exp}(-U(\mat{x})-\mat{q}^T\mat{q}/2)$. Further if a jump process exists, i.e. the momentum is resampled every time interval $t_0$ as $p(\mat{q})\propto {\rm exp}(-\mat{q}^T\mat{q}/2)$, then the steady distribution is unique.
\end{lemma}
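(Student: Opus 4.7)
The plan is to treat Eq.~(\ref{eq:time-depend-hmc}) as a deterministic (non-autonomous) ODE and analyze the evolution of its probability density via the corresponding Liouville equation. Because there is no diffusion term, the Fokker--Planck equation reduces to a pure continuity equation, and the drift vector field is divergence-free in $(\mat{x},\mat{q})$-space: the $\mat{x}$-component $\mat{A}(t)\mat{q}$ has no $\mat{x}$-dependence, and the $\mat{q}$-component $-\mat{A}(t)\nabla U(\mat{x})$ has no $\mat{q}$-dependence. Thus the density evolves as
$$\partial_t p \;=\; -\mat{A}(t)\mat{q}\cdot\nabla_{\mat{x}} p \;+\; \mat{A}(t)\nabla U(\mat{x})\cdot\nabla_{\mat{q}} p.$$

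Next I would substitute the candidate stationary density $p_s(\mat{x},\mat{q})\propto\exp(-U(\mat{x})-\mat{q}^T\mat{q}/2)$ into the right-hand side. Using $\nabla_{\mat{x}} p_s = -\nabla U(\mat{x})\,p_s$ and $\nabla_{\mat{q}} p_s = -\mat{q}\,p_s$, the expression collapses to $p_s\,\bigl[\mat{q}^T\mat{A}(t)\nabla U(\mat{x}) - (\nabla U(\mat{x}))^T\mat{A}(t)\mat{q}\bigr]$. Since $\mat{A}(t)$ is symmetric at every $t$, the two scalars inside the bracket are equal, so the bracket vanishes identically and $\partial_t p_s\equiv 0$ for arbitrary symmetric, possibly time-varying $\mat{A}(t)$. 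This establishes the first claim.

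For the uniqueness part I would invoke the momentum-resampling step. The deterministic flow by itself cannot be ergodic, since it conserves any function that is constant along its trajectories, so many densities are invariant. Resampling $\mat{q}\sim\mathcal{N}(0,\mat{I})$ every $t_0$ breaks this degeneracy: after each resampling the conditional law of $\mat{q}$ given $\mat{x}$ is forced to be $\mathcal{N}(0,\mat{I})$ independent of $\mat{x}$. Any invariant measure of the hybrid (flow + refreshment) Markov kernel must therefore have this product structure at the resampling times, so it is determined by its $\mat{x}$-marginal. Pushing the product form through one window of deterministic evolution and requiring a fixed point on the $\mat{x}$-marginal, together with the invariance of $p_s$ already established, pins down the $\mat{x}$-marginal to $\exp(-U(\mat{x}))$ up to normalization.

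The main obstacle will be this uniqueness step. The Liouville calculation is essentially a one-line identity from symmetry of $\mat{A}(t)$ and is routine. The subtle point is justifying that the flow-plus-refreshment kernel actually has a unique invariant law: a fully rigorous argument requires a reachability/irreducibility hypothesis on $U$ and $\mat{A}(t)$ (so that the random-momentum kicks can connect any pair of regions through the Hamiltonian-type flow). I would either import standard Harris-recurrence arguments previously used to establish ergodicity of HMC with full momentum refreshment, or state mild explicit conditions on $U$ and $\mat{A}(t)$ under which the claim holds and relegate the measure-theoretic details to a remark.
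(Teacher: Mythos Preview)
Your proposal is correct and follows essentially the same approach as the paper: substitute $p_s$ into the Liouville/Fokker--Planck equation of Eq.~(\ref{eq:time-depend-hmc}), use $\nabla_{\mat{x}}p_s=-p_s\nabla U$ and $\nabla_{\mat{q}}p_s=-p_s\mat{q}$, and observe that the remaining terms cancel so that $\partial_t p_s=0$; for uniqueness both you and the paper appeal to ergodicity of the flow-plus-refreshment process (the paper cites irreducibility/aperiodicity/positive recurrence directly, while you phrase it via Harris recurrence and the forced product structure at refreshment times). One minor point worth noting: you make explicit that the cancellation $\mat{q}^T\mat{A}(t)\nabla U=(\nabla U)^T\mat{A}(t)\mat{q}$ hinges on the symmetry of $\mat{A}(t)$, which the paper's proof uses implicitly but does not spell out.
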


\begin{proof}
The evolution of probability density $p(\mat{x}, \mat{q})$ for the particles $(\mat{x}, \mat{q})$ in Eq. (\ref{eq:time-depend-hmc}) can be described by the following Fokker-Planck equation~\footnote{The Fokker-Planck equation is a generic tool to analyze the time evolution of the density function for the unknown state variables in a stochastic differential equations (SDE). Although Eq.~(\ref{eq:time-depend-hmc}) is a deterministic process instead of a stochastic one, it falls into the framework of SDE as a special case with a zero diffusion term. In statistical mechanics, the special deterministic case is referred as the Liouville's theorem~\citep{ma14ller2013basics}.}:
\begin{equation}
    \partial_tp+\frac{d\mat{x}}{dt}\cdot\nabla_\mat{x} p+\frac{d\mat{q}}{dt}\cdot\nabla_\mat{q} p=0.
\end{equation}
We consider the specific density function $p_s(\mat{x},\mat{q})\propto {\rm exp}(-U(\mat{x})-\mat{q}^T\mat{q}/2)$. By setting $d\mat{x}/dt=\mat{A}(t)\mat{q}$, $d\mat{q}/dt=-\mat{A}(t)\nabla U(\mat{x})$ according to Eq. (\ref{eq:time-depend-hmc}) and noticing that $\nabla_\mat{x} p_s=-p_s\nabla U(\mat{x})$ and $\nabla_\mat{q} p_s = -p_s\mat{q}$, we have
\begin{equation}
    \partial_t p_s = 0.
\end{equation}
Therefore $p_s(\mat{x},\mat{q})\propto {\rm exp}(-U(\mat{x})-\mat{q}^T\mat{q}/2)$ is a stationary distribution of the time-dependent process described by Eq. (\ref{eq:time-depend-hmc}). If the momentum is resampled for a certain time interval, the steady distribution will not change and the Markov process will be guaranteed as ergodic, because the sufficient conditions for ergodicity~\citep{borovkov1998ergodicity} are (1) irreducibility (2) aperiodicity and (3) positive recurrence, the first two of which are guaranteed by the resampling step as diffusion noise. Condition (3) is holds when $U(\mat{x})\to\infty$ for $|\mat{x}|\to\infty$ which is a reasonable assumption for energy functions. Once the Markov process is ergodic, the steady distribution should be unique~\citep{borovkov1998ergodicity}.
\end{proof}

\begin{comment}
\begin{corollary}\label{cor:steady-state-a(t)}
When $\mat{A}$ does not depend on $\mat{x}$ or $\mat{q}$ such that $\mat{A}(\mat{x},\mat{q},t)=\mat{A}(t)$, Eq. (\ref{eq:time-depend-hmc}) has the correct steady distribution $p_s\propto{\rm exp}(-U(\mat{x})-\mat{q}^T\mat{q}/2)$.
\end{corollary}
\end{comment}

The physical interpretation of $\mat{A}(t)$ can be understood by looking at a special case $\mat{A}(t)=a(t)\mat{I}$. In this case $a(t)$ can be absorbed into $dt$ as $dt'=a(t)dt$, and $a(t)$ increases or shrink the time scale (hence the name ``time-modifier''). The steady distribution is by definition independent of time, thus $a(t)$ does not change the steady distribution. Note that Eq. (\ref{eq:time-depend-hmc}) becomes a standard time-independent Hamiltonian dynamics with $\mat{M}=\mat{I}$ when $\mat{A}(t)=\mat{I}$.

Now we are ready to prove theorem \ref{thm:qhmc-sd}. We show that employing a (deterministic) time-varying mass matrix $\mat{M}(t)$ is equivalent to employing a (deterministic) time-modifier $\mat{A}(t)$.

\begin{proof}
We change variables from $(\mat{x},\mat{q})$ to $(\mat{x}',\mat{q}')$ with the transformation:
\begin{equation}\label{eq:change-variable-2}
\left\{
             \begin{array}{ll}
             \mat{x}'=&\mat{x}\\
             \mat{q}'=&\mat{M}(t)^{-1/2}\mat{q}
             \end{array}
\right.
\end{equation}
where $\mat{M}^{-1/2}$ is defined as $\mat{V}\mat{D}^{-1/2}\mat{V}^T$ with the diagonalization of $\mat{M}$ as $\mat{M}=\mat{V}\mat{D}\mat{V}^T$. After changing variables, Eq. (\ref{eq:hmc_mass}) is transformed to
\begin{equation}
    d   \begin{pmatrix}
   \mat{x}'\\\mat{q}'
   \end{pmatrix}
   =dt
   \begin{pmatrix}
   \mat{M}(t)^{-1/2}&0 \\
   0 & \mat{M}(t)^{-1/2}
   \end{pmatrix}
   \begin{pmatrix}
   \mat{q}'\\-\nabla U(\mat{x}')
   \end{pmatrix}
\end{equation}
Because $\mat{M}(t)^{-1/2}$ is a symmetric matrix, according to Lemma \ref{lemma:time-dependent-hmc}, we have a unique steady distribution for $(\mat{x}',\mat{q}')$ such that $p_s'(\mat{x}',\mat{q}')\propto {\rm exp}(-U(\mat{x}')+\mat{q}'^T\mat{q}'/2)$. After transforming $(\mat{x}',\mat{q}')$ back to the $(\mat{x},\mat{q})$ space, the distribution $p_s(\mat{x},\mat{q},t)\propto ({\rm det}\mat{M}(t))^{-\frac{1}{2}} {\rm exp}(-U(\mat{x})+\mat{q}^T\mat{M}(t)^{-1}\mat{q}/2)$ is dependent on $t$, where ${\rm det}(\mat{M}(t))$ arises from the Jacobian of variable transformation:
\begin{equation}
\begin{aligned}
    p_s(\mat{x},\mat{q},t)d\mat{x}d\mat{q}&=p'_s(\mat{x}',\mat{q}',t)d\mat{x}'d\mat{q}'\\
    p_s(\mat{x},\mat{q},t)=p_s'(\mat{x}',\mat{q}',t){\rm det}
    \begin{pmatrix}
    \frac{\partial\mat{x}'}{\partial\mat{x}} & \frac{\partial\mat{x}'}{\partial\mat{q}}\\
    \frac{\partial\mat{q}'}{\partial\mat{x}} & \frac{\partial\mat{q}'}{\partial\mat{q}'}
    \end{pmatrix}
    &\propto ({\rm det}\mat{M}(t))^{-\frac{1}{2}}{\rm exp}(-U(\mat{x})+\mat{q}^T\mat{M}(t)^{-1}\mat{q}/2).
\end{aligned}
\end{equation}
After marginalization over the momentum $\mat{q}$ we obtain the marginal probability density $p_s(\mat{x})\propto {\rm exp}(-U(\mat{x}))$~\footnote{Here ${\rm det}(\mat{M}(t))$ is a scaling factor independent of $\mat{x}$ and $\mat{q}$, and it vanishes after normalization.}, which is again independent of $t$ hence a steady distribution.
\end{proof}

The intuition of Theorem \ref{thm:qhmc-sd} is as follows. The time-varying matrix $\mat{M}(t)$ has an effect of increasing or shrinking the time scale, but it does not change the steady distribution in the $\mat{x}$ space. As a corollary of Theorem \ref{thm:qhmc-sd}, we further show that the steady distribution of our discretized QHMC algorithm indeed is the desired posterior distribution.

\begin{corollary}\label{cor:qhmc_sd}
Consider a piecewise $\mat{M}(t)$. The continuous time is divided into pieces as $\cdots< t_{n-1}<t_n<t_{n+1}<\cdots$ and on each time interval, $\mat{M}$ is a constant matrix i.e. $\mat{M}(t)=\mat{M}_n$ if $t_n\leq t<t_{n+1}$. By sampling $\mat{M}_n\sim P_\mat{M}(\mat{M})$ as done in Alg.~\ref{alg:qhmc}, we have the correct teady distribution $p_s(\mat{x})\propto{\rm exp}(-U(\mat{x}))$ for each interval.
\end{corollary}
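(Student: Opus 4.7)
The plan is to reduce Corollary \ref{cor:qhmc_sd} directly to Theorem \ref{thm:qhmc-sd}. Observe that a piecewise-constant schedule $\mat{M}(t)=\mat{M}_n$ on $[t_n,t_{n+1})$ is itself a particular (right-continuous) deterministic time-varying positive-definite matrix, so the hypotheses of Theorem \ref{thm:qhmc-sd} are satisfied on each interval. I will argue the result interval-by-interval and then verify that the random draw $\mat{M}_n\sim P_\mat{M}$ together with the accompanying momentum resampling glue the intervals without disturbing the $\mat{x}$-marginal.

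First I would fix an interval $[t_n,t_{n+1})$ and a realization $\mat{M}_n$. On this interval the dynamics of Eq.~(\ref{eq:hmc_mass}) collapses to standard fixed-mass Hamiltonian flow, and Theorem \ref{thm:qhmc-sd} produces the joint invariant density $p(\mat{x},\mat{q})\propto\exp\bigl(-U(\mat{x})-\tfrac{1}{2}\mat{q}^T\mat{M}_n^{-1}\mat{q}\bigr)$. Marginalizing over $\mat{q}$ yields $p_s(\mat{x})\propto\exp(-U(\mat{x}))$, which is crucially independent of $\mat{M}_n$. Next I would inspect the boundary at $t=t_{n+1}$: Algorithm \ref{alg:qhmc} samples a fresh $\mat{M}_{n+1}\sim P_\mat{M}$ and resamples $\mat{q}\sim\mathcal{N}(0,\mat{M}_{n+1})$ while leaving $\mat{x}$ untouched. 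Hence the $\mat{x}$-marginal survives the resampling verbatim, and after the refresh the conditional law of $\mat{q}$ matches exactly the momentum factor of the stationary joint density for the new interval, so if $\mat{x}(t_{n+1})\sim p_s$ then $(\mat{x},\mat{q})$ enters the next interval already in its (conditional-on-$\mat{M}_{n+1}$) invariant distribution.

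Finally I would take the expectation over $\mat{M}_n\sim P_\mat{M}$. Because the conclusion $p_s(\mat{x})\propto\exp(-U(\mat{x}))$ is a deterministic statement that holds for every realization, averaging over $P_\mat{M}$ leaves it unchanged; a straightforward induction on $n$ then propagates stationarity of the $\mat{x}$-marginal through the entire piecewise schedule. The only mild subtlety I anticipate is making the gluing argument fully rigorous across boundary times, but since resampling touches only $\mat{q}$ and $P_\mat{M}$ was assumed independent of $(\mat{x},\mat{q})$, this is essentially immediate rather than a genuine obstacle; the substantive work was already done in Theorem \ref{thm:qhmc-sd}.
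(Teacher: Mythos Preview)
Your proposal is correct and aligns with the paper's treatment: the paper states Corollary~\ref{cor:qhmc_sd} without an explicit proof, presenting it as an immediate consequence of Theorem~\ref{thm:qhmc-sd}, and your reduction to that theorem is precisely this. You actually supply more detail than the paper does---the interval-by-interval application, the observation that the $\mat{x}$-marginal is independent of $\mat{M}_n$, and the gluing argument at the resampling boundaries---none of which the paper spells out, but all of which are consistent with how the paper intends the corollary to follow.
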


If we assume $\mat{M}(t)=m(t)\mat{I}$ and interpret $\mat{A}(t)=\epsilon(t)\mat{I}$ as the continuous matrix version of step size $\epsilon$, then changing the mass is equivalent to changing the step size. From the proof of Theorem \ref{thm:qhmc-sd} we know $\mat{M}(t)\sim\mat{A}(t)^{-2}$ or $m\sim\epsilon^{-2}$. The equivalence $m\sim\epsilon^{-2}$ can be intuitively understood as the result of a ``scaling property" of  Eq.~(\ref{eq:hd}): given a trajectory $(\mat{x}(t),\mat{q}(t))$ for a particle with mass $m$, the rescaled trajectory $t\to \alpha t$, $\mat{q}\to \alpha\mat{q}, \mat{x}\to\mat{x}$ for a particle with mass $m\to \alpha^2 m$ is just the original trajectory but with a different time scale which also obeys the Hamiltonian dynamics Eq.~(\ref{eq:hd}).

Although QHMC with a scalar mass is equivalent to HMC with a randomized step~\citep{Bou_Rabee_2017,neal2011mcmc}, QHMC is a more general formulation because $\mat{M}$ can be any positive definite matrix other than proportional to identity. We elaborate the benefits of QHMC over randomized HMC in Section~\ref{sec:discuss} and in our numerical experiments.

\subsubsection{Proof of Theorem \ref{thm:qhmc-sd} via the Bayes Rule}\label{math-proof}
Alternatively, we can treat $\mat{M}$ as additional random variables just like $\mat{x}$ and $\mat{q}$. Then we can consider the joint distribution $(\mat{x},\mat{q},\mat{M})$ and use the Bayes rule to prove Theorem~\ref{thm:qhmc-sd}.

\begin{proof}
We consider the joint steady distribution $p_s(\mat{x},\mat{q},\mat{M})$. Here we have dropped the explicit dependence of $\mat{M}$ on $t$ because $\mat{M}(t)$ obeys the mass distribution $P_\mat{M}(\mat{M})$ for all $t$. Leveraging Bayesian theorem, we have
\begin{equation}
    p_s(\mat{x},\mat{q},\mat{M})=p_s(\mat{x},\mat{q}|\mat{M})P_\mat{M}(\mat{M})
\end{equation}
Recall that $p_s(\mat{x},\mat{q}|\mat{M})\propto {\rm exp}(-U(\mat{x})){\rm exp(-\frac{1}{2}\mat{p}^T\mat{M}^{-1}\mat{p}})$, it immediately implies
\begin{equation}
    p_s(\mat{x})=\int_{\mat{q}}\int_{\mat{M}}d\mat{q}\ d\mat{M}\ p_s(\mat{x},\mat{q},\mat{M})\propto {\rm exp}(-U(\mat{x}))
\end{equation}
which means the marginal steady distribution $p_s(\mat{x})$ is the correct posterior distribution.
\end{proof}

The proof relies on two facts: (1) $\mat{M}$ itself has a steady distribution in time (i.i.d. for QHMC); (2) $\mat{M}$ should not depend explicitly on $\mat{x}$, otherwise the samples from the simulated dynamics will produce the wrong posterior distribution, as we will show below.

\subsection{Time-Averaged Distribution (Time Domain)}\label{sec:time-average}

We still need to show that the time-averaged distribution of QHMC is our desired result. The reason is as follows: rather than simulating a set of particles simultaneously, HMC-type methods simulate one particle after another, and collect a set of samples at a sequence of time points. Therefore, the time-averaged distribution $p_t(\mat{x})$, rather than the space-averaged distribution (steady distribution $p_s(\mat{x})$), is our final obtained sample distribution. Here the time-averaged distribution is defined as:

\begin{equation}\label{eq:def_pt}
p_t(\mat{x})=
\left\{
             \begin{array}{lc}
             \underset{T\to\infty}{{\rm lim}}\frac{1}{T}\sum_{t=0}^T \mathbbm{1}(\mat{x}(t)=\mat{x})\quad &({\rm discrete})\\
             \underset{T\to\infty}{{\rm lim}}\frac{1}{T}\int_0^T \delta(\mat{x}(t)-\mat{x})dt\quad &({\rm continuous})\\
             \end{array}
\right.
\end{equation}

Where $\mathbbm{1}(\mat{x}=\mat{x})$ is the indicator function (1 if the argument in the bracket is true and 0 otherwise), and $\delta(\mat{x}(t)-\mat{x})$ is the Dirac delta function ($+\infty$ if the number in the bracket is 0 and 0 otherwise, plus a normalization criteria as $\int_{\mat{x}}\delta(\mat{x}(t)-\mat{x})d\mat{x}=1$)~\footnote{One can show that $p_t(\mat{x})$ is a probabilistic density function because $\sum_\mat{x}p_t(\mat{x})=1$ or $\int_\mat{x}p_t(\mat{x})d\mat{x}=1$.}. The physical meaning of $p_t(\mat{x})$ corresponds to drawing a histogram of samples obtained in the time sequence in HMC and $T$ is the number of simulation paths. In most HMC methods where the step size or mass is fixed, the equivalence between $p_s(\mat{x})$ and $p_t(\mat{x})$ can be justified with ergodicity theory from the mathematical perspective~\citep{gray1988probability,rosenthal1995convergence,bakry2008rate} or ensemble theory from the physical perspective~\citep{PhysRev.91.784,oliveira2007ergodic}. Also Eq.~(\ref{eq:hd}) has the steady distribution $p_s(\mat{x})=p(\mat{x}|{\cal D})$. As a result, both $p_t(\mat{x})$ and $p_s(\mat{x})$ can produce the correct posterior distribution $p(\mat{x}|{\cal D})$. However for QHMC, the step size or mass is effectively modified by $A(t)$ (c.f. Theorem \ref{thm:qhmc-sd} and Lemma \ref{lemma:time-dependent-hmc}), or more generally $A(\mat{x},\mat{q},t)$. As a result, it is non-trivial to obtain the equivalence between $p_t(\mat{x})$ and $p(\mat{x}|{\cal D})$ in QHMC.

{\bf Observations.} We first illustrate the above issue by considering the piecewise energy function
\begin{equation}\label{eq:asymmetric-well}
U(x)=
\left\{
             \begin{array}{lc}
             -x\quad &(x<0)\\
             3x\quad &(x\geq 0).\\
             \end{array}
\right.
\end{equation}
Because $U(x)$ has larger gradients in $x\geq 0$ than $x<0$, one may want to change the mass (and thus the step size) in the QHMC simulation. We consider two different schemes:
\begin{itemize}
    \item {\bf Explicit mass adaptation.} One may use a small mass $m=0.1$ (or equivalently a large step size) for $x<0$ and $m=1$ for $x>0$. However, this explicit adaptation can lead to a wrong time-averaged distribution. As shown in Fig.~\ref{fig:adaptation} (a), the region $x>0$ has more samples than required by the true posterior distribution. This is because a small step size results in more frequent proposals [the number of proposals per unit time is $1/(\epsilon L)$], and more proposals in the region $x>0$ ends up with more accepted samples than the ground truth~\footnote{In this example, one is able to produce the true posterior distribution with an explicit step size adaption as $\epsilon(x)\sim|\nabla U(x)|^{-1}$ by fixing the time interval. However, this method is generally less efficient than QHMC: QHMC makes a proposal every $L$ steps, whereas this explicit mass-adaption method does not have a proper upper bound for the number of steps. For instance, in very spiky regions ($|\nabla U(x)|$ very large), it takes $O(|\nabla U(x)|)$ steps to make a new proposal.}.
    
    \item {\bf Implicit mass adaption.} The mass is a random variable with a Bernoulli distribution: we have an equal probability to choose either $m=0.1$ or $m=1$. This choice is independent of $x$, and it produces the correct time-averaged distribution as shown in Fig.~\ref{fig:adaptation} (b). 
\end{itemize}

\begin{figure}
    \centering
    \begin{subfigure}[b]{0.475\textwidth}
            \centering
            \includegraphics[trim = 35mm 86mm 30mm 80mm, clip,width=\textwidth]{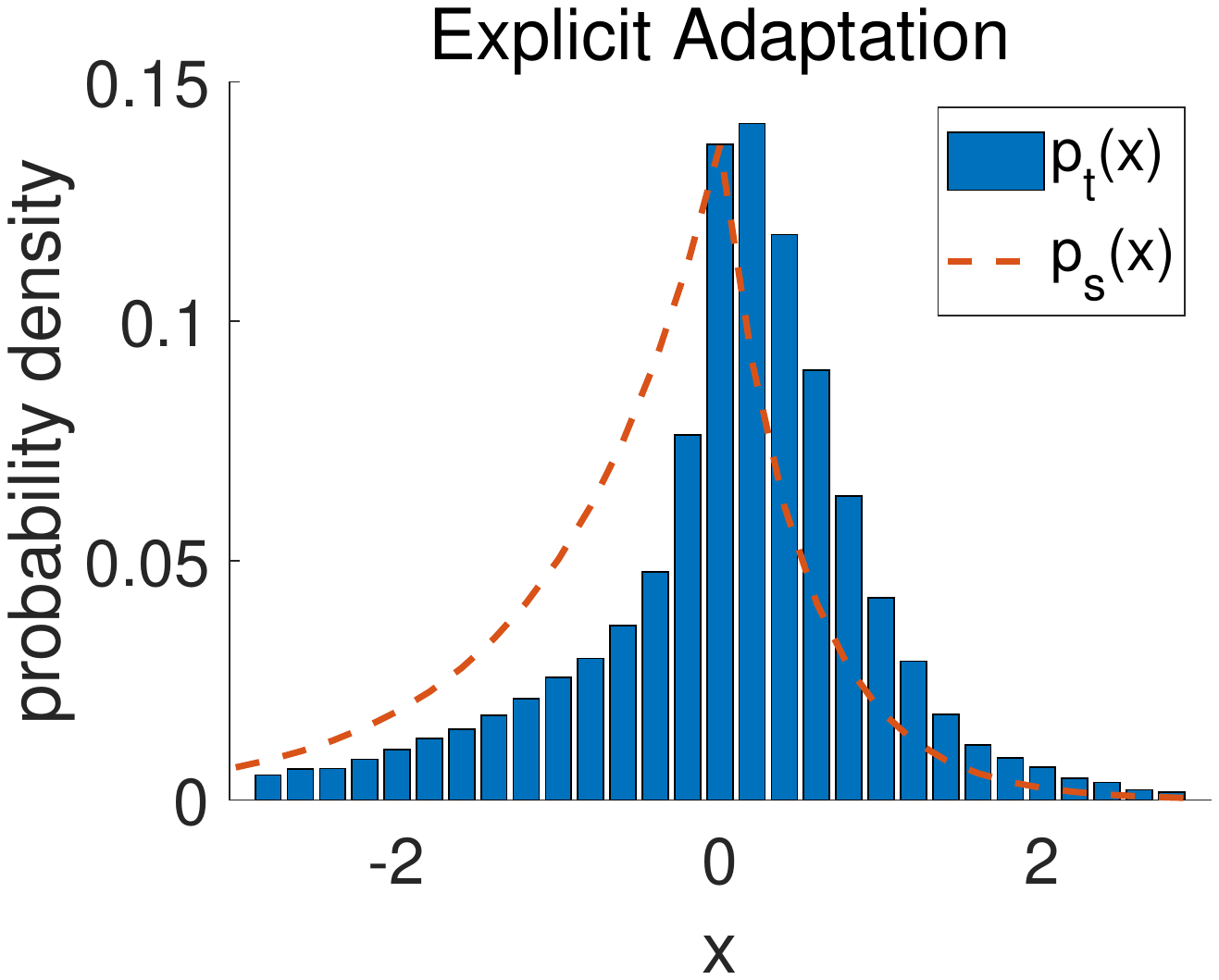}
            {{\small }}    
        \end{subfigure}
        \hfill
        \begin{subfigure}[b]{0.475\textwidth}  
            \centering 
            \includegraphics[trim = 35mm 86mm 30mm 80mm, clip,width=\textwidth]{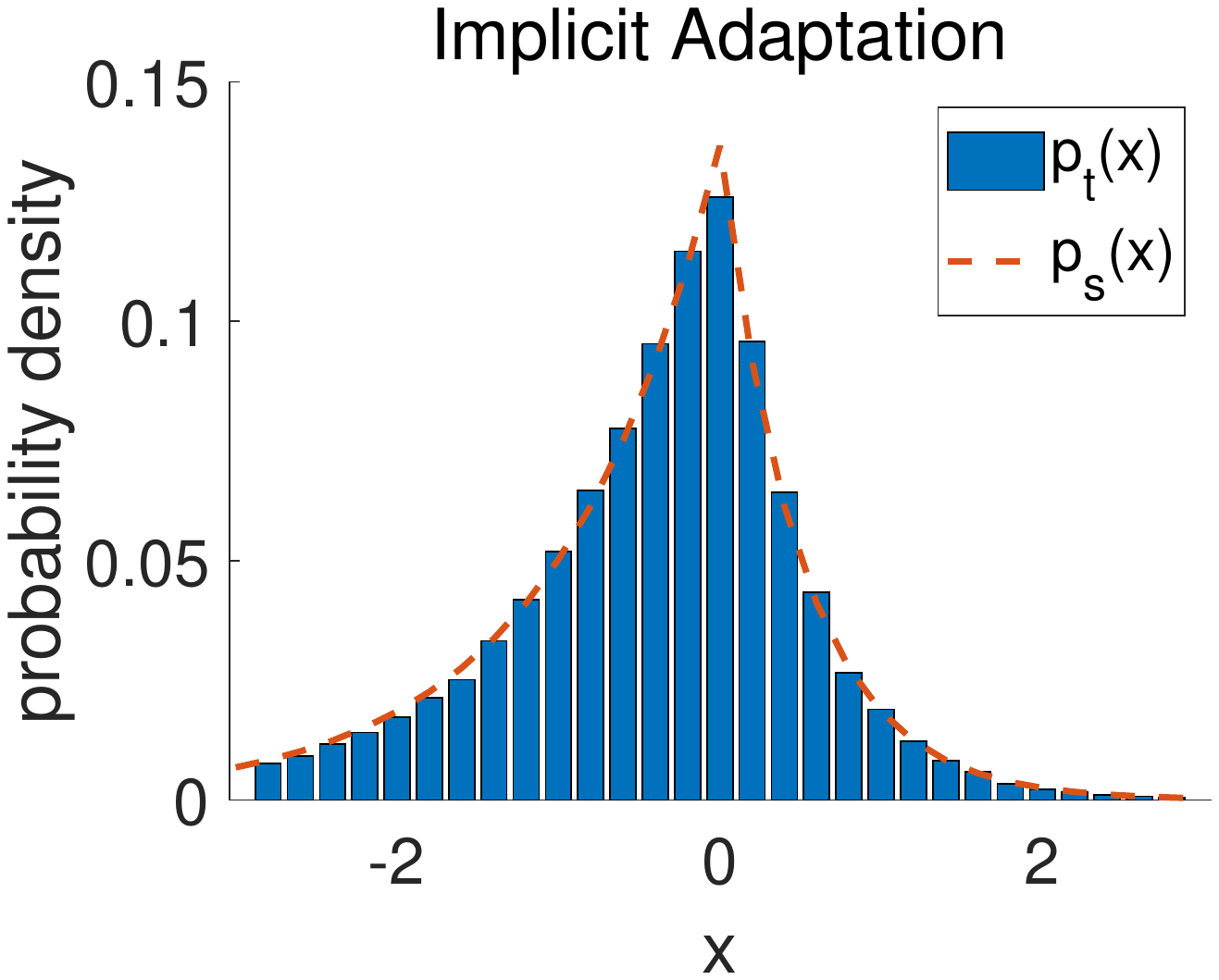}
            {{\small }} 
        \end{subfigure}
    \caption{(a) Explicit mass adaptation can lead to wrong time averaged distribution; (b) Implicit mass adaptation produces the correct time averaged distribution.}
    \label{fig:adaptation}
    \vspace{-10pt}
\end{figure}

Our QHMC method employs the implicit mass adaptation. The benefits of this implicit method will be theoretically justified in Section \ref{sec:mass_adapt}. Here we first make the aforementioned intuitions more rigorous by proposing Theorem \ref{thm:time-average}.

\begin{theorem}\label{thm:time-average}
Denote $\mat{z}=(\mat{x},\mat{q})$, and consider the trajectory $\mat{z}(t)$ described by the dynamics (\ref{eq:time-depend-hmc-2}):

\begin{equation}\label{eq:time-depend-hmc-2}
d   \begin{pmatrix}
   \mat{x}\\\mat{q}
   \end{pmatrix}
   =dt
   \begin{pmatrix}
   \mat{A}(\mat{z},t)&0 \\
   0 & \mat{A}(\mat{z},t)
   \end{pmatrix}
   \begin{pmatrix}
   \mat{q}\\-\nabla U(\mat{x})
   \end{pmatrix}
\end{equation}

The only difference between Eq.~(\ref{eq:time-depend-hmc-2}) and Eq.~(\ref{eq:time-depend-hmc}) is that $A(\mat{z},t)$ replaces $A(t)$. The time-averaged distribution of the particle can be defined as
\begin{equation}\label{eq:time-average}
    p_t(\mat{z})=\underset{T\to\infty}{{\rm lim}}\frac{1}{T}\int_0^T \delta(\mat{z}(t)-\mat{z})dt
\end{equation}
We restrict our discussions to the case $\mat{A}(\mat{z},t)=a(\mat{z},t)\mat{I}$. We denote $p(\mat{z}|{\cal D})=p(\mat{x}|{\cal D})p(\mat{q})=p(\mat{x}|{\cal D}){\rm exp}(-\mat{q}^T\mat{q}/2)$. For the special case (implicit mass adaptation) $a(\mat{z},t)=a(t)$, we have $p_t(\mat{z})=p(\mat{z}|{\cal D})$. However, $p_t(\mat{z})\neq p(\mat{z}|{\cal D})$ for the general case (in an explicit mass adaptation) $a(\mat{z},t)\neq a(t)$. 
\end{theorem}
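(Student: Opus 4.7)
The plan is to prove the two sub-claims through a state-independent time change of variables $\tau(t)=\int_0^t a(\mat{z}(s),s)\,ds$ that converts Eq.~(\ref{eq:time-depend-hmc-2}) into the standard Hamiltonian flow $d\mat{x}/d\tau=\mat{q}$, $d\mat{q}/d\tau=-\nabla U(\mat{x})$ along the same orbit in phase space. The key observation is that both parts of the theorem follow from comparing the uniform histogram in the $\tau$-clock (which is correct by Lemma~\ref{lemma:time-dependent-hmc}) with the $dt$-reweighted histogram in the real $t$-clock.

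\textbf{Implicit case ($a(\mat{z},t)=a(t)$).} Here $\tau(t)=\int_0^t a(s)\,ds$ is deterministic and independent of the trajectory, so the reparameterization preserves orbit shape. In the $\tau$-clock the dynamics is exactly the $a\equiv 1$ case of Lemma~\ref{lemma:time-dependent-hmc}, which together with momentum resampling is ergodic with invariant density $p(\mat{z}|\mathcal{D})$. Thus the $\tau$-time average of any bounded $f$ converges to $\int f(\mat{z})p(\mat{z}|\mathcal{D})\,d\mat{z}$. Rewriting the real-time average via $dt=d\tau/a(t(\tau))$,
$$\frac{1}{T}\int_0^T f(\mat{z}(t))\,dt=\frac{\tau(T)}{T}\cdot\frac{1}{\tau(T)}\int_0^{\tau(T)}\frac{f(\tilde{\mat{z}}(\tau))}{a(t(\tau))}\,d\tau.$$
Because $a(\cdot)$ is independent of $\tilde{\mat{z}}(\tau)$, the joint ergodic theorem factorizes the second factor into $\bar a^{-1}\int f(\mat{z})p(\mat{z}|\mathcal{D})\,d\mat{z}$, while the first factor converges to $\bar a=\lim_{T\to\infty}T^{-1}\int_0^T a(s)\,ds$; the $\bar a$ factors cancel and $p_t(\mat{z})=p(\mat{z}|\mathcal{D})$.

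\textbf{Explicit case.} With $a=a(\mat{z})$ depending on the state, the same change of variables still yields $p_\tau(\mat{z})=p(\mat{z}|\mathcal{D})$ in the $\tau$-clock, but now $dt/d\tau=1/a(\mat{z})$ along the trajectory, so every orbit point contributes to the real-time histogram with local weight $1/a(\mat{z})$. Therefore
$$p_t(\mat{z})\propto\frac{p(\mat{z}|\mathcal{D})}{a(\mat{z})},$$
which differs from $p(\mat{z}|\mathcal{D})$ whenever $a$ is non-constant. Specializing to the asymmetric well of Eq.~(\ref{eq:asymmetric-well}) with $a=0.1$ on $\{x<0\}$ and $a=1$ on $\{x\geq 0\}$ reproduces the biased histogram of Fig.~\ref{fig:adaptation}(a), giving the desired counterexample.

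\textbf{Main obstacle.} The delicate step is the factorization in the implicit case: extracting $\bar a$ from the integrand requires that the ergodic average of $f(\tilde{\mat{z}}(\tau))/a(t(\tau))$ truly decouples into the product of two separate ergodic averages. This is immediate when $a(t)$ arises from the i.i.d.\ per-path mass resampling of QHMC (apply Corollary~\ref{cor:qhmc_sd} path-by-path and then average across paths), but for a general deterministic $a(t)$ one must argue genuine asymptotic independence between the $\tau$-flow and the externally specified time rescaling. I would formalize this by treating $a(t)$ as a stationary ergodic process on an auxiliary product space and invoking the joint Birkhoff ergodic theorem; everything else is bookkeeping.
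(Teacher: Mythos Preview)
Your approach is essentially the same as the paper's: both introduce the time change $d\tau=a(\mat{z},t)\,dt$ to reduce to the standard Hamiltonian flow, and both conclude that the real-time histogram carries an extra weight $1/a$ relative to the $\tau$-histogram, so that $p_t(\mat{z})\propto E_t\!\left(1/a(\mat{z},t)\right)p(\mat{z}|\mathcal{D})$, which collapses to $p(\mat{z}|\mathcal{D})$ exactly when $a$ is state-independent. The only cosmetic difference is that the paper extracts the weight via the crossing-point identity $\int_0^T\delta(\mat{z}(t)-\mat{z})\,dt=\sum_i |d\mat{z}/dt(t_i)|^{-1}$ rather than your test-function change of variables; your worry about the factorization step is legitimate, but the paper simply asserts it without further argument, so you are being more careful than the original on that point.
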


\begin{proof} 
Our goal is to calculate $p_t({\mat{z}})$ and compare it with $p(\mat{z}|{\cal D})$. The key idea of proof includes two steps:

\begin{itemize}
    \item Step 1: Change the time variable from $t$ to $t'$ and define $p_{t'}(\mat{z})$. We find $p_{t'}(\mat{z})=p(\mat{z}|D)$.
    \item Step 2: Establish the relation between $p_t(\mat{z})$ and $p_{t'}(\mat{z})$. We find $p_t(\mat{z})\propto E_t(\frac{1}{a(\mat{z},t)})p_{t'}(\mat{z})$.
\end{itemize}

{\bf Step 1}: In Eq. (\ref{eq:time-depend-hmc-2}) we can define a new time variable $t'$ such that $dt'=a(\mat{z},t)dt$, and rewrite the trajectory $\mat{z}(t)$ as $\mat{z}'(t')$. With the new time variable $t'$, the original time-varying Eq.~(\ref{eq:time-depend-hmc-2}) becomes the standard HMC dynamics in (\ref{eq:hd}). Therefore, we have $p_{t'}(\mat{z})=p(\mat{z}|{\cal D})$ where $p_{t'}(\mat{z})$ is defined as

\begin{equation}\label{eq:ptprime}
    p_{t'}(\mat{z})=\underset{T'\to\infty}{{\rm lim}}\frac{1}{T'}\int_0^{T'} \delta(\mat{z}'(t')-\mat{z})dt'.
\end{equation}

{\bf Step 2}: First we note  
\begin{equation}\label{eq:delta-transform}
    \int_0^T \delta(\mat{z}(t)-\mat{z})dt=\sum_{i=1}^{N_T(\mat{z})}\frac{1}{|\frac{d\mat{z}}{dt}(t_i)|}\bigg|_{\mat{z}(t_i)=\mat{z}}
\end{equation}
where $N_T(\mat{z})$ counts the number of $t_i\leq T$ where $\mat{z}(t_i)=\mat{z}$. Consequently, both $p_t(\mat{z})$ and $p_{t'}(\mat{z})$ can be rewritten as 
\begin{equation}\label{eq:14}
    p_t(\mat{z})=\underset{T\to\infty}{{\rm lim}}\frac{1}{T}\sum_{i=1}^{N_T(\mat{z})} \frac{1}{|\frac{d\mat{z}}{dt}(t_i)|}\bigg|_{\mat{z}(t_i)=\mat{z}},
\end{equation}
\begin{equation}\label{eq:15}
    p_{t'}(\mat{z})=\underset{T'\to\infty}{{\rm lim}}\frac{1}{T'}\sum_{i=1}^{N'_{T'}(\mat{z})} \frac{1}{|\frac{d\mat{z}'}{dt'}(t'_i)|}\bigg|_{\mat{z}'(t'_i)=\mat{z}}.
\end{equation}

We further note 
\begin{equation}
\label{eq:16}
    \bigg|\frac{d\mat{z}}{dt}(t_i)\bigg|=\frac{dt'_i}{dt_i}\bigg|\frac{d\mat{z}'}{dt'}(t'_i)\bigg|=a(\mat{z}(t_i),t_i)\bigg|\frac{d\mat{z}'}{dt'}(t'_i)\bigg|.
\end{equation}
Combining Eq.~(\ref{eq:14}), \eqref{eq:15} and \eqref{eq:16}, we have
\begin{equation}\label{eq:17}
    p_t(\mat{z})=\underset{T'\to\infty}{{\rm lim}}\frac{1}{T'}\sum_{i=1}^{N'_{T'}(\mat{z})} \frac{1}{|\frac{d\mat{z}'}{dt'}(t'_i)|}\times \frac{1}{a(\mat{z},t_i)}\bigg|_{\mat{z}'(t'_i)=\mat{z}}\propto E_t(\frac{1}{a(\mat{z},t)})p_{t'}(\mat{z})=E_t(\frac{1}{a(\mat{z},t)})p(\mat{z}|{\cal D}) 
\end{equation}
Where the last equality holds due to $p_{t'}(\mat{z})=p(\mat{z}|{\cal D})$. When $a(\mat{z},t)=a(t)$, $E_t(1/a(\mat{z},t))$ is independent of $\mat{z}$,  and $p_t(\mat{z})=p(\mat{z}|{\cal D})$. Otherwise, $p_t(\mat{z})\neq p(\mat{z}|{\cal D})$ in general when $a(\mat{z},t)$ explicitly depends on $\mat{z}$.
\end{proof}

In the QHMC algorithm (Alg.~\ref{alg:qhmc}), $P_\mat{M}(\mat{M}) $ is independent of $\mat{x}$ or $\mat{q}$, therefore the equivalence between $p_t(\mat{x})$ and $p(\mat{x}|{\cal D})$ is justified. In contrast, explicit mass adaptation has $p_t(\mat{x})\neq p(\mat{x}|{\cal D})$ in general. In practical simulations, only a finite number of samples are obtained. The asymptotic distribution error for a general ergodic Markov process has an exponential decay rate (i.e. linear convergence rate)~\citep{bakry2008rate}, such that $|p_t(\mat{x})-p(\mat{x}|{\cal D})|_\infty\leq C\rho^t$ where $p_t(\mat{x})$ is the finite-time sample distribution, $\rho<1$ and $C$ are constant. Therefore, the obtained sample distribution $p_t(\mat{x})$ can be arbitrarily close to the true posterior distribution $p(\mat{x}|{\cal D})$ as long as the stochastic process is simulated for long enough time.

In the current QHMC formulation $\mat{M}$ is independent of $\mat{x}$. However, it is possible to include a state-dependent mass matrix $\mat{M}(\mat{x})$ as done in Riemannian HMC~\citep{girolami2011riemann} by introduction some correction terms~\citep{ma2015complete}. This paper focuses on the benefits of QHMC caused by its time-varying mass, therefore we ignore the possible employment of a state-dependent mass matrix.

\subsection{Some Remarks}
\label{sec:discuss}

We have some remarks about the general framework of QHMC.

\begin{itemize}
    \item {\bf Some simple cases of QHMC.} Although the choice of the mass distribution $P_{\mat{M}}(\mat{M})$ can be quite general and flexible, a few simple cases turn out to be quite useful in practice: multi-modal, diagonal or scalar mass matrices (corresponding to M-QHMC, D-QHMC or S-QHMC, respectively). Their mass density functions are given below:
\begin{align}
    &\text{\bf M-QHMC}: \mat{M}\sim \sum \limits_{i=1}^k \alpha_i \delta(\mat{M}-\mat{M}_i),\; \text{with} \; \alpha_i \geq0\; \text{and} \; \sum_{i=1}^k \alpha_i=1 \nonumber \\
    &\text{\bf D-QHMC}: {\rm log}\; m_{ii}\sim \mathcal{N}(\mu_{m}^{(i)},\sigma_m^{(i)2}),  \mat{M}=\text{diag}(m_{11},\cdots,m_{dd}) \nonumber \\
    &\text{\bf S-QHMC}: {\rm log}\; m\sim \mathcal{N}(\mu_m,\sigma_m^2), \mat{M}=m\mat{I}. \nonumber
\end{align}
These simple choices often produce excellent results in practice (see Section~\ref{sec:exp}). The D-QHMC can be regarded as a ``quantum" version of preconditioned HMC which has a preconditioner for the mass matrix, and it can improve the sampling performance from an ill-conditioned distribution. The M-QHMC can facilitate sampling from a multi-modal distribution.  In both D-QHMC and S-QHMC, the nonzero entries of the mass matrix are sampled from log-normal distributions, which will be justified in Section \ref{sec:mass_adapt}. In our implementation, we often fix $\sigma_m=0$ firstly to find the optimal $\mu_m$, then we fix $\mu_m$ and try different values of $\sigma_m$. We find that $\sigma_m \in [0.5,3]$ is often an excellent choice. It is worth investigating how to choose $\mu_m$ and $\sigma_m$ in a more rigorous and automatic way in the future.

\item {\bf Differences between QHMC and randomized HMC~\citep{Bou_Rabee_2017}.} Randomized HMC is equivalent to S-QHMC which is a special case of QHMC. Our QHMC has a natural theoretical analysis based on the continuous Fokker-Planck equation (as in Theorem \ref{thm:qhmc-sd}). In contrast, the randomized HMC allows a continuous description only when the step size approaches zero. Meanwhile, the parameterizations of randomized HMC and our QHMC are entirely different. Randomized HMC intuitively chooses the exponential form $t\sim \mathrm{exp}(-t/\tau)$ inspired from thermodynamics. This exponential form contains the characteristic time $\tau$, indicating that the sampled time can barely differ from $\tau$ in magnitudes. In contrast, the log-normal parameterization of QHMC allows the mass samples to spread in wide ranges (across multiple magnitudes). This novel parameterization leads to the superior performance of QHMC on the spiky examples in Section \ref{sec:exp}. 

\item {\bf Possible combinations of QHMC with other advanced techniques.} The QHMC method has almost zero extra costs compared with the standard HMC: one only needs to re-sample the mass matrix $\mat{M}$ from distribution $P_{\mat{M}}(\mat{M})$ before performing each standard HMC simulation path. Due to the ease of implementation, QHMC may be easily combined with other techniques, such as the Riemannian manifold method~\citep{girolami2011riemann}, the continuous tempering technique~\citep{graham2017continuously} and/or the ``no U-Turn'' sampler~\citep{hoffman2014no}. The M-QHMC implementation may be further improved by combining it with the techniques in continuous tempering HMC~\citep{graham2017continuously}, magnetic HMC~\citep{tripuraneni2017magnetic} and/or wormhole HMC~\citep{lan2014wormhole}.

\end{itemize}

\section{Implicit Mass Adaption in QHMC}
\label{sec:mass_adapt}

The mass matrix is set as a positive definite random matrix in QHMC. In this section, we explain how this treatment can benefit the Bayesian sampling in various scenarios. %Although the physical intuitions are mostly justified with the special case $\mat{M}=m\mat{I}$, the theorems below are general for positive definite matrices.

\subsection{Light Particles for Handling Smooth Energy Functions}
\label{sec:small m}

While using heavy particles (and small step size) can produce more accurate simulation results and higher acceptance rates of a Hamiltonian dynamics, the simulation suffers from random walking behavior and low mixing rates. Here we show that the mass should be appropriately small in order to sample a distribution in a broad region when the associated energy function is very smooth. 

We consider a continuous energy function $U(\mat{x})$ in $\mat{x}\in\mathbb{R}^d$ and with $\beta$-smoothness:
\begin{equation}
    ||\nabla U(\mat{x})-\nabla U(\mat{y})||_2\leq \beta||\mat{x}-\mat{y}||_2
\end{equation}
and an HMC implementation with the leapfrog scheme~\footnote{Here we have ignored the half-step momentum updates at the beginning and end of each simulation path.}:
\begin{equation}\label{eq:eq78}
\begin{aligned}
    \mat{x}_{n+1}&=\mat{x}_n+\epsilon \mat{M}^{-1}\mat{q}_n\\
    \mat{q}_{n+1}&=\mat{q}_n-\epsilon \nabla U(\mat{x}_{n+1}).
\end{aligned}
\end{equation}
%For simplicity we assume a scalar mass parameter $m$ such that $\mat{M}=m\mat{I}$. 

In the following, we qualitatively measure the random walking effects with different choices of mass parameters. We first investigate a quadratic energy function in Lemma \ref{lemma:3}.

\begin{lemma}\label{lemma:3}
For the quadratic energy function $U(\mat{x})=\frac{1}{2}\mat{x}^T\mat{A}\mat{x}$ (with $\mat{A}$ being symmetric), the discrete dynamical system in Eq.~(\ref{eq:eq78}) has bounded trajectories if and only if $\mat{M}\succ\frac{\epsilon^2}{4}\mat{A}\succ\mat{0}$. Here for two symmetric matrices $\mat{A},\mat{B}$ the inequality $\mat{A}\succ\mat{B}$ means $\mat{A}-\mat{B}$ is positive definite. 
\end{lemma}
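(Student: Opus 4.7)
The plan is to reduce the coupled linear recursion in Eq.~(\ref{eq:eq78}) with $\nabla U(\mat{x})=\mat{A}\mat{x}$ to a collection of scalar problems, one per eigenvalue of a symmetric matrix, and then translate the scalar stability condition back to a matrix inequality. First I would rewrite a single leapfrog step as the linear map
\[
\begin{pmatrix}\mat{x}_{n+1}\\ \mat{q}_{n+1}\end{pmatrix}
=\begin{pmatrix}\mat{I} & \epsilon\mat{M}^{-1}\\ -\epsilon\mat{A} & \mat{I}-\epsilon^2\mat{A}\mat{M}^{-1}\end{pmatrix}\begin{pmatrix}\mat{x}_n\\ \mat{q}_n\end{pmatrix},
\]
and observe that boundedness of all trajectories is equivalent to this $2d\times 2d$ transition matrix being power-bounded.

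Next I would perform the change of variables $\tilde{\mat{x}}=\mat{M}^{1/2}\mat{x}$, $\tilde{\mat{q}}=\mat{M}^{-1/2}\mat{q}$ (possible because $\mat{M}$ is assumed positive definite, otherwise the scheme is ill-defined). A direct calculation shows that in the tilde variables the recursion becomes the leapfrog scheme with identity mass and symmetric matrix $\tilde{\mat{A}}:=\mat{M}^{-1/2}\mat{A}\mat{M}^{-1/2}$. Since $\tilde{\mat{A}}$ is symmetric, I would diagonalize it with an orthogonal matrix $\mat{Q}$; because the other blocks of the transition matrix are multiples of $\mat{I}$, this orthogonal change of basis commutes with the dynamics and decouples the system into $d$ independent scalar leapfrog recursions, one for each eigenvalue $\lambda_i$ of $\tilde{\mat{A}}$.

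For the $i$-th scalar problem the one-step map is the $2\times 2$ matrix $T_i=\bigl(\begin{smallmatrix}1 & \epsilon\\ -\epsilon\lambda_i & 1-\epsilon^2\lambda_i\end{smallmatrix}\bigr)$, whose determinant is $1$ (leapfrog is symplectic) and whose trace is $2-\epsilon^2\lambda_i$. Its characteristic polynomial $\mu^2-(2-\epsilon^2\lambda_i)\mu+1=0$ therefore has eigenvalues on the unit circle precisely when the discriminant is negative, i.e.\ $(2-\epsilon^2\lambda_i)^2<4$, equivalently $0<\epsilon^2\lambda_i<4$. In that regime $T_i$ is a rotation-like matrix and every trajectory is bounded, while outside this open interval at least one eigenvalue has $|\mu|>1$ and trajectories grow geometrically. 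Assembling these per-eigenvalue inequalities yields $\mat{0}\prec\tilde{\mat{A}}\prec(4/\epsilon^2)\mat{I}$, which by conjugation with $\mat{M}^{1/2}$ is equivalent to $\mat{0}\prec\mat{A}$ and $\mat{A}\prec(4/\epsilon^2)\mat{M}$, i.e.\ $\mat{M}\succ(\epsilon^2/4)\mat{A}\succ\mat{0}$.

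The main delicate point, which I would treat carefully, is the boundary case $\epsilon^2\lambda_i\in\{0,4\}$. At $\lambda_i=0$ the map $T_i$ is $\bigl(\begin{smallmatrix}1&\epsilon\\0&1\end{smallmatrix}\bigr)$, a nontrivial Jordan block whose iterates grow linearly in $n$, so bounded trajectories force $\lambda_i>0$; at $\epsilon^2\lambda_i=4$ the trace is $-2$ and one checks that $T_i+\mat{I}$ has rank $1$, so $-1$ is a defective eigenvalue and iterates are again unbounded. This is what forces the inequalities in the conclusion to be strict on both sides, giving precisely the claimed equivalence $\mat{M}\succ\frac{\epsilon^2}{4}\mat{A}\succ\mat{0}$.
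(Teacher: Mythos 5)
Your proof is correct and follows essentially the same route as the paper: the paper eliminates $\mat{q}$ to obtain the second-order recurrence $\mat{x}_{n+2}+(\epsilon^2\mat{M}^{-1}\mat{A}-2\mat{I})\mat{x}_{n+1}+\mat{x}_n=0$, notes that $\mat{M}^{-1}\mat{A}$ has real spectrum via its symmetric conjugate $\mat{M}^{-1/2}\mat{A}\mat{M}^{-1/2}$, and reads off the root condition for $\mu^2+c\mu+1$ --- exactly your per-eigenvalue condition $0<\epsilon^2\lambda_i<4$ in different packaging. Your explicit treatment of the defective boundary cases $\epsilon^2\lambda_i\in\{0,4\}$ is a welcome addition, since the paper asserts the strict inequalities without justifying why the endpoints must be excluded.
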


\begin{proof}
By eliminating $\mat{q}_n$ in (\ref{eq:eq78}) we have the second-order recurrence relation:
\begin{equation}\label{eq:general_k>2}
    \mat{x}_{n+2}-2\mat{x}_{n+1}+\mat{x}_n+\epsilon^2\mat{M}^{-1}\nabla U(\mat{x}_{n+1})=0.
\end{equation}
Because $U(\mat{x})=\frac{1}{2}\mat{x}^T\mat{A}\mat{x}$, the Eq. (\ref{eq:general_k>2}) becomes 
\begin{equation}\label{eq:eq31}
    \mat{x}_{n+2}+(\epsilon^2\mat{M}^{-1}\mat{A}-2\mat{I})\mat{x}_{n+1}+\mat{x}_n=0
\end{equation}

According to Sylvester's law of inertia, the spectrum of $\mat{M}^{-1}\mat{A}$ is identical to the spectrum of $\mat{M}^{-\frac{1}{2}}\mat{A}\mat{M}^{-\frac{1}{2}}$, which is symmetric and hence contain only real eigenvalues. Eq. (\ref{eq:eq31}) is bounded if and only if
\begin{equation}
    2\mat{I}\succ \epsilon^2\mat{M}^{-1}\mat{A}-2\mat{I}\succ -2\mat{I}
\end{equation}
which is equivalent to $\mat{M}\succ\frac{\epsilon^2}{4}\mat{A}\succ\mat{0}$.
\end{proof}

We provide a few remarks here. Firstly, the inequality $\frac{\epsilon^2}{4}\mat{A}\succ\mat{0}$ implies that $\mat{A}$ should be positive definite, otherwise the quadratic potential function does not have a lower bound and cannot ``trap" a particles in a bounded region. Secondly, the first inequality implies that  the mass matrix should have large eigenvalues to make the trajectories bounded, which corresponds to a heavy particle moving slowly in $\mat{x}$ space.

Next we consider a more general smooth and continuous energy function.

\begin{theorem}\label{theorem:5}
Assume $U(\mat{x})$ is a differentiable and continuous energy function with $\beta$-smoothness. Denote the smallest eigenvalue of $\mat{M}$ as $m_{min}>0$ and assume it satisfies $m_{min}>\frac{\beta\epsilon^2}{6}$. Without loss of generality we assume $\nabla U(\mat{x}^*)=0$ for $\mat{x}^*=\mat{0}$, and we denote $\mat{C}=(\frac{\beta\epsilon^2}{2}\mat{M}^{-1}-1)$. Lemma \ref{lemma:3} implies that the sequence $\{\mat{x}_n^*\}$ generated by $\mat{x}_{n+2}^*+\mat{C}\mat{x}_{n+1}^*+\mat{x}_n^*=0$ is bounded, i.e. $||\mat{x}_n^*||_2<\Delta_0$. There exist constants $A$ and $\Delta$, such that $||\mat{x}_n||_2<\Delta$ for all $\mat{x}_n (n<N_0)$ generated by Eq.~(\ref{eq:general_k>2}). Here $A$ only depends on the initial condition $\mat{x}_0$, and $N_0$ is given by
\begin{equation}\label{eq:N0}
    N_0=\mathrm{log}_\lambda(\frac{\Delta}{A\Delta_0}), \lambda=4+\sqrt{17}
\end{equation}
\end{theorem}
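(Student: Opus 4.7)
The plan is to control the nonlinear trajectory $\{\mat{x}_n\}$ by comparing it with the linear reference trajectory $\{\mat{x}_n^*\}$ from Lemma~\ref{lemma:3}, which is already known to satisfy $\|\mat{x}_n^*\|_2<\Delta_0$. First I would introduce the deviation $\mat{y}_n=\mat{x}_n-\mat{x}_n^*$ and subtract Eq.~(\ref{eq:general_k>2}) from the reference recurrence $\mat{x}_{n+2}^*+\mat{C}\mat{x}_{n+1}^*+\mat{x}_n^*=0$, obtaining
\begin{equation*}
\mat{y}_{n+2}=2\mat{y}_{n+1}-\mat{y}_n+(2\mat{I}+\mat{C})\mat{x}_{n+1}^*-\epsilon^2\mat{M}^{-1}\nabla U(\mat{x}_{n+1}),
\end{equation*}
in which the term $(2\mat{I}+\mat{C})\mat{x}_{n+1}^*$ acts as a bounded forcing of size $O(\Delta_0)$.

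Next I would pass to norms. The $\beta$-smoothness of $U$ together with $\nabla U(\mat{0})=\mat{0}$ gives $\|\nabla U(\mat{x}_{n+1})\|_2\le\beta\|\mat{x}_{n+1}\|_2\le\beta(\|\mat{y}_{n+1}\|_2+\Delta_0)$. Combining with $\|\epsilon^2\mat{M}^{-1}\|_{\mathrm{op}}\le\epsilon^2/m_{\min}$ and the hypothesis $m_{\min}>\beta\epsilon^2/6$, the nonlinear correction contributes at most $6(\|\mat{y}_{n+1}\|_2+\Delta_0)$; the constant~$6$ is exactly what the assumption on $m_{\min}$ is tuned to deliver. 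Writing $e_n=\|\mat{y}_n\|_2$ and using the triangle inequality then yields the scalar recurrence
\begin{equation*}
e_{n+2}\le 8\,e_{n+1}+e_n+c\,\Delta_0
\end{equation*}
for some constant $c>0$ absorbing all $\mat{x}_{n+1}^*$-type forcings.

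The characteristic polynomial of the homogeneous part is $x^2-8x-1=0$, whose dominant root is precisely $\lambda=4+\sqrt{17}$ and whose second root has magnitude below~$1$. Because $\lambda^2=8\lambda+1$ implies $8/\lambda+1/\lambda^2=1$, a routine induction on $a_n:=e_n/(\Delta_0\lambda^n)$ shows that $\{a_n\}$ is bounded by a constant $A$ depending only on $(e_0,e_1)$ (hence only on the initial condition $\mat{x}_0$, through the leapfrog initialization of $\mat{q}_0$) and on $c$: the forcing contributes a geometric tail $c\sum_k\lambda^{-k}<\infty$ that folds into $A$. Hence $e_n\le A\Delta_0\lambda^n$, and by the triangle inequality $\|\mat{x}_n\|_2\le\Delta_0+e_n\le A\Delta_0\lambda^n$ after absorbing the ``$+\Delta_0$'' into $A$. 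Requiring this bound to be less than $\Delta$ and solving for $n$ yields $n<N_0=\log_\lambda(\Delta/(A\Delta_0))$, exactly the statement of the theorem.

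The main obstacle I expect is the bookkeeping in the second step: to arrive cleanly at coefficients $8$ and $1$ in $e_{n+2}\le 8\,e_{n+1}+e_n$, and therefore at $\lambda=4+\sqrt{17}$, one must route the $\beta$-smoothness bound through $\mat{M}^{-1}$ and verify that the threshold $m_{\min}>\beta\epsilon^2/6$ is precisely what produces the factor $6$ on $e_{n+1}$, which then combines with the factor~$2$ from the leapfrog drift to give~$8$. A secondary nuisance is the constant forcing $c\Delta_0$, which is not homogeneous; I would dispatch it by dividing through by $\lambda^n$ and exploiting the identity $8/\lambda+1/\lambda^2=1$ so that the resulting telescoping sum is finite and can be folded into $A$.
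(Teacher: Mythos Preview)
Your proposal is correct and follows essentially the same route as the paper: derive a scalar recurrence of the form $(\cdot)_{n+2}\le 8(\cdot)_{n+1}+(\cdot)_n+c\Delta_0$ from the $\beta$-smoothness bound and the hypothesis $m_{\min}>\beta\epsilon^2/6$, then read off the dominant root $\lambda=4+\sqrt{17}$ of $x^2-8x-1$ and invert $A\Delta_0\lambda^n<\Delta$. The only cosmetic difference is that you track the deviation $e_n=\|\mat{x}_n-\mat{x}_n^*\|_2$ and convert back at the end, whereas the paper works directly with $\Delta_n=\|\mat{x}_n\|_2$ after inserting the (identically zero) combination $\mat{C}\mat{x}_{n+1}^*+\mat{x}_n^*+\mat{x}_{n+2}^*$ and bounding each piece separately; your treatment of the constant forcing via the identity $8/\lambda+1/\lambda^2=1$ is in fact a bit more explicit than the paper's one-line ``in the extreme case $\Delta_n$ grows as $\lambda^n$'' argument.
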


\begin{proof}

Given the $\beta$-smoothness condition, we have $||\nabla U(\mat{x})-\nabla U(\mat{x}^*)||_2\leq\beta||\mat{x}-\mat{x}^*||_2$ thus $||\nabla U(\mat{x})||\leq \beta ||\mat{x}||_2$. The sequences $\{\mat{x}_n^*\}$ and $\{\mat{x}_n\}$ are obtained as
\begin{equation}\label{eq:25}
    \begin{split}
    \mat{x}_{n+2}^*&=-\mat{C}\mat{x}_{n+1}^*-\mat{x}_n^*,\\
    \mat{x}_{n+2}&=-(\epsilon^2\mat{M}^{-1}\nabla U(\mat{x}_{n+1})-2\mat{x}_{n+1})-\mat{x}_n.
    \end{split}
\end{equation}

Taking the difference between the above two equations we get 
\begin{equation}\label{eq:26}
\begin{split}
    \mat{x}_{n+2}&=(\mat{C}\mat{x}_{n+1}^*+\mat{x}_n^*+\mat{x}_{n+2}^*)-(\epsilon^2\mat{M}^{-1}\nabla U(\mat{x}_{n+1})-2\mat{x}_{n+1})-\mat{x}_n.
\end{split}
\end{equation}
Denote $\Delta_n=||\mat{x}_n||_2$ we have
\begin{equation}\label{eq:27}
\begin{split}
    \Delta_{n+2}=||\mat{x}_{n+2}||_2&\leq ||\mat{C}\mat{x}_{n+1}^*+\mat{x}_n^*+\mat{x}_{n+2}^*||_2+\epsilon^2||\mat{M}^{-1}||_2\cdot||\nabla U(\mat{x}_{n+1})||_2+2||\mat{x}_{n+1}||_2+||\mat{x}_n||_2\\
    &\leq (\lambda_{max}(\mat{C})+2)\Delta_0+(\frac{\epsilon^2\beta}{m_{min}}+2)\Delta_{n+1}+\Delta_{n}
    < 4\Delta_0+8\Delta_{n+1}+\Delta_{n}.
\end{split}
\end{equation}

In the extreme case, $\Delta_n$ grows exponentially as $\lambda^n$ where $\lambda=4+\sqrt{17}$. Therefore, there should exist a constant $A$ which only depends on the initial conditions, satisfying that 
\begin{equation}
    \Delta_n<A\Delta_0\lambda^n
\end{equation}
By enforcing $A\Delta_0\lambda^n<\Delta$ for all $n<N_0$, we immediately have Eq.~(\ref{eq:N0}).
\end{proof}

Following Theorem \ref{theorem:5}, now we provide a necessary condition for efficiently sampling the posterior density associated with a smooth energy function. %Here ``efficiently" means that a sampler can explore a large range of the sample space with a few discrete steps.
\begin{corollary}
\label{corrollary:smooth}
A sampler can explore the smooth energy function efficiently, if it can find a path from $||\mat{x}_0||_2\leq\Delta_0$ to $||\mat{x}_i||_2\geq\Delta$ with no more than $N_0$ steps. A necessary
condition is 
\begin{equation}
    m_{min}\leq\frac{\beta\epsilon^2}{6}
\end{equation}

This implies that the sampler cannot explore the energy function efficiently if the mass is above a certain bound, which is linear with respect to $\beta$.
\end{corollary}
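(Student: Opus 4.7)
The plan is to prove this corollary as an immediate contrapositive of Theorem~\ref{theorem:5}. That theorem guarantees that whenever $m_{min} > \beta\epsilon^2/6$, every iterate $\mat{x}_n$ of the leapfrog recursion in Eq.~(\ref{eq:general_k>2}) starting inside $\|\mat{x}_0\|_2 \leq \Delta_0$ stays within $\|\mat{x}_n\|_2 < \Delta$ for all $n < N_0$. The ``efficient exploration'' requirement stated in the corollary is precisely the ability to find an index $i < N_0$ with $\|\mat{x}_i\|_2 \geq \Delta$, which is the logical negation of the conclusion of Theorem~\ref{theorem:5}.

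Concretely, I would suppose that the sampler does realize an exploring trajectory, reaching $\|\mat{x}_i\|_2 \geq \Delta$ for some $i < N_0$. If the mass bound $m_{min} > \beta\epsilon^2/6$ were simultaneously true, Theorem~\ref{theorem:5} would force $\|\mat{x}_n\|_2 < \Delta$ for every $n < N_0$, contradicting the escape at step $i$. Hence $m_{min} \leq \beta\epsilon^2/6$ is necessary, which is the claimed bound. Because Theorem~\ref{theorem:5} only places an assumption on the smallest eigenvalue $m_{min}$ of $\mat{M}$, no further structural hypothesis on $\mat{M}$ (such as isotropy) is needed here.

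The derivation is essentially a one-line contradiction, so the intellectual work lies not in the inequality itself but in justifying that Eq.~(\ref{eq:N0}) is the correct benchmark for ``efficient''. The bound $N_0 = \log_\lambda(\Delta/(A\Delta_0))$ scales only logarithmically in the target radius $\Delta$, which I would argue captures ballistic exploration: any sampler requiring polynomially or exponentially more steps to reach $\Delta$ is behaving diffusively rather than ballistically. The linear dependence $m_{min} = O(\beta\epsilon^2)$ then inherits directly from Theorem~\ref{theorem:5} and expresses the physical moral that smoother energy landscapes (smaller $\beta$) demand proportionally lighter particles to cross the same distance in logarithmic time. The main subtlety worth flagging is that $\Delta$ is a free parameter chosen to define ``efficient'', so the corollary should really be read as quantified over any threshold $\Delta$ exceeding $\Delta_0$ that one wants the particle to reach within the logarithmic budget $N_0$.
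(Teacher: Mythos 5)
Your proposal is correct and matches the paper's intended reasoning: the paper states Corollary~\ref{corrollary:smooth} without a separate proof precisely because it is the immediate contrapositive of Theorem~\ref{theorem:5}, which is the one-line contradiction you give. Your closing remarks on the off-by-one at $i=N_0$ versus $n<N_0$ and on $\Delta$ being a free threshold defining ``efficient'' are reasonable clarifications of points the paper leaves implicit, but they do not change the argument.
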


\subsection{Time-Varying Mass for Handling Spiky Distribution}\label{sec:safe-sampling}

Next we explain how our proposed choice of mass (the log-normal parameterization) can help sample a spiky distribution. Here ``spiky" means a distribution whose energy function $U(\mat{x})$ has a very large (or even infinite) gradient around some points. Fig.~\ref{fig:lp_illus} left shows an energy function $U(x)$ with a large gradient at $x=0$.  A representative family of spiky distributions is ${\rm exp} \left(-\lambda \| \mat{x}\|_p^p\right)$ with $p \in (0,1]$, which is widely used as a prior density for sparse modeling and model selection~\citep{armagan2009variational,huang2008asymptotic,donoho2006compressed,xu20101,zhao2014p,huang2008asymptotic}. Here the $\ell_p$ norm is defined in a loose sense as $||\mat{x}||_p=(\sum_{i=1}^d |x_i|^{p})^{\frac{1}{p}}$. As shown in Fig. \ref{fig:lp_illus} middle, $\| \mat{x}\|_p^p$ is non-convex and has divergent gradients around $x=0$ when $0<p<1$, causing troubles in traditional HMC samplers. Sampling from such a spiky distribution is a challenging task in Bayesian learning~\citep{chaari2016hamiltonian,chaari17nshmc}. 

The Hamiltonian dynamical system (\ref{eq:hd}) becomes stiff if the energy function $U(\mat{x})$ has a large gradient at some points. This issue can cause unstable numerical simulations. A naive idea is to adapt the step size based on the local gradient $\nabla U(\mat{x})$. Unfortunately, this explicit step-size tuning is equivalent to using a state-dependent time modifier $a(\mat{z}, t)$, and will produce a wrong time-averaged distribution $p_t(\mat{x})\neq p(\mat{x}|{\cal D})$, as proved in Theorem~\ref{thm:time-average}. In contrast, the implicit and stochastic mass adaptation in our QHMC is equivalent to using a state-independent time modifier $a(t)$, and it ensures the produced time-averaged distribution converging to the desired posterior density $p(\mat{x} | {\cal D})$. 

\begin{figure}[t]
    \centering
    \begin{subfigure}[b]{0.64\textwidth}
            \centering
    \includegraphics[trim = 10mm 95mm 10mm 102mm, clip,width=1.0\textwidth]{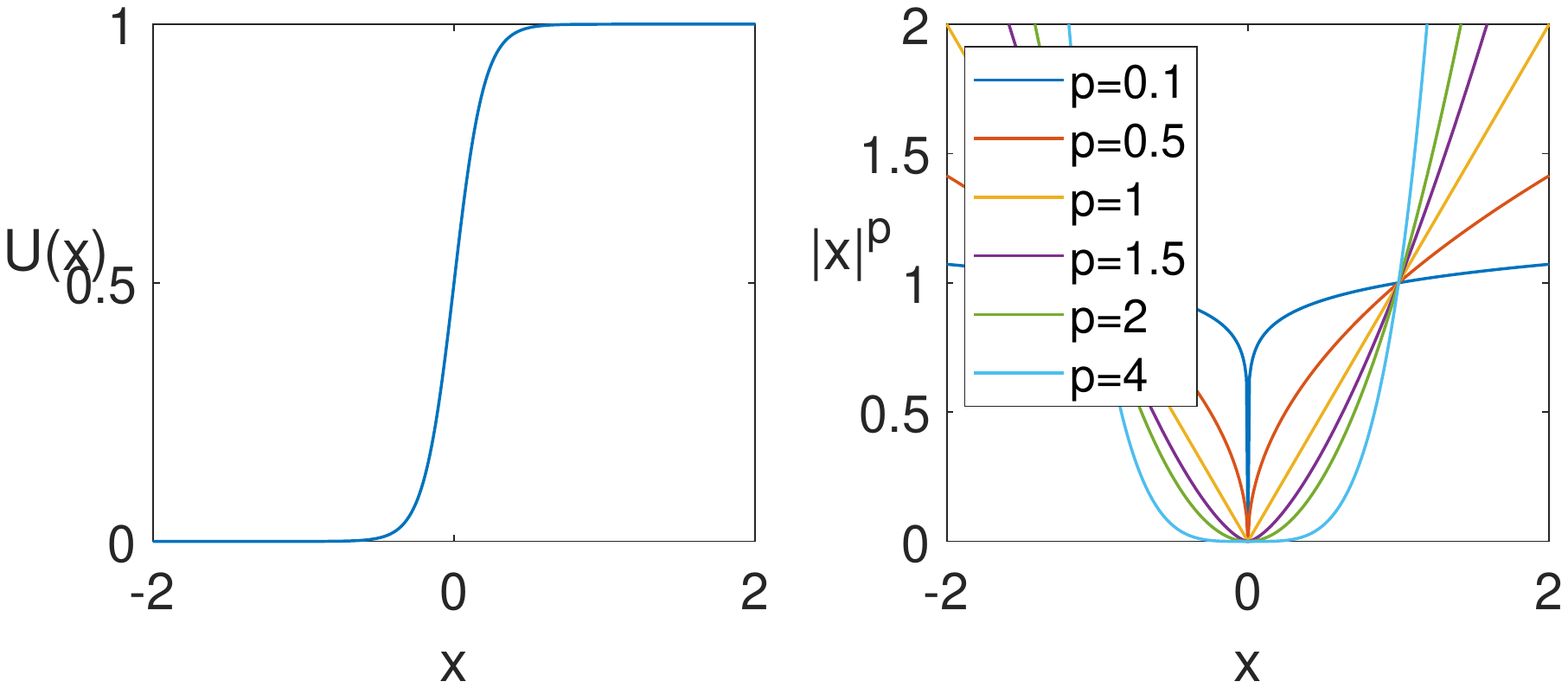}
    \end{subfigure}
    \begin{subfigure}[b]{0.32\textwidth}
            \centering
    \includegraphics[trim = 35mm 85mm 40mm 80mm, clip,width=1.0\textwidth]{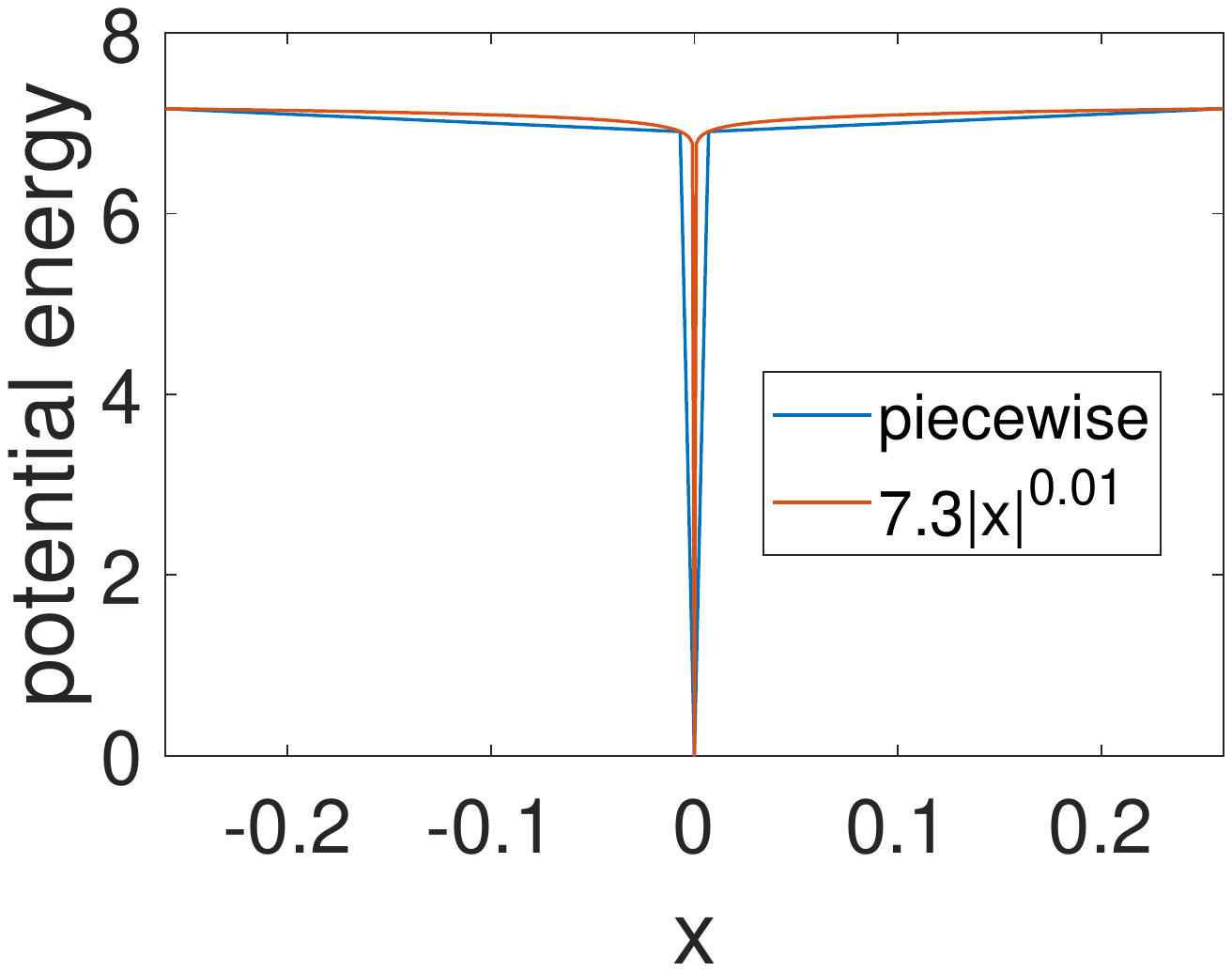}
    \end{subfigure}
    
    \caption{Left: a spiky potential energy function with a very large gradient around $x=0$. Middle: $\|x\|_p^p$ for various values of $p$. When $p$ tends to zero, $\|x\|_p$ becomes an indicator function, which is 0 for $x=0$ and 1 elsewhere. The gradients tend to infinity around $x=0$ for $0<p<1$. Right: the spiky potential function $7.3|x|^{0.01}$ can be approximated by the piecewise linear function in Eq.~(\ref{eq:x-and-1000x}).}
    \label{fig:lp_illus}
    \vspace{-10pt}
\end{figure}

\begin{figure}[t]
    \centering
    \begin{subfigure}[b]{0.475\textwidth}
            \centering
            \includegraphics[trim = 35mm 85mm 30mm 80mm, clip,width=\textwidth]{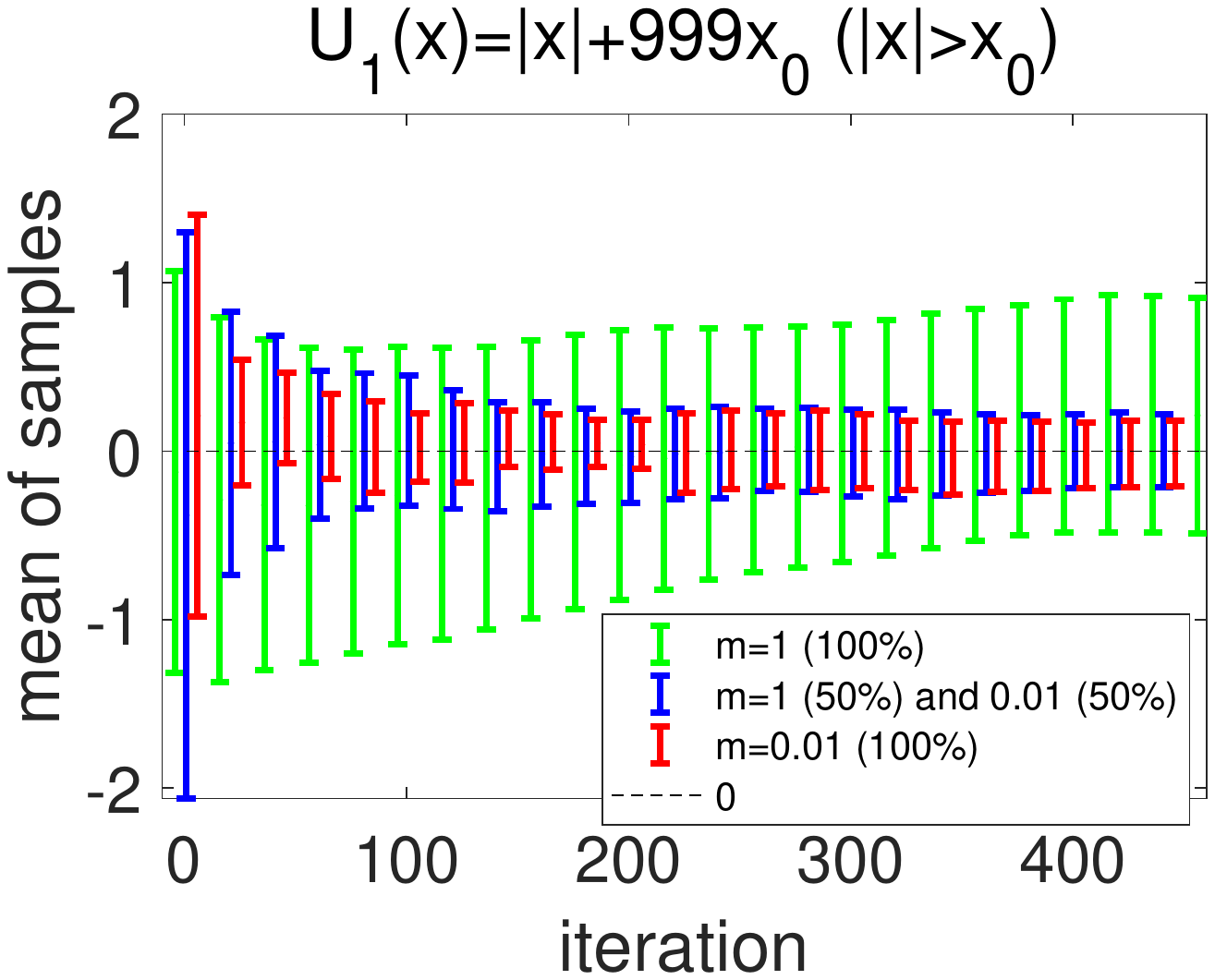}
            {{\small }}
        \end{subfigure}
        \hfill
        \begin{subfigure}[b]{0.475\textwidth}  
            \centering 
            \includegraphics[trim = 35mm 85mm 30mm 80mm, clip,width=\textwidth]{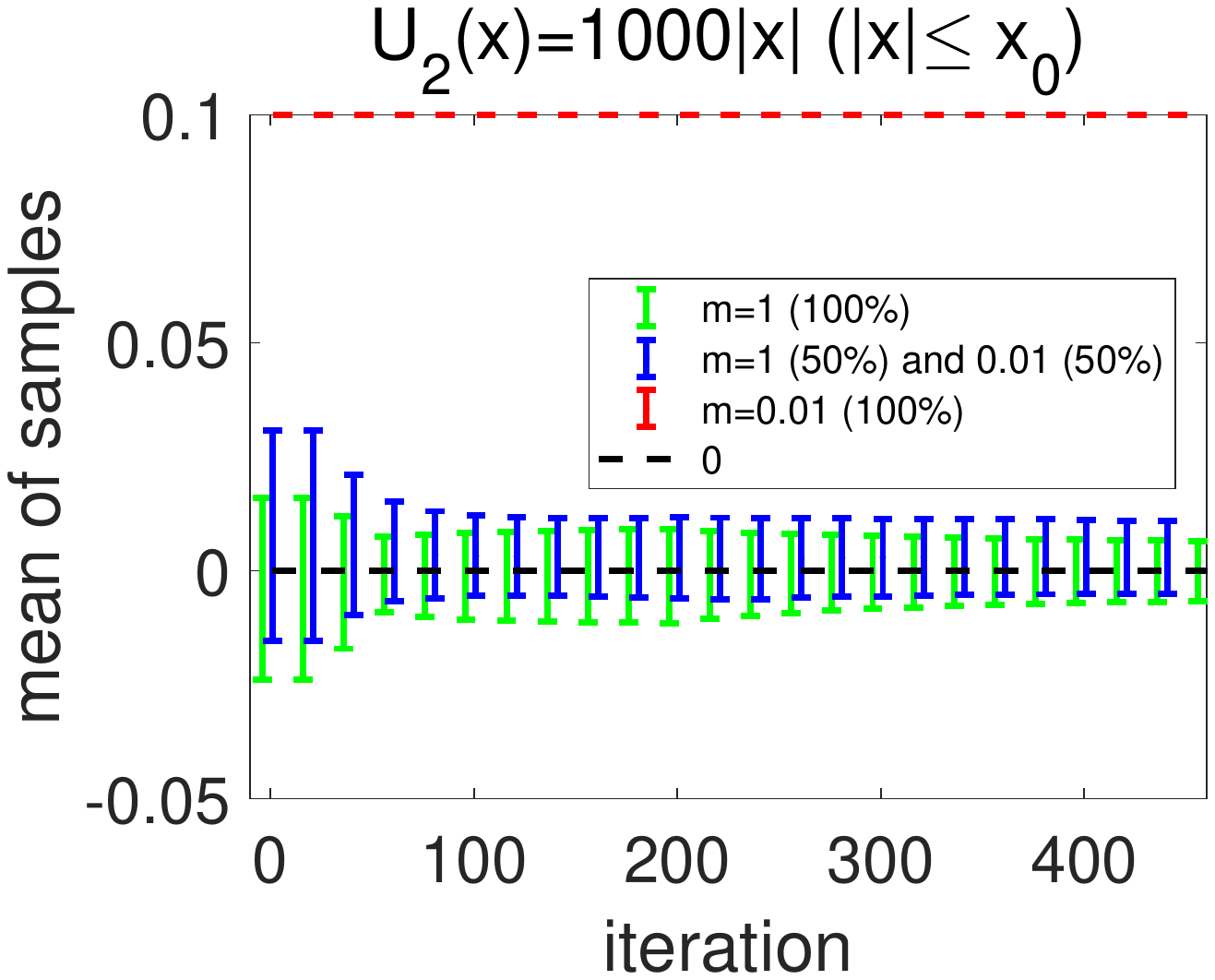}
            {{\small }}   
        \end{subfigure}
    \caption{The performance of HMC and QHMC. QHMC uses a random mass with distribution $p_m(m)=\frac{1}{2}(\delta(m-1)+\delta(m-0.01))$. (a) Result for the smooth function $U_1(x)=|x|+999 x_0$ where QHMC behaves similarly to HMC with $m=1$ and significantly outperforms HMC with $m=0.01$. (b) Results for the spiky function $U_2(x)=1000|x|$, where QHMC behaves similarly to HMC with $m=0.01$ and significantly outperforms HMC with $m=1$.}
    \label{fig:implicit-adaptation-benefit}
    \vspace{-5pt}
\end{figure}

The advantage of the implicit adaptation strategy in QHMC can be illustrated with a toy spiky distribution with the following potential energy function $U(x)$:
\begin{equation}\label{eq:x-and-1000x}
U(x)=
\left\{
             \begin{array}{lc}
             U_1(x)=|x|+999x_0 \quad & {\rm if}\; |x|>x_0,\\
             U_2(x)=1000|x| \quad & {\rm if} \; |x|\leq x_0.
             
             \end{array}
\right.
\end{equation}
Here $U(x)$ can be a simple yet good approximation to an $\ell_p$ function, as shown on the right of Fig.~\ref{fig:lp_illus}. The spiky and smooth region of $U(x)$ is separated by $-x_0$ and $x_0$. We care if both regions can be sampled accurately. When $x_0={\rm log}(1001)/1000$, the probability of $x$ being located in either region is 0.5, therefore $U_1(x)$ and $U_2(x)$ are equally important for sampling and we study them independently. Because $U_1(x)$ and $U_2(x)$ are both symmetric, their associated mean values should be zero, and we measure the numerical performance of a sampler by estimating the error bars obtained from 20 independent experiments. Suppose that we have two mass choices in HMC: $m=1$ or $m=0.01$. In QHMC, we adapt the mass implicitly by allowing $m=1$ and $m=0.01$ with an equal probability. Because $U_1(x)=|x|+999x_0\ (|x|>x_0)$ is very smooth, a small mass $m=0.01$ is preferred, as shown in Fig.~\ref{fig:implicit-adaptation-benefit} (a). Although QHMC slightly underperforms the HMC implemented with $m=0.01$, it significantly outperforms the HMC implemented with $m=1$. On the other hand, $U_2(x)=1000|x|$ has a very large gradient, therefore a relatively large mass $m=1$ is preferred, as shown in Fig.~\ref{fig:implicit-adaptation-benefit} (b). Similarly, QHMC has slightly worse performance than the HMC implemented with $m=1$, but it performs much better than the HMC implemented with $m=0.01$. In summary, the HMC cannot explore efficiently $U_1$ and $U_2$ simultaneously with a fixed mass, however our QHMC with an implicit mass adaptation can have excellent performance in both regions.

\textbf{Remark: Adapting Mass in an Exponential Scale.} Corollary~\ref{corrollary:smooth} implies that one needs to choose a small mass $m\leq \frac{\beta\epsilon^2}{6}$ to efficiently explore a potential energy of $\beta$-smoothness. Although $\ell_p (0<p<1)$ function has no global $\beta$-smoothness , one can define $\beta$ locally. In the example above, for $x>x_0$ or $x<-x_0$, $\beta=1$; for $-x_0<x<x_0$, $\beta=1000$. The wide spread of $\beta$ from 1 to 1000 justifies our choice of a widespread mass distribution as ${\rm log} m\sim {\cal N}(\mu_m,\sigma_m^2)$ in Alg.~\ref{alg:qhmc}. In order to adapt to widespread $\beta$, we should adapt $m$ in an exponential scale rather than in a linear scale.

\subsection{Explore Multimodal Energy Functions: Quantum Tunneling Effects}\label{sec:quantum-tunneling}
Finally, we show that the random mass distribution can help sample a multimodal distribution because it can manifest the ``quantum tunneling" effect. 

\begin{figure*}[t]
            \centering        \includegraphics[width=1\textwidth]{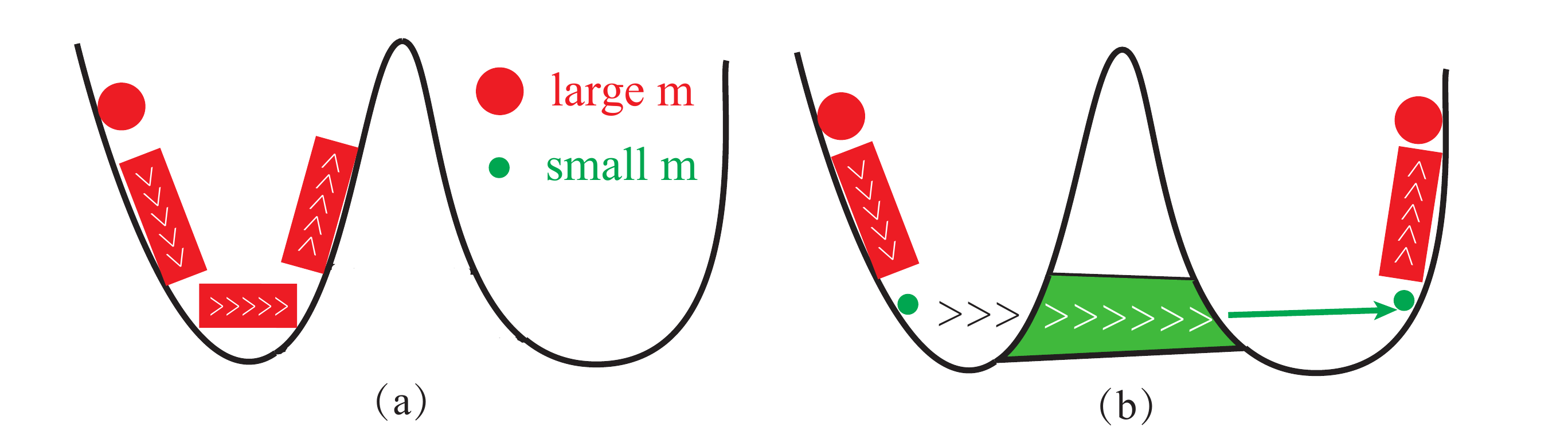}
            \caption[Network2]%
            {{\small The ``quantum tunneling" effects for QHMC. (a) In HMC, a relatively large mass (equivalently, small step size) is chosen to guarantee accurate Hamiltonian simulations. However, the particle can be trapped within one well. (b) In QHMC, the mass is random, and there is chance that one uses a small mass in one path. The light particle has a higher chance to jump to another well through ``tunneling".}}    
        \label{fig: small mass}
        \vspace{-15pt}
    \end{figure*}

The ``quantum tunneling" effect describes that a microscopic particle may climb over a potential peak even if its total energy is low, which is very different from the case in classical mechanics. In quantum mechanics, a particle should be treated as wave permeable in the whole space rather than a localized object. As a result, the particle (the wave) always has a non-zero probability to climb over the peak. The larger step the particle takes, the more likely a quantum tunneling will happen. %The more ``diffusive" (to be defined later the particle is, the more likely a quantum tunneling will happen.

Now we analyze the quantum tunneling effect of our QHMC method in a semi-quantitative way. When a particle has momentum uncertainty $\Delta q$, it should also have a real space uncertainty $\Delta x$ in quantum mechanics such at $\Delta x\Delta q\sim \hbar$, where $\hbar \approx 1.05\times 10^{-34}$ Js is the Plank's constant and represents the unit of time in the quantum world. In QHMC, the step size $\epsilon$ plays the role of time unit, so we instead have $\Delta x\Delta q\sim\epsilon$. The momentum variable $q$ has a distribution $\propto {\rm exp}(-q^2/2m)$ at the thermal equilibrium point, therefore we have momentum uncertainty $\Delta q\sim \sqrt{m}$ and position uncertainty $\Delta x\sim \epsilon/\Delta q=\epsilon/\sqrt{m}$. When $m$ is small the particle is less likely to be trapped in a single well. Fig.~\ref{fig: small mass} (b) shows the intuition of the quantum tunnelling effect.

\section{Stochastic-Gradient Implementation}\label{sec:SQHMC}

\subsection{Quantum Stochastic Gradient Nos{\'e}-Hoover Thermostat (QSGNHT)}

Similar to HMC, the proposed QHMC method suffers from a high computational cost when the training data size is huge. Consider a machine learning problem with $N$ training samples, the loss (energy) function $U(\mat{x})$ is commonly defined as the average loss over all training samples:  $U(\mat{x})=\frac{1}{N}\sum_{i=1}^NU_i(\mat{x})$, where $U_i (\mat{x})$ depends only on the $i$-th training sample. Calculating the full gradient $\nabla U(\mat{x})$ needs computation over every training sample. Instead, one may replace the true loss with a stochastic estimation $\tilde{U}(\mat{x})$, and the stochastic gradient is computed efficiently with only a small batch of samples $\nabla \tilde{U}(\mat{x})=\frac{1}{b}\sum_{i=1}^b \nabla U_i(\mat{x})$. Here $b$ is called the batch size. However, the mini-batch estimation of gradients will introduce extra noise. According to the central limit theorem, we can approximate the stochastic gradient as the true gradient plus a Gaussian noise with covariance $V(\mat{x})$: $\nabla\tilde{U}(\mat{x})\approx \nabla U(\mat{x})+{\cal N}(0,\mat{V}(\mat{x}))$, if $b$ is much smaller than $N$ but still relatively large.

However, such a naive stochastic gradient implementation can result in incorrect steady distribution, and one can add a friction term to compensate for the extra noise in a stochastic-gradient HMC~\citep{chen2014stochastic}. Different from the friction formulation in~\citep{chen2014stochastic}, we utilize the thermostat technique~\citep{ding2014bayesian,Leimkuhler2009AMA} to correct the steady distribution. Specifically, we treat the gradient uncertainty term $\mat{V}(\mat{x})$ as a noise with an unknown magnitude, and use the Nos{\'e}-Hoover thermostat to avoid the explicit estimation of this gradient noise term.  The resulting update rule for $(\mat{x},\mat{q},\xi)$ is shown in Eq.~(\ref{eq:thermostat}) with step size $\epsilon$:
\begin{equation}\label{eq:thermostat}
\left\{
\begin{aligned}
&\mat{x}_{i+1}\gets\ \mat{x}_i+\epsilon \mat{M}_t^{-1}\mat{q}_i\quad \\
&\mat{q}_{i+1}\gets\ \mat{q}_i -\epsilon\nabla \tilde{U}(\mat{x}_{i+1})-\epsilon\xi_i \mat{q}_i+\sqrt{2A}{\cal N}(0,\epsilon\mat{I})\\
&\xi_{i+1}\gets\ \xi_i+\frac{\epsilon}{m_\mu}(\mat{q}_{i+1}^T\mat{M}_t^{-1}\mat{q}_{i+1}-{\rm Tr}(\mat{M}_t^{-1}))
\end{aligned}
\right.
\end{equation}
where $\mat{M}_t$ refers to the mass matrix used for the $t$-th path, $i=1,2,\cdots,L$ refers to the index of steps in each path. $A$ indicates the magnitude of injected noise, $m_\mu$ is the thermal mass term, $d_0$ is the dimension of $\mat{x}$, and $T$ is in the definition of the energy function $U(x)=-T{\rm log}(p(\mat{x}|{\cal D}))$. We set $m_\mu=1$ (like in ~\citep{ma2015complete}), $A=1$ and $T=1$. We refer to the proposed method with thermostat as quantum stochastic gradient Nos{\'e}-Hoover thermostat (QSGNHT). The algorithm flow of QSGNHT is shown in Alg.~\ref{alg:sqhmc}.

\begin{comment}
However, both of them can be unified (\zz{``unified" is not a good description here. Do you mean that they are equivalent? }) under the unified framework proposed in~\citep{ma2015complete}
\end{comment}

\begin{algorithm}[t]
\SetAlgoLined
\KwResult{starting point $x_0$, step size $\epsilon$, simulation steps $L$, mass distribution parameters $\mu_m$ and $\sigma_m$, thermal mass $m_\mu$, batch size $b$, 
dimension of $\mat{x}$ is $d$, temperature $T$, diffusion strength $A$.}
% initialization \zz{Do you really need this line?}\;
 \For{$t=1,2,\cdots$}{
    Randomly select $b$ samples out of the $N$ training samples and compute $\tilde{U}(\mat{x})=\frac{1}{b}\sum_{i=1}^b U_i(\mat{x})$\; 
    Resample $\mat{M}_t\sim P_{\mat{M}}(\mat{M})$ , and resample $\mat{q}\sim {\cal N}(0,\mat{M}_t)$\;
    ($\mat{x}_0,\mat{q}_0$)=($\mat{x}^{(t)},\mat{q}^{(t)}$)\;
    Simulate dynamics based on Eq.~(\ref{eq:thermostat})\;
    $\mat{q}_0\gets \mat{q}_0-\frac{\epsilon}{2}\nabla \tilde{U}(\mat{x}_0)$\;
    \For{$i=1,\cdots,L-1$}{
    $\mat{x}_{i+1}\gets\ \mat{x}_i+\epsilon \mat{M}_t^{-1}\mat{q}_i$\;
    $\mat{q}_{i+1}\gets\ \mat{q}_i -\epsilon\nabla \tilde{U}(\mat{x}_{i+1})-\epsilon\xi_i \mat{q}_i+\sqrt{2A}{\cal N}(0,\epsilon\mat{I})$\;
    $\xi_{i+1}\gets\ \xi_i+\frac{\epsilon}{m_\mu}(\mat{q}_{i+1}^T\mat{M}_t^{-1}\mat{q}_{i+1}-{\rm Tr}(\mat{M}_t^{-1}))$\;}
    $\mat{q}_L\gets \mat{q}_L-\frac{\epsilon}{2}\nabla \tilde{U}(\mat{x}_L)$\;
    ($\hat{\mat{x}},\hat{\mat{q}}$)=($\mat{x}_L,\mat{q}_L$)\;
    M-H step: $u\sim\mathrm{Uniform[0,1]}$\;
    Define $\tilde{H}(\mat{x},\mat{q})=\tilde{U}(\mat{x})+\frac{1}{2}\mat{q}^T\mat{M}_i^{-1}\mat{q}$\;
    $\rho = e^{-\tilde{H}(\hat{\mat{x}},\hat{\mat{q}})+\tilde{H}(\mat{x}^{(t)},\mat{q}^{(t)})}$\;
    \eIf{$u<\mathrm{min}(1,\rho)$}{$(\mat{x}^{(t+1)},\mat{q}^{(t+1)})=(\hat{\mat{x}},\hat{\mat{q}})$}{$(\mat{x}^{(t+1)},\mat{q}^{(t+1)})=(\mat{x}^{(t)},\mat{q}^{(t)})$}
    }
    \KwResults{ $\{\mat{x}^{(1)},\mat{x}^{(2)},\cdots\}$}
 \caption{Quantum Stochastic Gradient Nos{\'e}-Hoover Thermostat (QSGNHT)}
 \label{alg:sqhmc}
\end{algorithm}

\subsection{Theoretical Analysis based on Stochastic Differential Equation (SDE)}

In this subsection, we prove that the QSGNHT implementation in Alg.~\ref{alg:sqhmc} indeed produces the desired posterior density $p(\mat{x} | {\cal D})$. Setting $\mat{M}(t)=m(t)\mat{I}$, we first provide the continuous-time stochastic differential equation (SDE) for QSGNHT:
\begin{equation}\label{sde_qsgnht}
    d\begin{pmatrix}
    \mat{x}\\\mat{q}\\\xi
    \end{pmatrix}=dt
    \begin{pmatrix}
    \mat{M}(t)^{-1}\mat{q}\\
    -\nabla U(\mat{x})-\xi\mat{q}+{\cal N}(0,\mat{V}(\mat{x}))\epsilon\\
    \frac{1}{m_\mu }(\mat{q}^T\mat{M}(t)^{-1}\mat{q}-{\rm Tr}(\mat{M}(t)^{-1}))
    \end{pmatrix}+
    \begin{pmatrix}
    0\\
    \sqrt{2A}d\mat{W}(t)\\
    0
    \end{pmatrix}
\end{equation}
where $\epsilon$ is the step size in the discretized dynamics. Before presenting our result in Theorem \ref{thm:qsgnht_steady_state}, we review the result for a general continuous-time Markov process in Lemma \ref{lemma:ma-paper}.

\begin{lemma}
\label{lemma:ma-paper}
(Theorem 1 in~\citep{ma2015complete}) 
A general continuous-time Markov process can be written as a stochastic differential equation (SDE) in this form:

\begin{equation}\label{eq:general_sde}
    d\mat{z} = \mat{f}(\mat{z})dt+\sqrt{2\mat{D}(\mat{z})}d\mat{W}(t)
\end{equation}
where $\mat{z}$ can be a general vector and $\mat{D}(\mat{z})$ represents the magnitude of the Wiener diffusion process. Then, $p_s(\mat{z})\propto{\rm exp}(-H(\mat{z}))$ is a steady distribution of the above SDE if $f(\mat{z})$ can be written as:
\begin{equation}\label{eq:f(z)}
    \mat{f}(\mat{z})=-(\mat{D}(\mat{z})+\mat{Q}(\mat{z}))\nabla H(\mat{z})+\mat{\Gamma} (\mat{z}),\quad \mat{\Gamma}_i(\mat{z})=\sum_{i=1}^d\frac{\partial}{\partial \mat{z}_j}(\mat{D}_{ij}(\mat{z})+\mat{Q}_{ij}(\mat{z}))
\end{equation}
where $H(\mat{z})=U(\mat{x})+g(\mat{x},\mat{q})$ is the Hamiltonian of the system, $U(\mat{x})=-{\rm log} p(\mat{x}|{\cal D})$ is the potential energy, $g(\mat{x},\mat{q})=\frac{1}{2}\mat{q}^T \mat{M}(t)^{-1}\mat{q}/2$ is the kinetic energy, $\mat{Q}(z)$ determines the deterministic transverse dynamics, $\mat{D}(\mat{z})$ is positive semidefinite, and $\mat{Q}(\mat{z})$ skew-symmetric. The steady distribution $p_s(\mat{z})$ is unique if $\mat{D}(\mat{z})$ is positive definite, or if ergodicity~\footnote{The ergodicity of a Markov process requires the coexistence of irreducibility, aperiodicity and positive recurrence. Intuitively, ergodicity means that every point in the state space can be hit within finite time with probability one.} can be shown.
\end{lemma}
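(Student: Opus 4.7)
The plan is to verify that $p_s(\mathbf{z})\propto\exp(-H(\mathbf{z}))$ is stationary by plugging into the Fokker--Planck equation (FPE) associated with Eq.~(\ref{eq:general_sde}), and then to separately address uniqueness through ergodicity. The FPE in component form reads
$$\partial_t p \;=\; -\sum_i \partial_{z_i}(f_i\,p)\;+\;\sum_{i,j}\partial_{z_i}\partial_{z_j}(D_{ij}\,p),$$
so the goal is to show that the right-hand side vanishes identically when $p=p_s$ and $\mathbf{f}=-(\mathbf{D}+\mathbf{Q})\nabla H+\mathbf{\Gamma}$.

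The key algebraic step is to use $\nabla p_s=-p_s\nabla H$ to rewrite both the drift and the diffusion pieces as divergences of closely related fluxes, and then to pair them off. I would expand the diffusion term as
$$\partial_i\partial_j(D_{ij}p_s)=\partial_i\bigl[(\partial_j D_{ij})p_s\bigr]+\partial_i\bigl[D_{ij}\partial_j p_s\bigr]=\partial_i\bigl[(\partial_j D_{ij})p_s\bigr]-\partial_i\bigl[D_{ij}(\partial_j H)p_s\bigr].$$
The second summand cancels exactly against the $-(\mathbf{D}\nabla H)$ part of the drift when substituted into the continuity term. For the $\mathbf{Q}$-part, a parallel expansion of $\partial_i(Q_{ij}p_s\partial_j H)$ produces three pieces: one that vanishes by skew-symmetry of $\mathbf{Q}$ paired with the symmetric Hessian $\partial_i\partial_j H$, one that vanishes because $(\nabla H)^T\mathbf{Q}\nabla H=0$, and a residual $\partial_i Q_{ij}\,p_s\,\partial_j H$ divergence. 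Summing the leftover divergences coming from both $\mathbf{D}$ and $\mathbf{Q}$ gives precisely $\nabla\cdot(\mathbf{\Gamma}p_s)$ with $\Gamma_i=\sum_j\partial_j(D_{ij}+Q_{ij})$; this is exactly the correction included in $\mathbf{f}$, so it cancels the corresponding drift term. Every contribution to $\partial_t p_s$ is thereby accounted for, proving stationarity.

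For uniqueness, the approach splits by case. When $\mathbf{D}(\mathbf{z})$ is strictly positive definite, the FPE generator is uniformly elliptic and the standard Has'minskii-type argument gives a unique invariant density once a Lyapunov function (for instance $H(\mathbf{z})$ itself, assuming $H\to\infty$ at infinity) certifies positive recurrence. When $\mathbf{D}$ is only positive semidefinite, which is the relevant case for QSGNHT since noise acts only on the momentum component, I would fall back on the same ergodicity conditions invoked in Lemma~\ref{lemma:time-dependent-hmc}: irreducibility and aperiodicity from the (hypo)elliptic diffusion on $\mathbf{q}$ coupled through the Hamiltonian flow to $\mathbf{x}$ and $\xi$, and positive recurrence from the coercivity of $H$.

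The main obstacle is the index-level bookkeeping in the second step: there are several divergence terms coming from the double derivative $\partial_i\partial_j(D_{ij}p_s)$, from the $\mathbf{Q}$-rewriting, and from the $\mathbf{\Gamma}$ correction, and each must be paired with exactly the right partner for the cancellation to go through. The elegance, and the content, of the result is that $\mathbf{\Gamma}$ is defined in precisely the unique way that makes these residual divergences cancel; getting this cancellation right requires careful use of the symmetry of mixed partials together with the symmetry (resp.\ skew-symmetry) of $\mathbf{D}$ (resp.\ $\mathbf{Q}$), and any sign slip here would break the argument.
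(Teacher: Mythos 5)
Your proposal is correct and follows essentially the same route as the paper: both verify stationarity of $p_s\propto e^{-H}$ by substitution into the Fokker--Planck equation and obtain uniqueness from ergodicity (irreducibility and aperiodicity from the diffusion noise, positive recurrence from coercivity of $H$). The only difference is presentational: the paper quotes the compact divergence form $\partial_t p=\nabla^T\cdot\bigl([\mat{D}+\mat{Q}][p\nabla H+\nabla p]\bigr)$ from Ma et al.\ and observes that the bracket vanishes at $p_s$, whereas your index-level cancellation (skew-symmetry against the symmetric Hessian, $(\nabla H)^T\mat{Q}\nabla H=0$, and the residual divergences matching $\mat{\Gamma}$) is precisely the derivation of that identity.
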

\begin{proof}
The proof is based on~\citep{ma2015complete}. According to Eq.~(\ref{eq:general_sde}) we have a corresponding Fokker-Planck equation to describe the evolution of the probability density:
\begin{equation}\label{eq:fokker-planck}
    \partial_t p(\mat{z},t)=-\sum_i\frac{\partial}{\partial \mat{z}_i}(\mat{f}_i(\mat{z})p(\mat{z},t))+\sum_{i,j} \frac{\partial^2}{\partial \mat{z}_i\partial \mat{z}_j}(\mat{D}_{ij}(\mat{z})p(\mat{z},t))
\end{equation}
Eq. (\ref{eq:fokker-planck}) can be further written in a more compact form:
\begin{equation}
    \partial_t p(\mat{z},t)=\nabla^T\cdot([\mat{D}(\mat{z})+\mat{Q}(\mat{z})][p(\mat{z},t)\nabla H(\mat{z})+\nabla p(\mat{z},t)])
\end{equation}
We are able to verify $p_s(\mat{x},\mat{q},\xi)\propto{\rm exp}(-H(\mat{x},\mat{q},\xi))$ is invariant under Eq. (\ref{eq:fokker-planck}) by calculating $[e^{-H(\mat{z})}\nabla H(\mat{z})+\nabla e^{-H(\mat{z})}]=0$. If the process if ergodic, then the stationary distribution is unique. The ergodicity of the Markov process requires three conditions: (a) irreducibility (b) aperiodicity (c) positive recurrence. Irreducibility and aperiodicity can be guaranteed by non-zero diffusion noises, while positive recurrence is satisfied if $U(\mat{x})\to\infty$ when $|\mat{x}|\to\infty$.
\end{proof}

\begin{lemma}\label{lemma:qsgnht-M0}
Consider a system described by Eq.~(\ref{sde_qsgnht}) but with deterministic constant mass $\mat{M}(t)=\mat{M}_0$, then the steady distribution is unique and proportional to ${\rm exp}(-U(\mat{x}))$.
\end{lemma}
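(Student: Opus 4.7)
The plan is to show that the QSGNHT dynamics with constant mass $\mat{M}(t) = \mat{M}_0$ fits into the general SDE framework of Lemma \ref{lemma:ma-paper} applied to the augmented state $\mat{z} = (\mat{x}^T, \mat{q}^T, \xi)^T$, and then invoke that lemma directly. First I would drop the stochastic-gradient noise term ${\cal N}(0, \mat{V}(\mat{x}))\epsilon$, which vanishes in the continuous-time limit $\epsilon \to 0$, and read off the diffusion matrix $\mat{D}(\mat{z})$ from the $\sqrt{2A}\,d\mat{W}$ term: it has a single nonzero block $A\mat{I}$ in the $\mat{q}$-$\mat{q}$ slot and zeros elsewhere.

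Next I would propose the candidate Hamiltonian
\begin{equation}
H(\mat{x}, \mat{q}, \xi) = U(\mat{x}) + \tfrac{1}{2} \mat{q}^T \mat{M}_0^{-1} \mat{q} + \tfrac{m_\mu}{2}(\xi - A)^2,
\end{equation}
chosen so that $\int e^{-H}\,d\mat{q}\,d\xi \propto e^{-U(\mat{x})}$ delivers the desired posterior after marginalization. The remaining task is to exhibit a skew-symmetric $\mat{Q}(\mat{z})$ for which $-(\mat{D} + \mat{Q})\nabla H + \mat{\Gamma}$ reproduces the deterministic drift in (\ref{sde_qsgnht}). I would build $\mat{Q}$ from two blocks: a constant symplectic coupling between $\mat{x}$ and $\mat{q}$ (the usual identity swapping positions and momenta, which generates the Hamiltonian drift $\mat{M}_0^{-1}\mat{q}$ and $-\nabla U(\mat{x})$); and a state-dependent coupling between $\mat{q}$ and $\xi$ with entries proportional to $\mat{q}/m_\mu$, which simultaneously generates the friction $-\xi \mat{q}$ in the momentum equation and the fluctuation term $\mat{q}^T \mat{M}_0^{-1} \mat{q}/m_\mu$ in the thermostat equation. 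The constant offset in the thermostat, $-\mathrm{Tr}(\mat{M}_0^{-1})/m_\mu$, should emerge from the divergence correction $\mat{\Gamma}_\xi = \sum_k \partial_{q_k} \mat{Q}_{\xi,q_k}$.

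With this identification in hand, Lemma \ref{lemma:ma-paper} immediately yields the invariant density $p_s(\mat{x}, \mat{q}, \xi) \propto \exp(-H)$, and marginalization over $\mat{q}$ and $\xi$ leaves $p_s(\mat{x}) \propto \exp(-U(\mat{x}))$, as claimed. Uniqueness will follow from the ergodicity clause of Lemma \ref{lemma:ma-paper}: the Wiener noise on $\mat{q}$ provides irreducibility and aperiodicity, while $U(\mat{x}) \to \infty$ as $\|\mat{x}\| \to \infty$ combined with the quadratic confinement of $\mat{q}$ and $\xi$ inside $H$ yields positive recurrence. The main obstacle is the bookkeeping inside $\mat{Q}$: its $\mat{q}/m_\mu$ block must be tuned so that the friction term has exactly the right sign and magnitude to absorb the $A\mat{M}_0^{-1}\mat{q}$ contribution coming from $\mat{D}\nabla H$ and produce a clean $-\xi \mat{q}$, while the divergence $\mat{\Gamma}_\xi$ simultaneously delivers the correct $\mathrm{Tr}(\mat{M}_0^{-1})$ offset in the thermostat equation. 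This is a delicate but routine matching of coefficients, and it is the only place where real computation is needed.
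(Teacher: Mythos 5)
Your overall strategy is the same as the paper's: cast the constant-mass dynamics as an instance of Lemma~\ref{lemma:ma-paper} with $\mat{z}=(\mat{x},\mat{q},\xi)$, identify $H$, $\mat{D}$ and a skew-symmetric $\mat{Q}$ whose $(\mat{x},\mat{q})$ block is the standard symplectic coupling and whose $(\mat{q},\xi)$ block is linear in $\mat{q}/m_\mu$, obtain the trace offset from the divergence term $\mat{\Gamma}$, and then marginalize. However, there is one genuine gap: you discard the mini-batch gradient noise ${\cal N}(0,\mat{V}(\mat{x}))\epsilon$ on the grounds that it vanishes as $\epsilon\to 0$. The lemma is about the system in Eq.~(\ref{sde_qsgnht}) as written, in which $\epsilon$ is a fixed, finite step-size parameter carried into the continuous SDE precisely because the whole point of the thermostat is to absorb an unknown, state-dependent gradient noise without estimating it. By dropping that term you have only verified the exact-gradient case, which the plain Hamiltonian-plus-friction analysis already covers, and the lemma would no longer support the stochastic-gradient algorithm it is meant to justify. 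The paper instead keeps the noise: it enlarges the $\mat{q}$-$\mat{q}$ block of the diffusion matrix to $A+\frac{\epsilon}{2}V(\mat{x})$ and correspondingly shifts the thermostat equilibrium in the Hamiltonian to $\frac{m_\mu}{2}\bigl(\xi-A-\frac{V(\mat{x})\epsilon}{2}\bigr)^2$, so that $\xi$ self-adjusts to the total (injected plus mini-batch) noise level. Your candidate $H$ with the term $\frac{m_\mu}{2}(\xi-A)^2$ is therefore not invariant for the actual dynamics of Eq.~(\ref{sde_qsgnht}) unless $V(\mat{x})\equiv 0$.

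On the part you defer as ``delicate but routine'': your own accounting already exposes that the $-\mat{D}\nabla_{\mat{q}}H$ contribution is $-A\mat{M}_0^{-1}\mat{q}$ while the $(\mat{q},\xi)$ block of $\mat{Q}$ produces a compensating $+A\mat{q}$-type term, and these cancel cleanly only when $\mat{M}_0=\mat{I}$ (equivalently, only after rescaling the momentum). For general $\mat{M}_0$ the coefficient matching you postpone is where the argument actually needs care, so it should be carried out rather than asserted; the paper is likewise terse here, but a complete proof must either restrict to $\mat{M}_0\propto\mat{I}$, change variables to whiten the momentum first, or choose the blocks of $\mat{Q}$ and the quadratic form in $H$ consistently so that the drift in Eq.~(\ref{eq:f(z)}) reproduces Eq.~(\ref{sde_qsgnht}) exactly.
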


\begin{proof}
 Compare Eq.~(\ref{sde_qsgnht}) with Eq.~(\ref{eq:general_sde}) and (\ref{eq:f(z)}) we can determine the corresponding Hamiltonian and coefficient matrices $\mat{D}$ and $\mat{Q}$:
\begin{equation}
    \begin{aligned}
    H(\mat{x},\mat{q},\xi)=&U(\mat{x})+\frac{1}{2}\mat{q}^T\mat{M}_0^{-1}\mat{q}+\frac{m_\mu}{2}(\xi-A-\frac{V(\mat{x})\epsilon}{2})^2\\
    \mat{D}(\mat{x},\mat{q},\xi)=
    \begin{pmatrix}
    0 & 0 & 0\\
    0 & A+\frac{V(\mat{x})\epsilon}{2} & 0\\
    0 & 0 & 0\\
    \end{pmatrix}, & \quad
    \mat{Q}(\mat{x},\mat{q},\xi)=
    \begin{pmatrix}
    0 & -\mat{I} & 0\\
    \mat{I} & 0 & \mat{M}_0^{-1}\mat{q}/m_\mu\\
    0 & -\mat{q}^T\mat{M}_0^{-1}/m_\mu & 0\\
    \end{pmatrix}.
    \end{aligned}
\end{equation}
Obviously $\mat{D}(\mat{x},\mat{q},\xi)$ is positive semi-definite and $\mat{Q}(\mat{x},\mat{q},\xi)$ is skew-symmetric. By denoting $\mat{z}=(\mat{x},\mat{q},\xi)$, we know that there exists a steady distribution $p_s(\mat{z})\propto{\rm exp}(-H(\mat{z}))={\rm exp}(-U(\mat{x})-\frac{1}{2}\mat{q}^T\mat{M}_0^{-1}\mat{q}-\frac{1}{2}m_\mu(\xi-A-\frac{V(\mat{x})\epsilon}{2})^2)$. Due to non-zero diffusion error $A>0$ in the system, $p_s(\mat{x},\mat{q},\xi)$ is the unique steady distribution. Marginalizing over $\mat{q}$ and $\xi$, one obtains $p_s(\mat{x})\propto {\rm exp}(-U(\mat{x}))$.
\end{proof}

\begin{figure}[t]
        \centering
        \begin{subfigure}[b]{0.32\textwidth}
            \centering
            \includegraphics[trim = 55mm 102mm 60mm 102mm, clip,width=\textwidth]{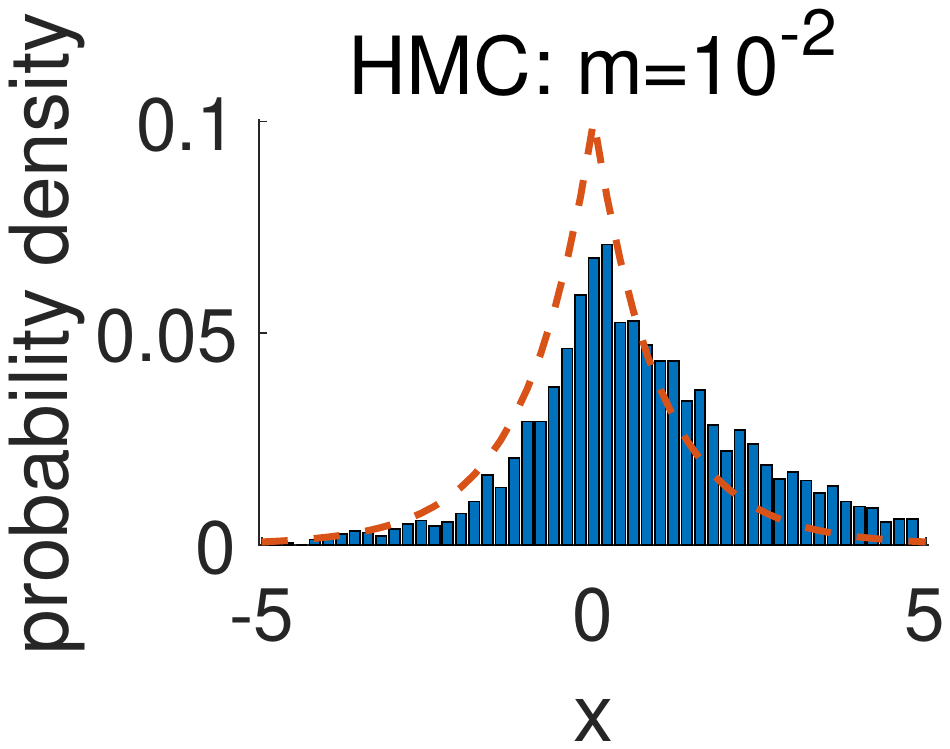}
             \caption[]%
            {{\small}}    
        \end{subfigure}
        \hfill
        \begin{subfigure}[b]{0.32\textwidth}  
            \centering 
            \includegraphics[trim = 55mm 102mm 60mm 102mm, clip,width=\textwidth]{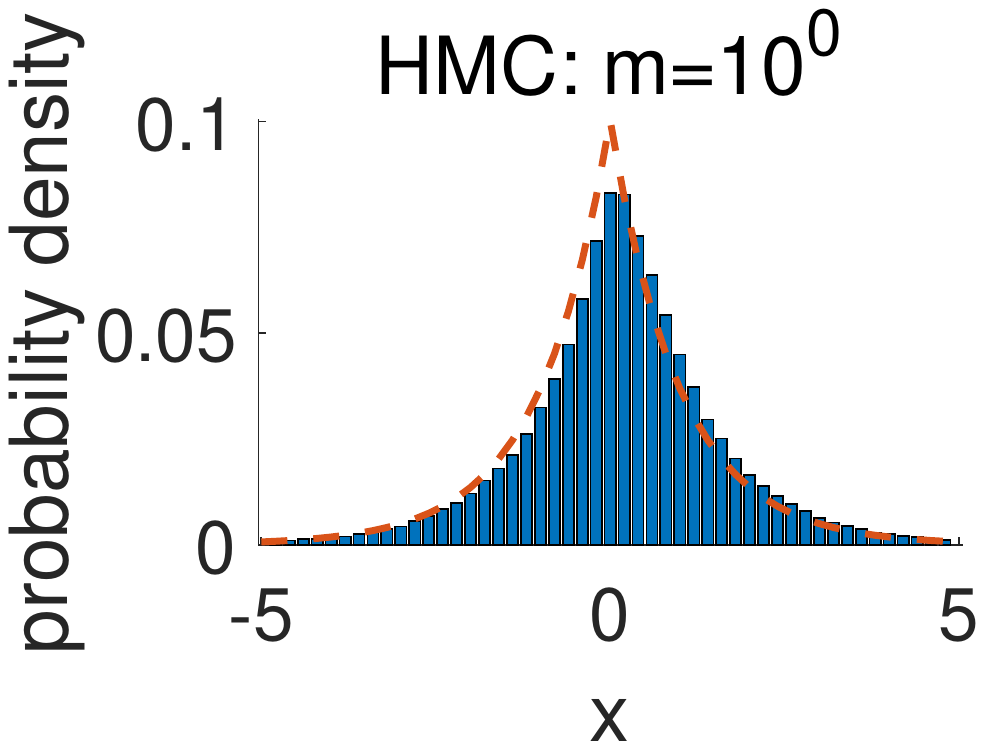}
             \caption[]%
            {{\small}}    
        \end{subfigure}
        \hfill
        \begin{subfigure}[b]{0.32\textwidth}  
            \centering 
            \includegraphics[trim = 55mm 102mm 60mm 102mm, clip,width=\textwidth]{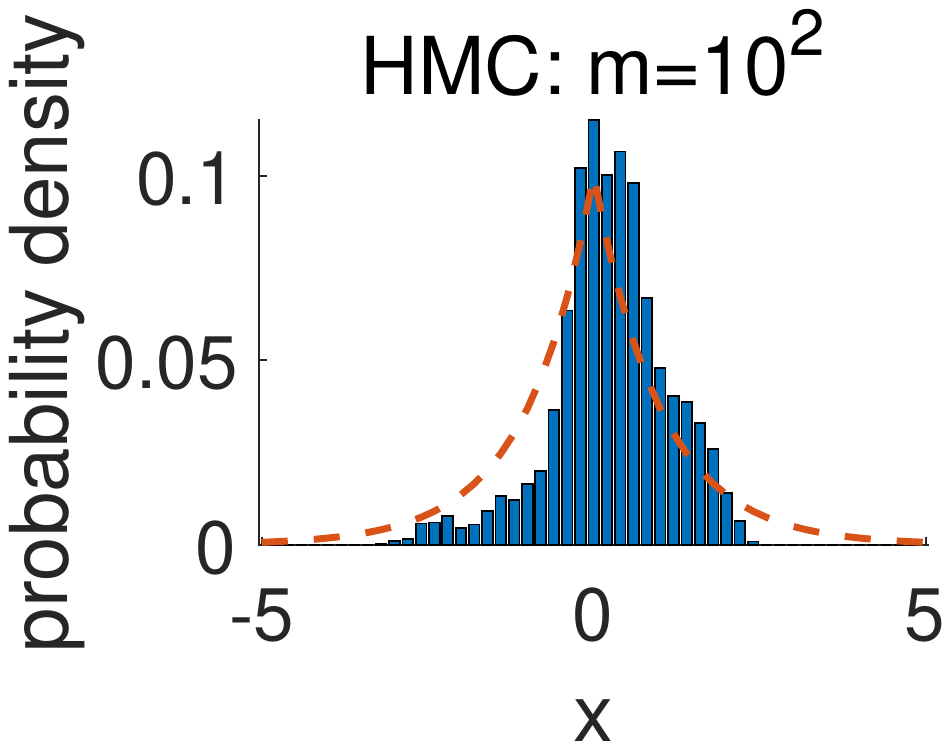}
             \caption[]%
            {{\small}}    
        \end{subfigure}
        \vskip\baselineskip
        \begin{subfigure}[b]{0.32\textwidth}   
            \centering 
            \includegraphics[trim = 55mm 102mm 60mm 102mm, clip,width=\textwidth]{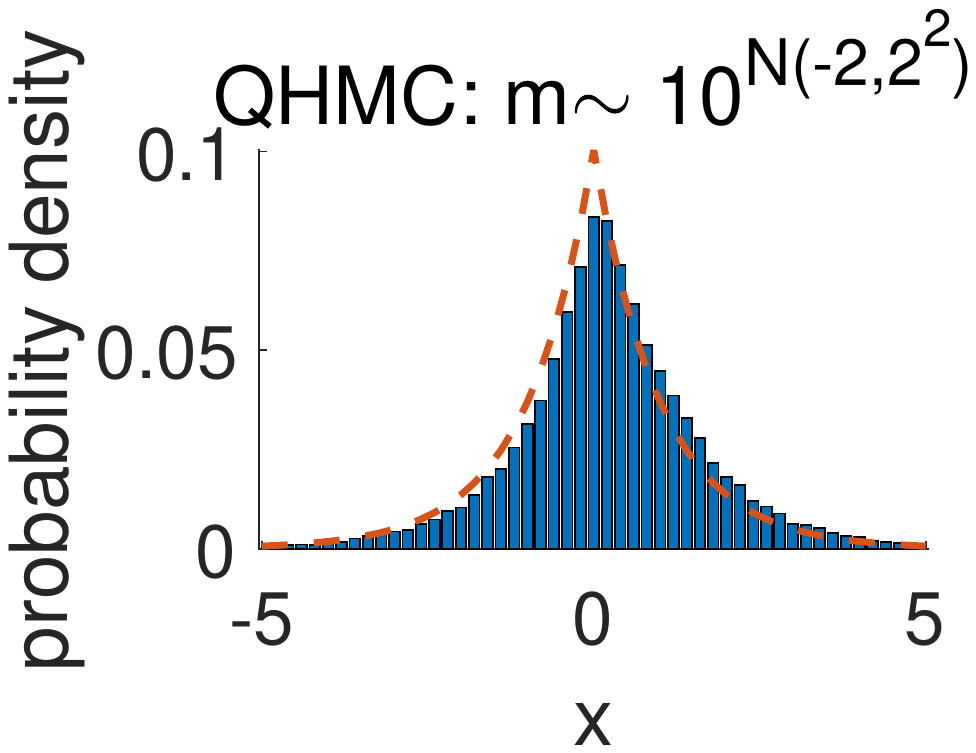}
             \caption[]%
            {{\small}}    
        \end{subfigure}
        \hfill
        \begin{subfigure}[b]{0.32\textwidth}   
            \centering 
            \includegraphics[trim = 55mm 102mm 60mm 102mm, clip,width=\textwidth]{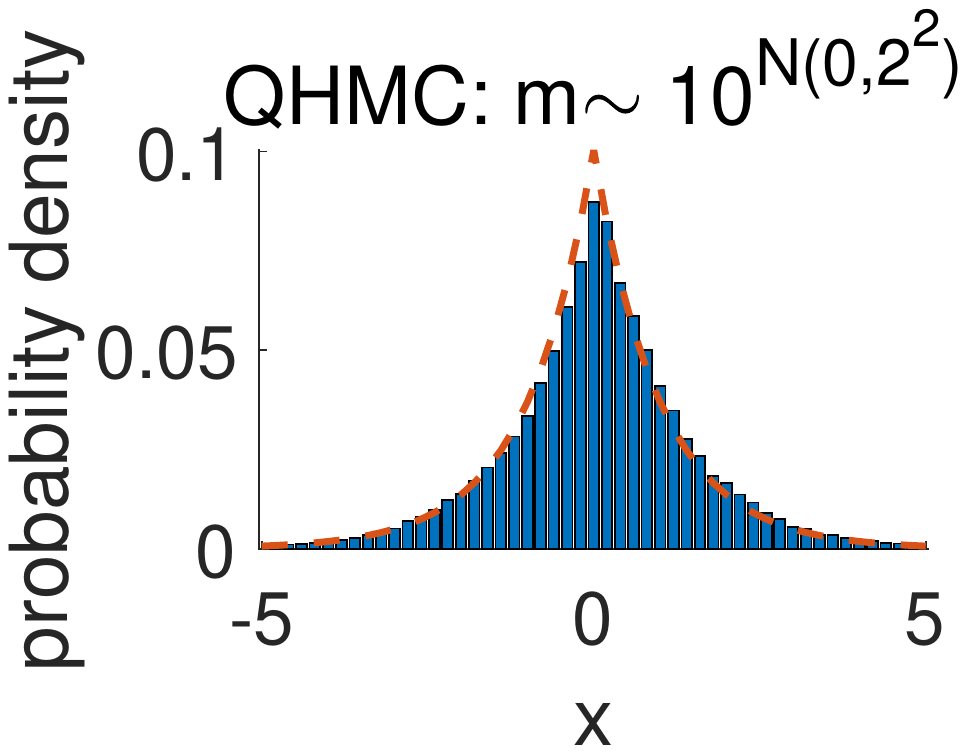}
             \caption[]%
            {{\small}}    
        \end{subfigure}
        \hfill
        \begin{subfigure}[b]{0.32\textwidth}   
            \centering 
            \includegraphics[trim = 55mm 102mm 60mm 102mm, clip,width=\textwidth]{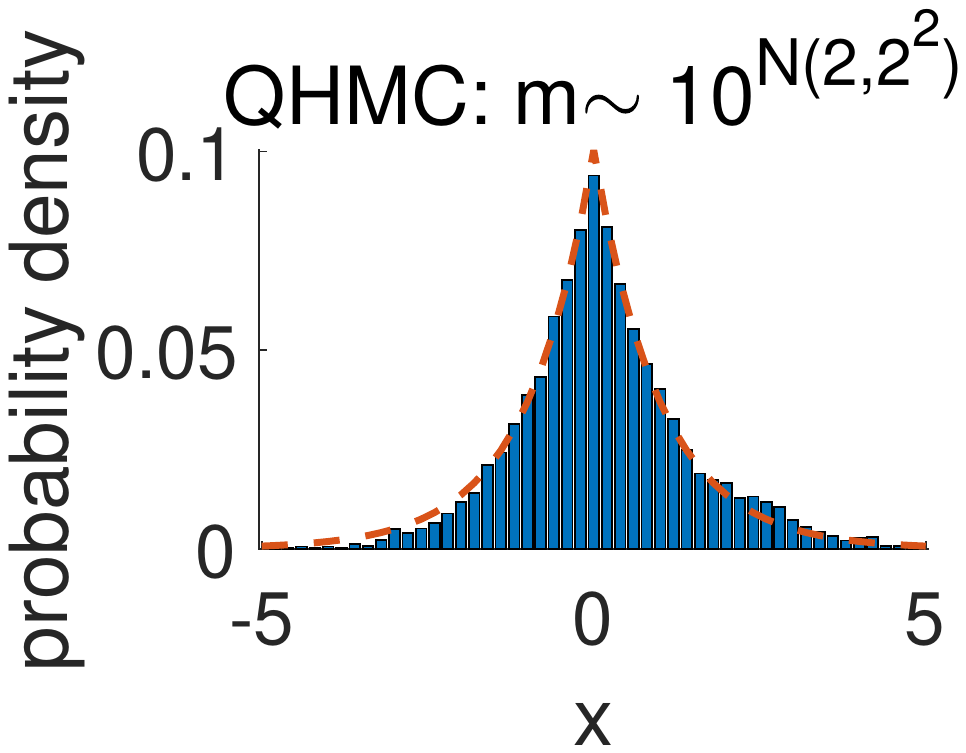}
             \caption[]%
            {{\small}}    
        \end{subfigure}
        \caption[ The average and standard deviation of critical parameters ]
        {\small The results of HMC and QHMC for the energy function $U(x)=|x|$. The red dashed line: true distribution; blue histogram: the probability obtained with 200000 simulation samples. (a)-(c) HMC is sensitive to the choice of mass; (d)-(f) QHMC works well for a large range of mass parameters.}
        \label{fig: lp_1}
        \vspace{-10pt}
    \end{figure}
    
Similar to the second proof of Theorem \ref{thm:qhmc-sd}, we have Theorem \ref{thm:qsgnht_steady_state} stated below.

\begin{theorem}\label{thm:qsgnht_steady_state}
The Markov process in Eq.~(\ref{sde_qsgnht}) has a unique marginal steady distribution $p_s(\mat{x})\propto {\rm exp}(-U(\mat{x}))$.
\end{theorem}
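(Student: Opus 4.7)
The plan is to mimic the Bayes-rule argument used for the second proof of Theorem \ref{thm:qhmc-sd}, but now applied to the augmented state $(\mat{x},\mat{q},\xi)$ of the thermostat process rather than $(\mat{x},\mat{q})$, using Lemma \ref{lemma:qsgnht-M0} as the conditional building block. The guiding observation is that along any path of Alg.~\ref{alg:sqhmc}, the mass matrix $\mat{M}$ is frozen to a value sampled i.i.d.\ from $P_\mat{M}(\mat{M})$, and the SDE~\eqref{sde_qsgnht} restricted to that path is exactly the constant-mass SDE handled by Lemma \ref{lemma:qsgnht-M0}.

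First, I would promote $\mat{M}$ to an additional (piecewise-constant in $t$) random variable with its own stationary distribution $P_\mat{M}(\mat{M})$, which is independent of $(\mat{x},\mat{q},\xi)$ by construction of the algorithm. Conditioning on $\mat{M}$, Lemma \ref{lemma:qsgnht-M0} gives the unique conditional steady distribution
\begin{equation}
p_s(\mat{x},\mat{q},\xi\mid\mat{M})\;\propto\;\exp\!\left(-U(\mat{x})-\tfrac{1}{2}\mat{q}^T\mat{M}^{-1}\mat{q}-\tfrac{m_\mu}{2}\bigl(\xi-A-\tfrac{V(\mat{x})\epsilon}{2}\bigr)^{2}\right).
\end{equation}
By the Bayes rule the joint steady density factors as $p_s(\mat{x},\mat{q},\xi,\mat{M})=p_s(\mat{x},\mat{q},\xi\mid\mat{M})\,P_\mat{M}(\mat{M})$, and marginalizing over $\mat{q}$, $\xi$ and then $\mat{M}$ yields $p_s(\mat{x})\propto\exp(-U(\mat{x}))$, since the $\mat{q}$-integral contributes only a $\mat{M}$-dependent normalization constant $(\det\mat{M})^{1/2}$, the $\xi$-integral contributes a constant, and $P_\mat{M}$ normalizes to one.

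For uniqueness, I would argue ergodicity exactly as in Lemma \ref{lemma:qsgnht-M0}: the nonzero diffusion coefficient $A>0$ gives irreducibility and aperiodicity on each constant-mass segment, the i.i.d.\ resampling of $\mat{M}$ at the beginning of each path keeps the $\mat{M}$-component ergodic with respect to $P_\mat{M}$, and the standard coercivity assumption $U(\mat{x})\to\infty$ as $|\mat{x}|\to\infty$ delivers positive recurrence. The three conditions together imply that the joint chain on $(\mat{x},\mat{q},\xi,\mat{M})$ is ergodic, so the conditional steady distribution identified above is the unique one, and the marginal $p_s(\mat{x})\propto\exp(-U(\mat{x}))$ is therefore unique as well.

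The main obstacle I anticipate is the interplay between the continuous-time SDE within a path and the discrete i.i.d.\ resampling of $\mat{M}$ between paths: one has to justify that the product distribution $p_s(\mat{x},\mat{q},\xi\mid\mat{M})\,P_\mat{M}(\mat{M})$ really is invariant for the overall piecewise process and not merely on each segment. This is handled exactly as in Corollary \ref{cor:qhmc_sd}: because the conditional stationary density has the same $\mat{x}$-marginal $\propto\exp(-U(\mat{x}))$ for every realization of $\mat{M}$, the resampling step leaves the $\mat{x}$-marginal invariant, and ergodicity within each segment ensures convergence to this marginal, making the argument rigorous.
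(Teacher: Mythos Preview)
Your proposal is correct and follows essentially the same approach as the paper: condition on $\mat{M}$, invoke Lemma~\ref{lemma:qsgnht-M0} to obtain $p_s(\mat{x},\mat{q},\xi\mid\mat{M})$, apply the Bayes rule $p_s(\mat{x},\mat{q},\xi,\mat{M})=p_s(\mat{x},\mat{q},\xi\mid\mat{M})P_\mat{M}(\mat{M})$, and marginalize over $\mat{q},\xi,\mat{M}$. Your treatment is in fact more careful than the paper's, which omits the explicit ergodicity discussion and the justification that the resampling of $\mat{M}$ preserves the $\mat{x}$-marginal.
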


\begin{proof}
From lemma \ref{lemma:qsgnht-M0} we know that given a constant mass matrix $\mat{M}$, we have
\begin{equation}
    p_s(\mat{x},\mat{q},\xi|\mat{M})\propto {\rm exp}(-U(\mat{x})){\rm exp}(-\frac{1}{2}\mat{p}^T\mat{M}^{-1}\mat{p}){\rm exp}(-\frac{1}{2}m_\mu(\xi-A-\frac{V(\mat{x})\epsilon}{2})^2).
\end{equation}
Employing the Bayes rule $p_s(\mat{x},\mat{q},\xi,\mat{M})=p_s(\mat{x},\mat{q},\xi|\mat{M})P_\mat{M}(\mat{M})$, and marginalizing the joint distribution over $\mat{q},\xi,\mat{M}$ we have
\begin{equation}
    p_s(\mat{x})=\int_{\mat{q}}\int_{\xi}\int_{\mat{M}}\ d\mat{q}\ d\xi\ d{\mat{M}}\ p_s(\mat{x},\mat{q},\xi,\mat{M})\propto {\rm exp}(-U(\mat{x})).
\end{equation}
\end{proof}

\section{Numerical Experiments and Applications}\label{sec:exp}

This section verifies our proposed methods by several synthetic examples and some machine learning tasks such as sparse bridge regression, image denoising, and neural network pruning. Our default implementation of QHMC is the scalar QHMC (i.e., S-QHMC in Section~\ref{sec:discuss}) unless stated explicitly otherwise. The number of simulation steps and the step size are set as $L=5$ and $\epsilon=0.03$ in QHMC, if not stated explicitly otherwise~\footnote{We use a small $L$ and a large $\epsilon$ due to the efficiency consideration. Changing mass in QHMC is equivalent to implicitly changing $\epsilon$ and $L$ in HMC, therefore $\epsilon$ and $L$ can be chosen with lots of freedom.}. Our codes are implemented in {\tt MATLAB} and {\tt Python}, and all experiments are run in a computer with 4-core 2.40 GHz CPU and 8.0G memory. All codes are available at \url{https://github.com/KindXiaoming/QHMC}.

\subsection{Synthetic Examples}
\subsubsection{One-Dimensional $\ell_p$ Norm}

\begin{figure}[t]
        \centering
        \begin{subfigure}[b]{0.32\textwidth}
            \centering
            \includegraphics[trim = 35mm 85mm 30mm 85mm, clip,width=\textwidth]{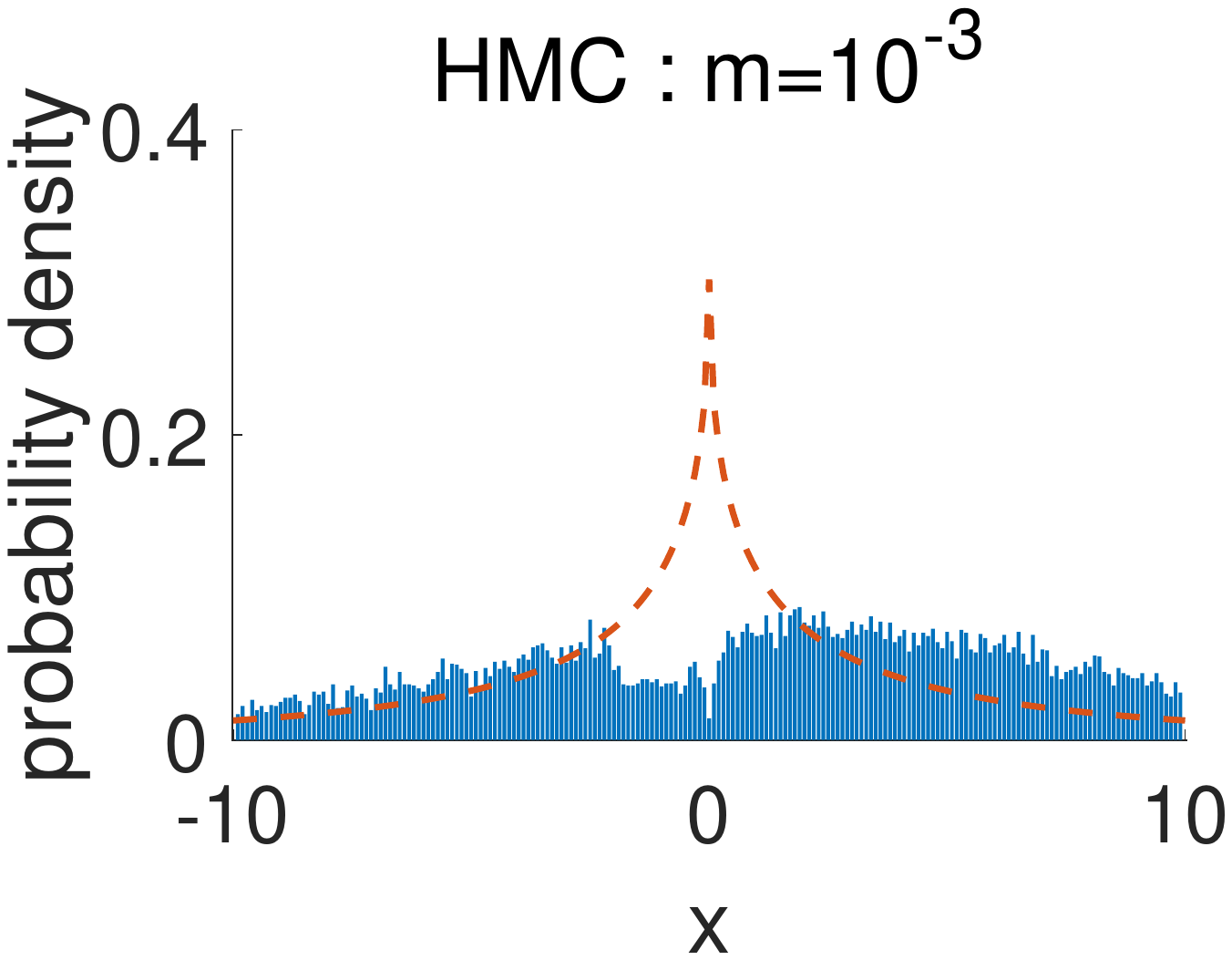}
            \caption[Network2]%
            {{\small}}    
        \end{subfigure}
        \hfill
        \begin{subfigure}[b]{0.32\textwidth}  
            \centering 
            \includegraphics[trim = 35mm 85mm 30mm 85mm, clip,width=\textwidth]{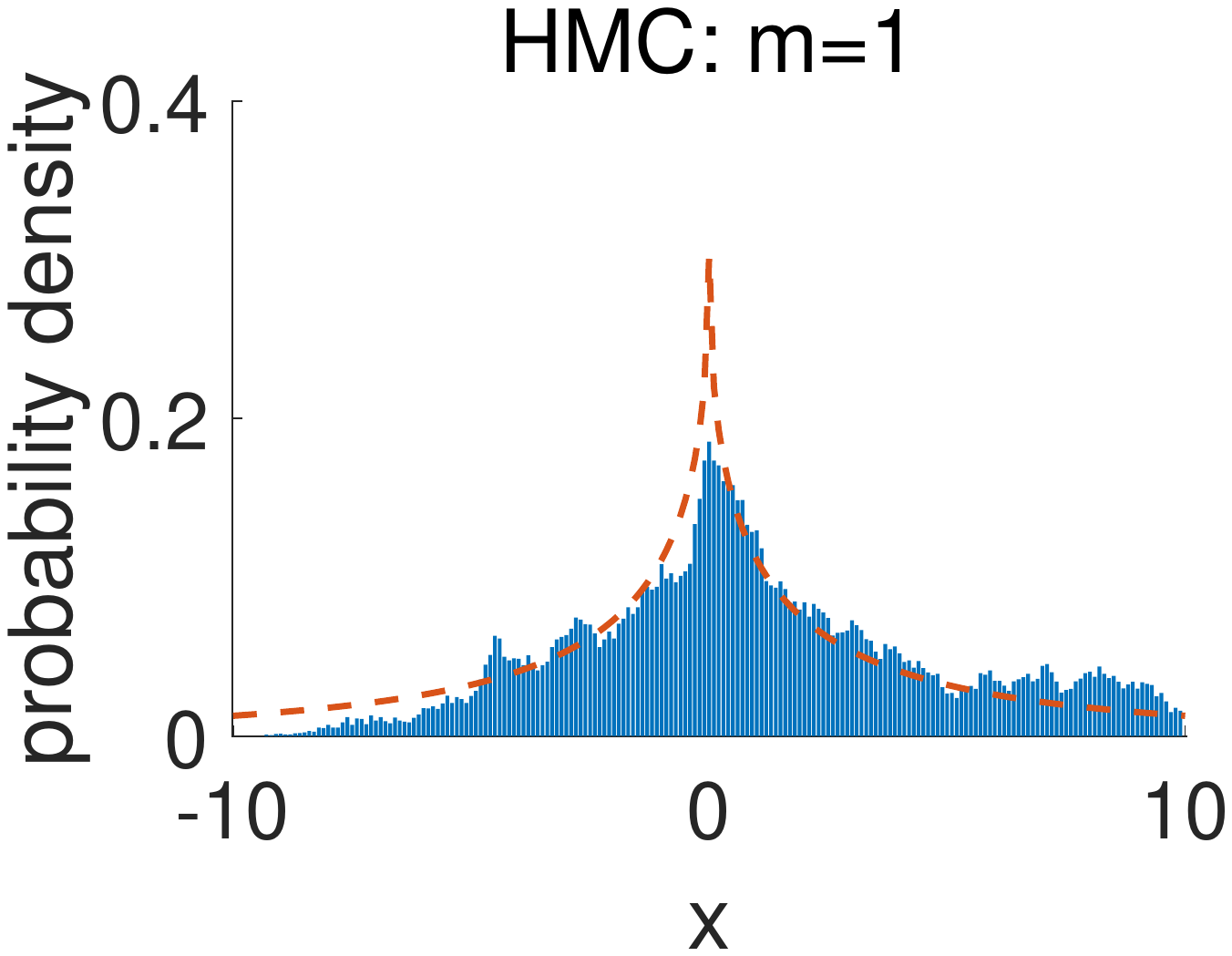}
            \caption[]%
            {{\small}}    
        \end{subfigure}
        \hfill
        \begin{subfigure}[b]{0.32\textwidth}  
            \centering 
            \includegraphics[trim = 35mm 85mm 30mm 85mm, clip,width=\textwidth]{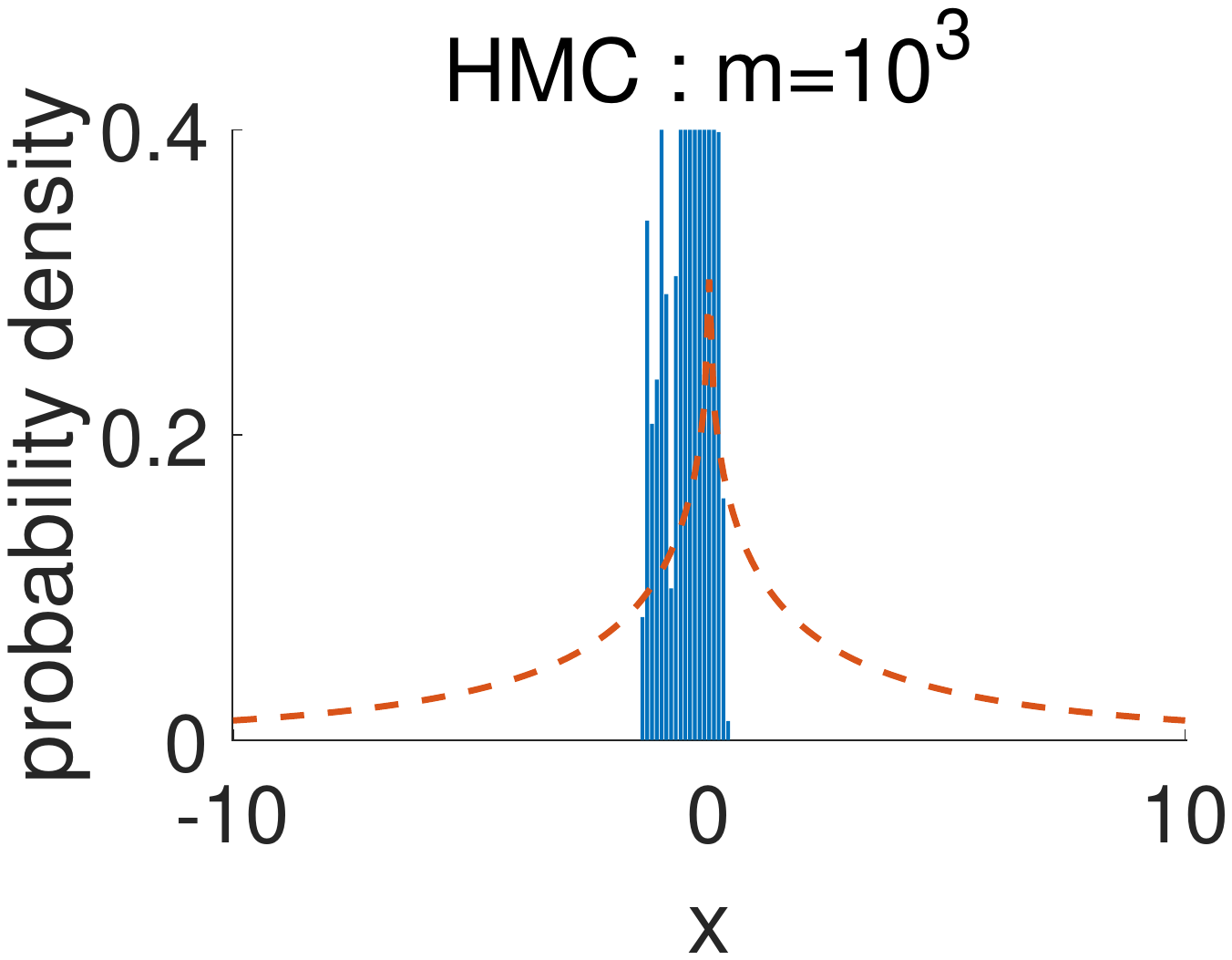}
            \caption[]%
            {{\small}}    
        \end{subfigure}
        \vskip\baselineskip
        \begin{subfigure}[b]{0.32\textwidth}   
            \centering 
            \includegraphics[trim = 35mm 85mm 30mm 85mm, clip,width=\textwidth]{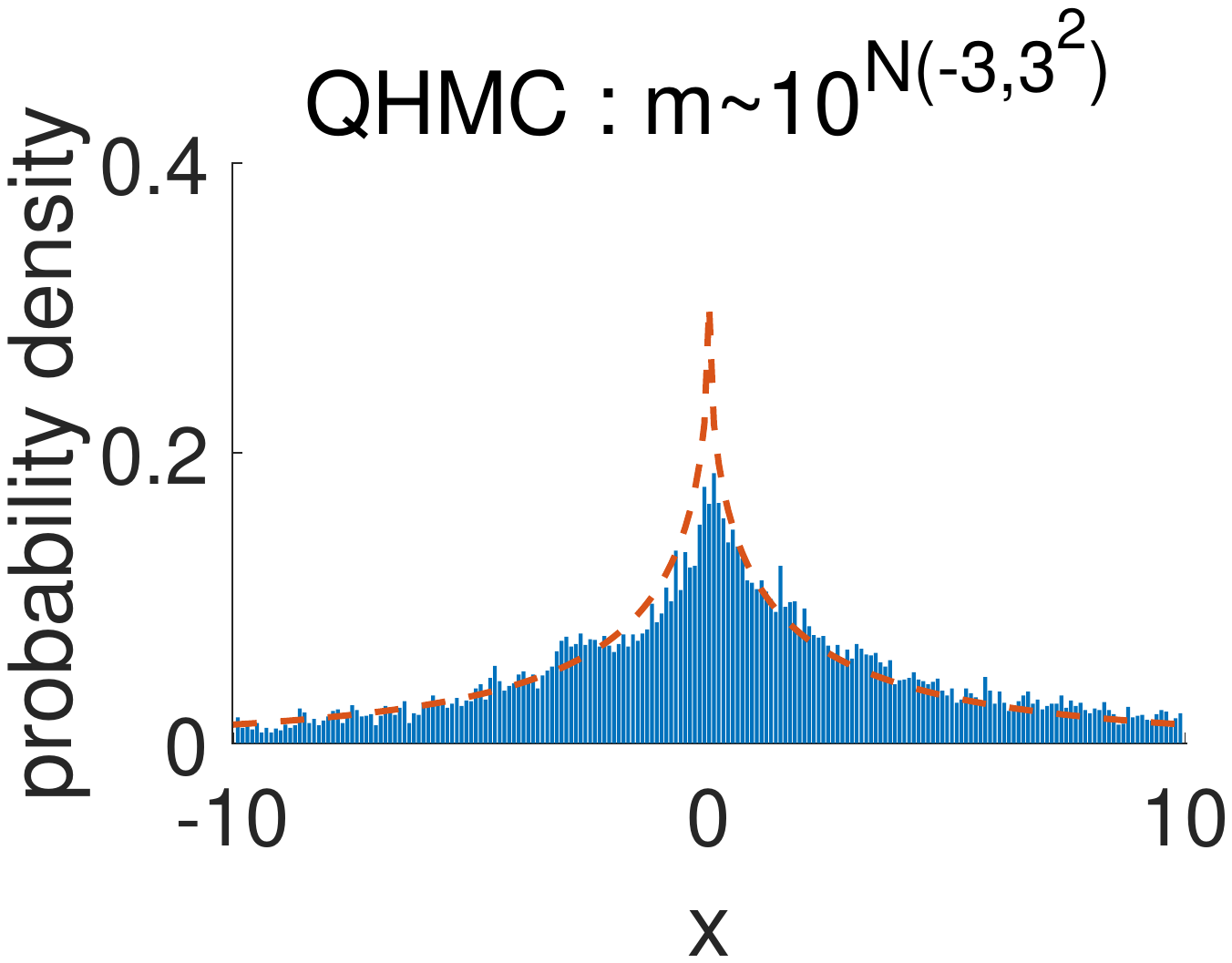}
            \caption[]%
            {{\small}}    
        \end{subfigure}
        \hfill
        \begin{subfigure}[b]{0.32\textwidth}   
            \centering 
            \includegraphics[trim = 35mm 85mm 30mm 85mm, clip,width=\textwidth]{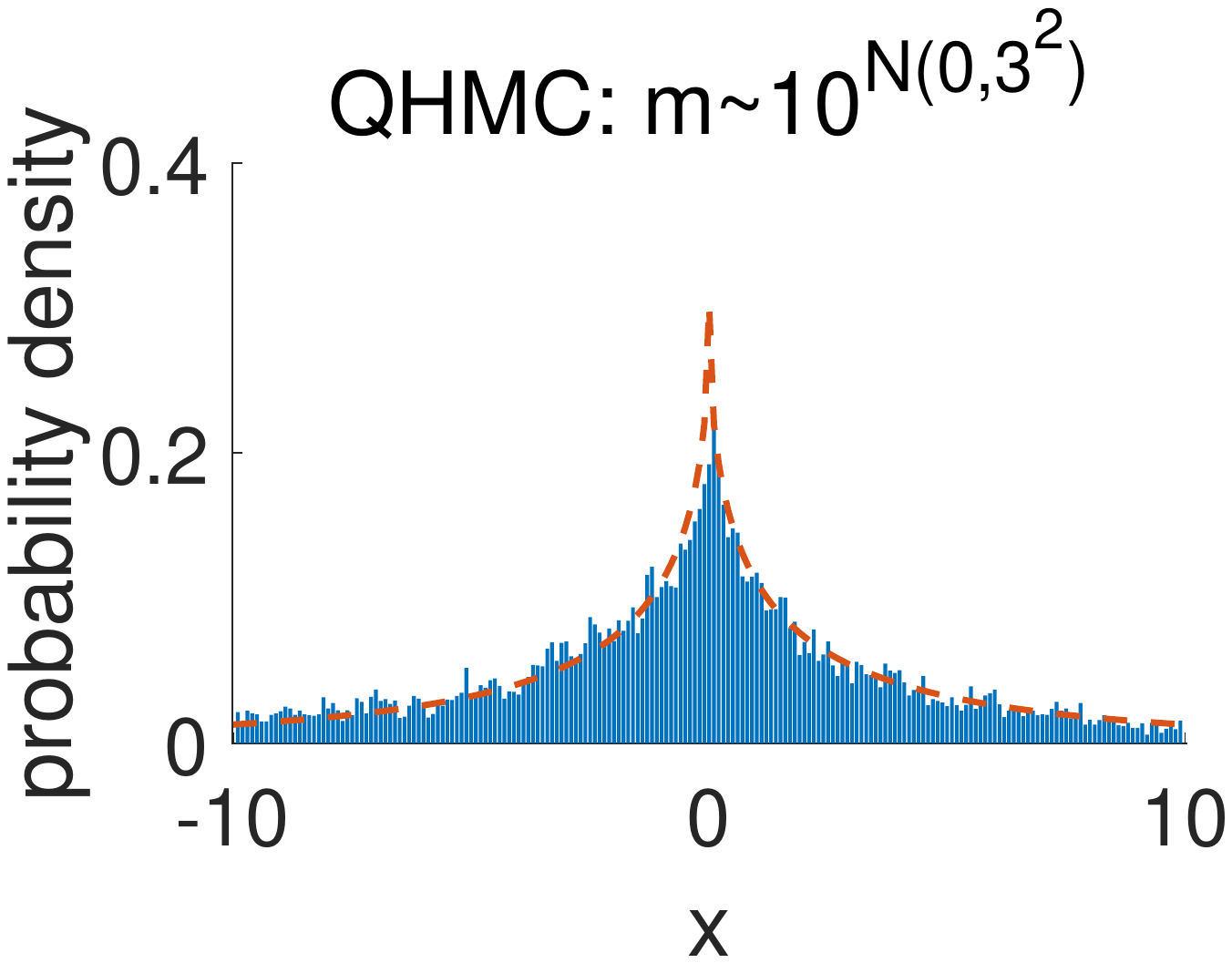}
            \caption[]%
            {{\small}}    
        \end{subfigure}
        \hfill
        \begin{subfigure}[b]{0.32\textwidth}   
            \centering 
            \includegraphics[trim = 35mm 85mm 30mm 85mm, clip,width=\textwidth]{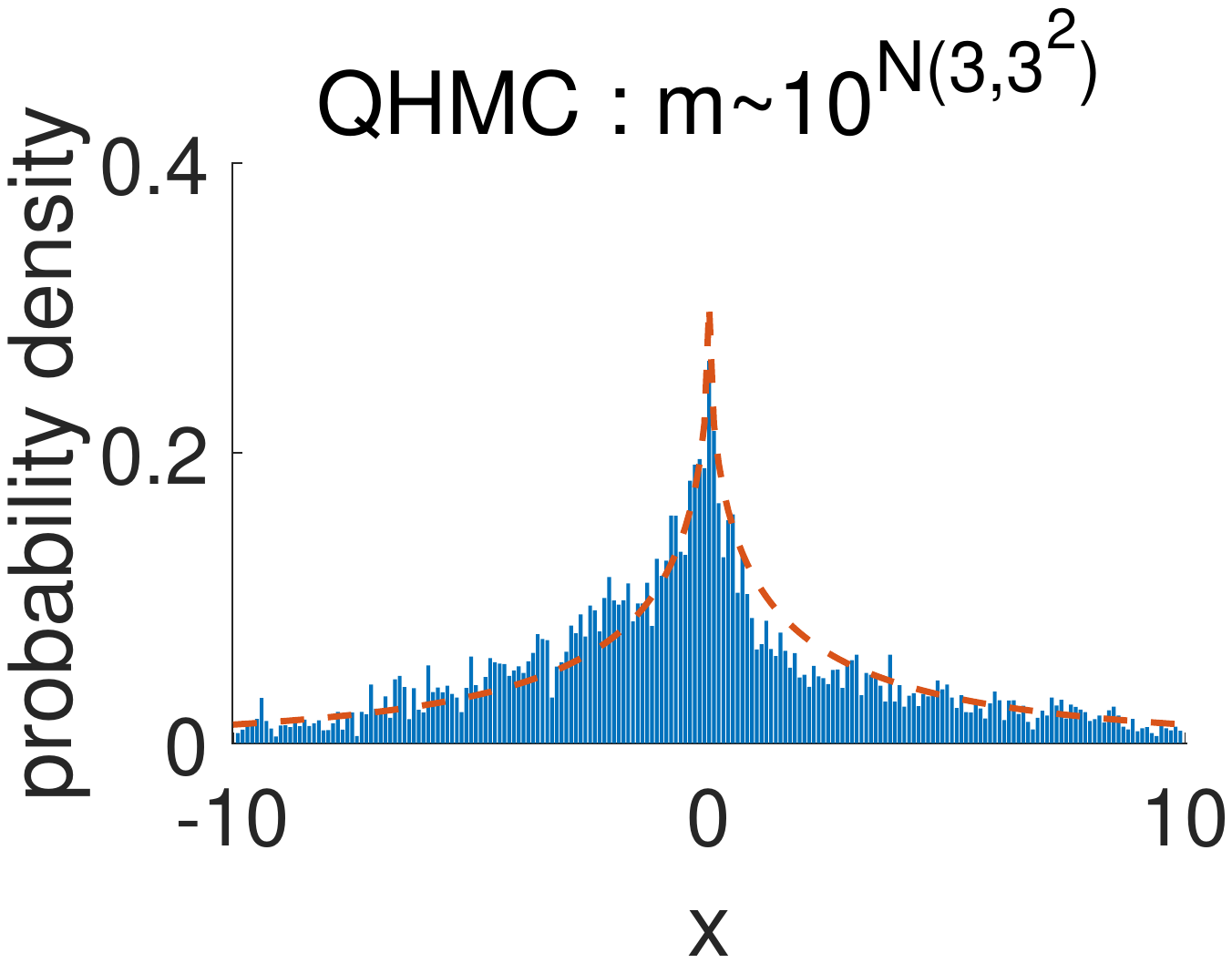}
            \caption[]%
            {{\small}}    
        \end{subfigure}
        \caption[ The average and standard deviation of critical parameters ]
        {\small The performance of HMC and QHMC in sampling ${\rm exp}(-|x|^{1/2})$. Red dashed line: true distribution; blue histogram: simulation result with 50000 samples. HMC is sensitive to the mass parameters, whereas QHMC has excellent performance even if the mass parameters change in a wide range.}
        \label{fig: lp_0d5}
        \vspace{-10pt}
    \end{figure}
    \begin{figure}[t]
        \centering
        \begin{subfigure}[b]{0.32\textwidth}
            \centering
            \includegraphics[trim = 35mm 85mm 30mm 85mm, clip,width=\textwidth]{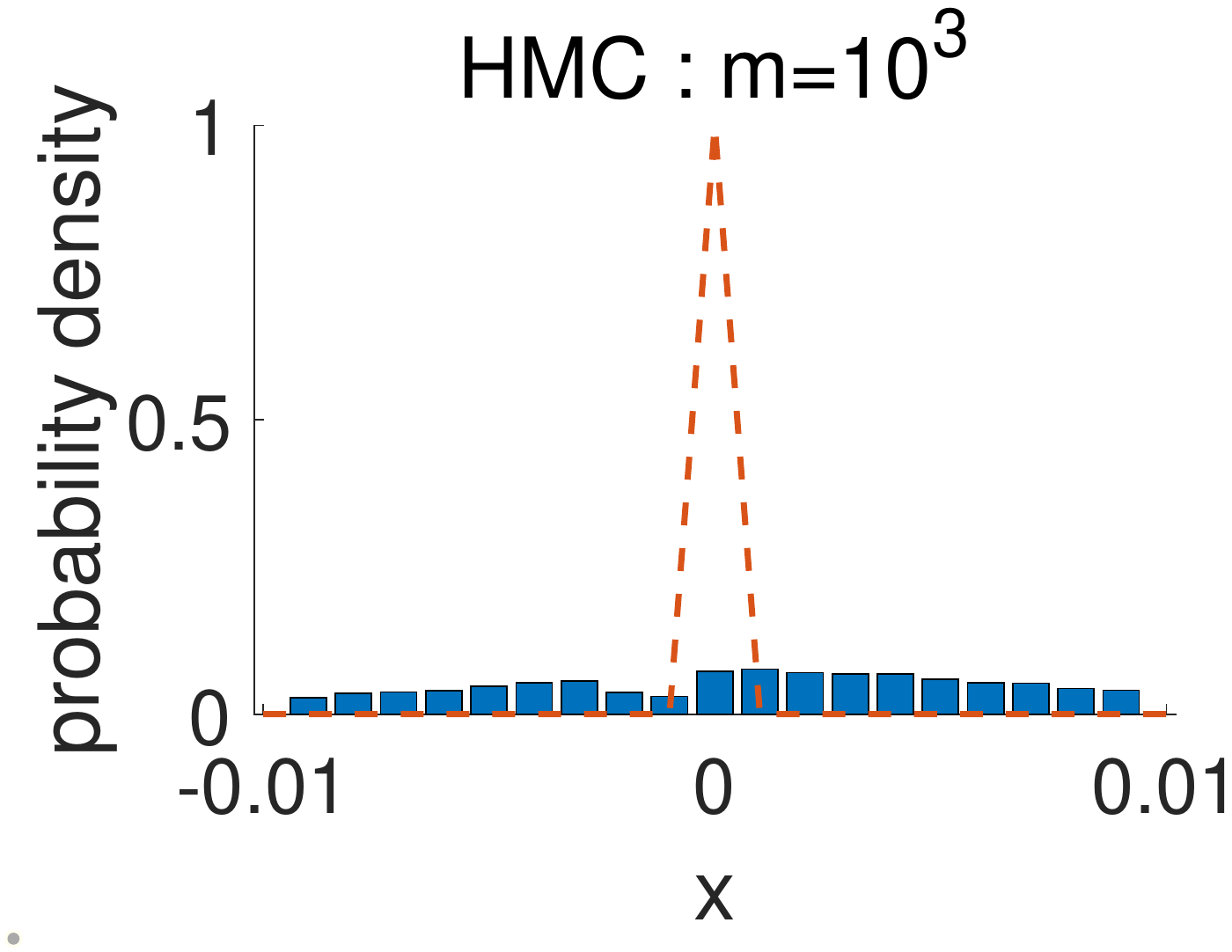}
            \caption[Network2]%
            {{\small}}    
        \end{subfigure}
        \hfill
        \begin{subfigure}[b]{0.32\textwidth}  
            \centering 
            \includegraphics[trim = 35mm 85mm 30mm 85mm, clip,width=\textwidth]{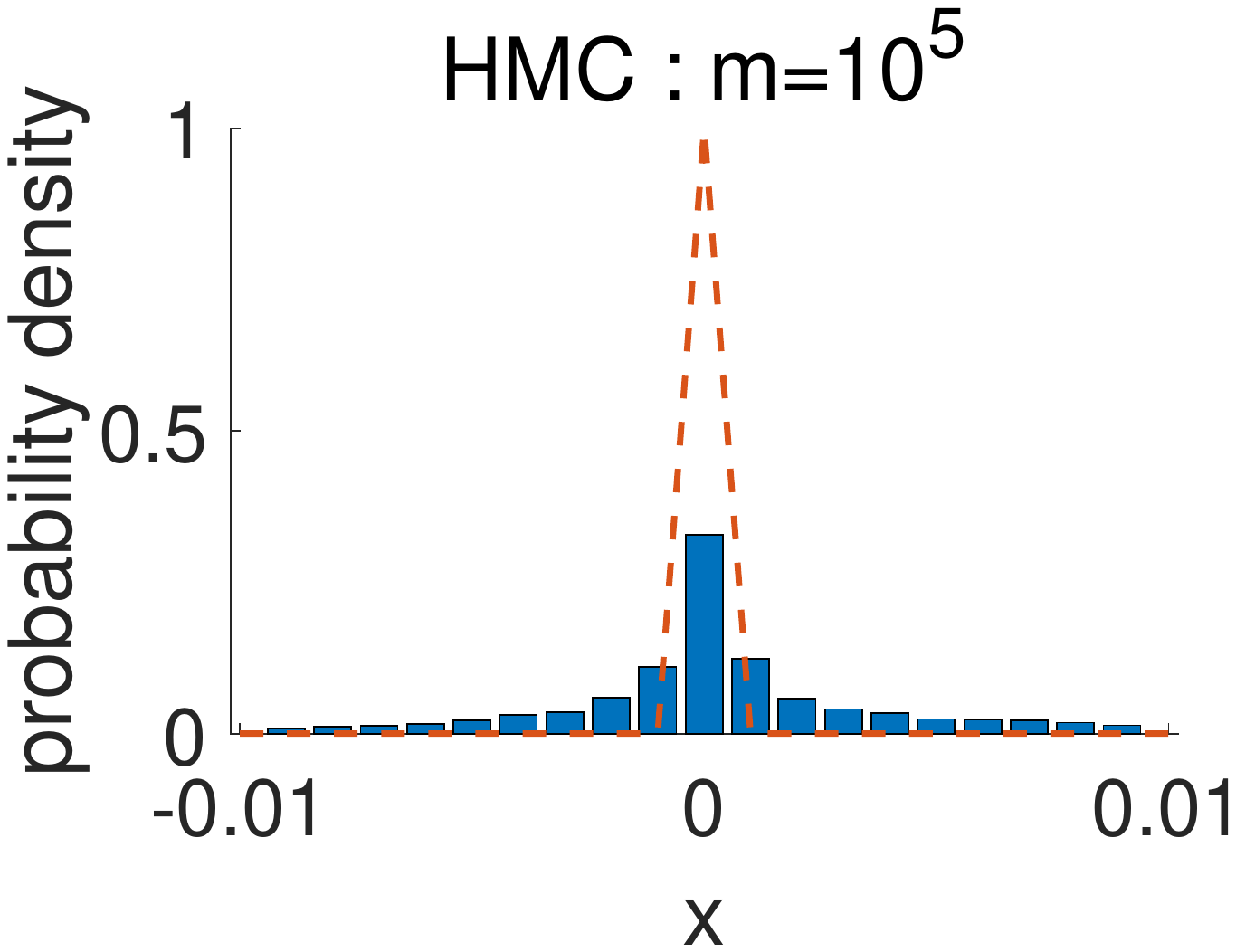}
            \caption[]%
            {{\small}}    
        \end{subfigure}
        \hfill
        \begin{subfigure}[b]{0.32\textwidth}  
            \centering 
            \includegraphics[trim = 35mm 85mm 30mm 85mm, clip,width=\textwidth]{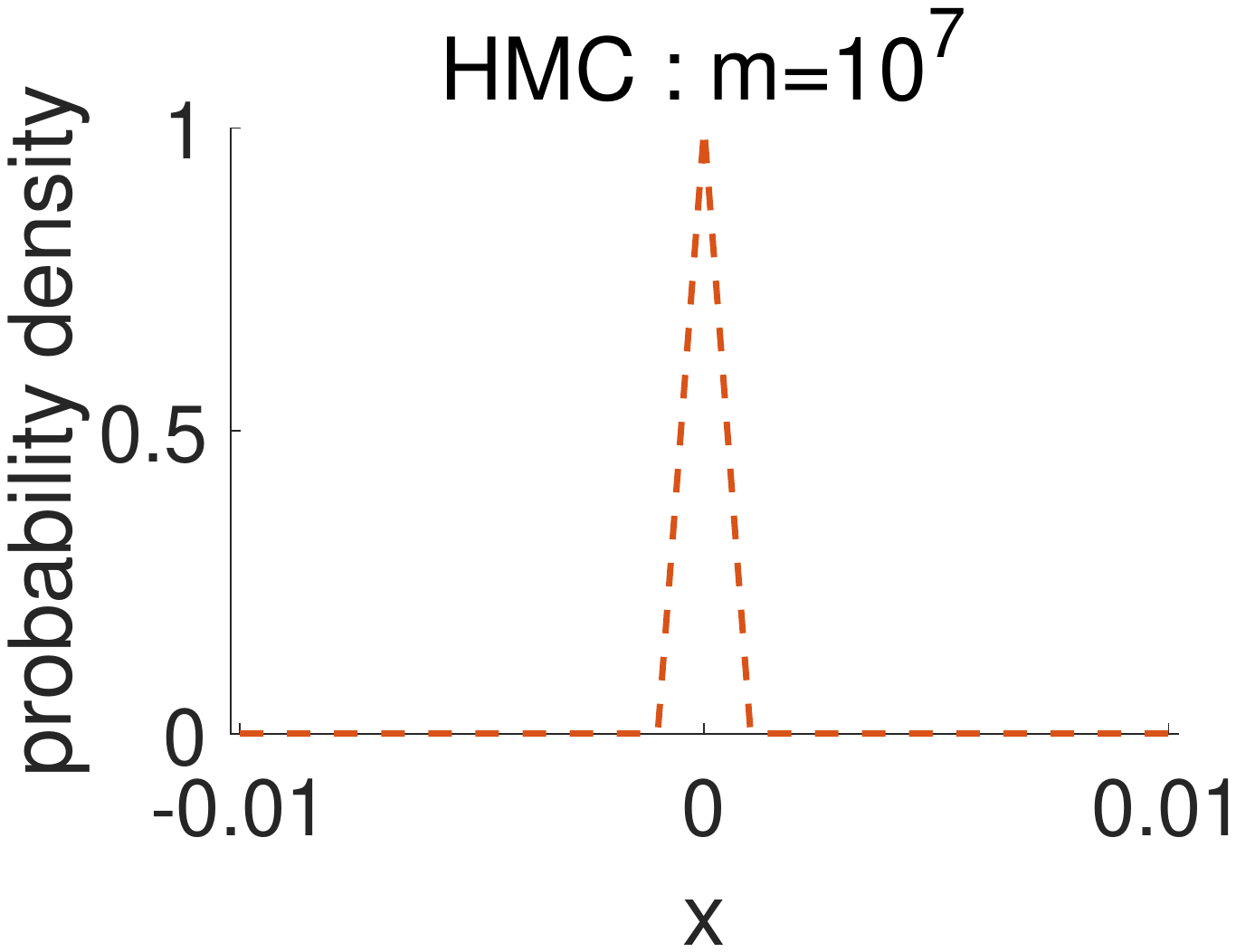}
            \caption[]%
            {{\small}}    
        \end{subfigure}
        \vskip\baselineskip
        \begin{subfigure}[b]{0.32\textwidth}   
            \centering 
            \includegraphics[trim = 35mm 85mm 30mm 85mm, clip,width=\textwidth]{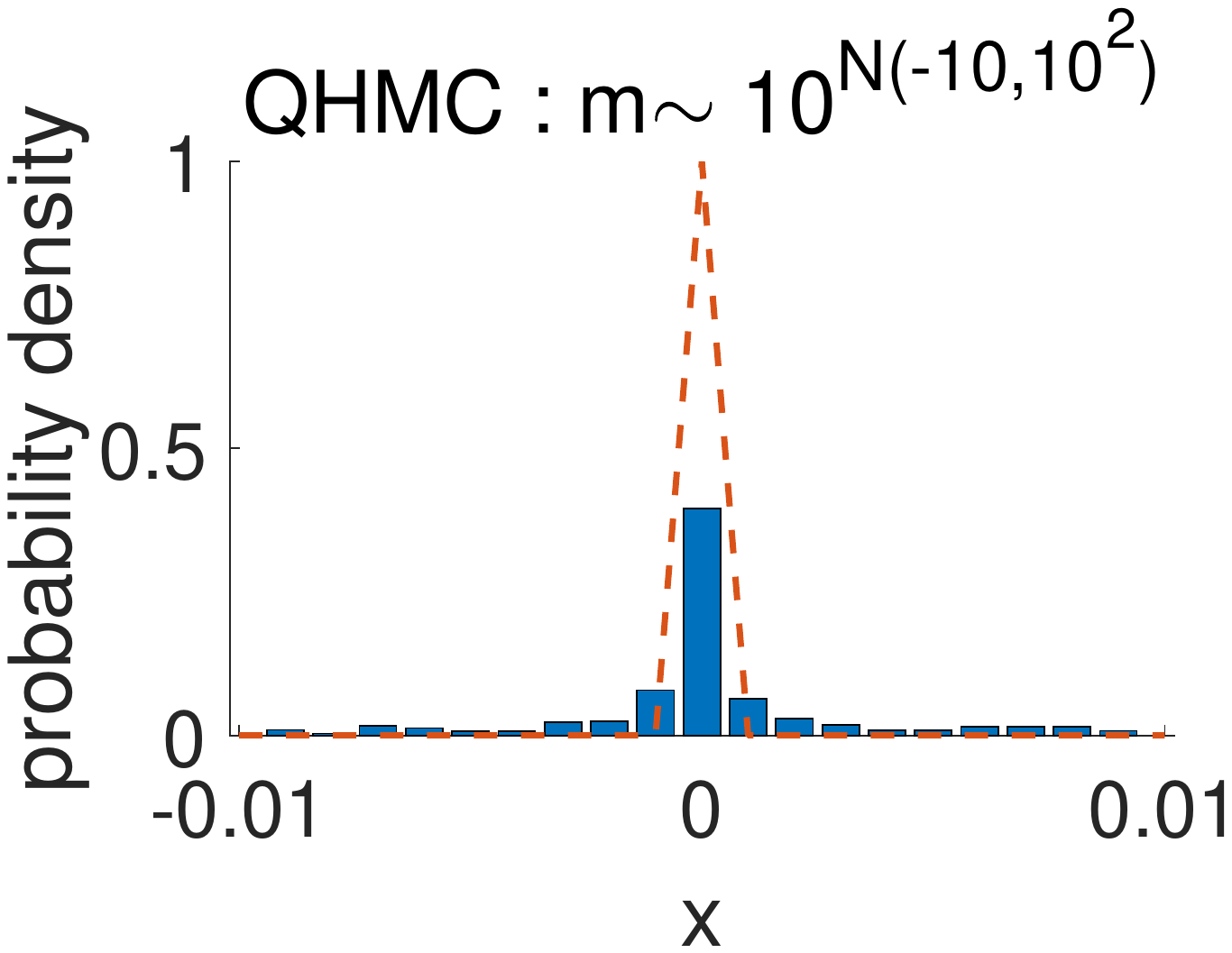}
            \caption[]%
            {{\small}}    
        \end{subfigure}
        \hfill
        \begin{subfigure}[b]{0.32\textwidth}   
            \centering 
            \includegraphics[trim = 35mm 85mm 30mm 85mm, clip,width=\textwidth]{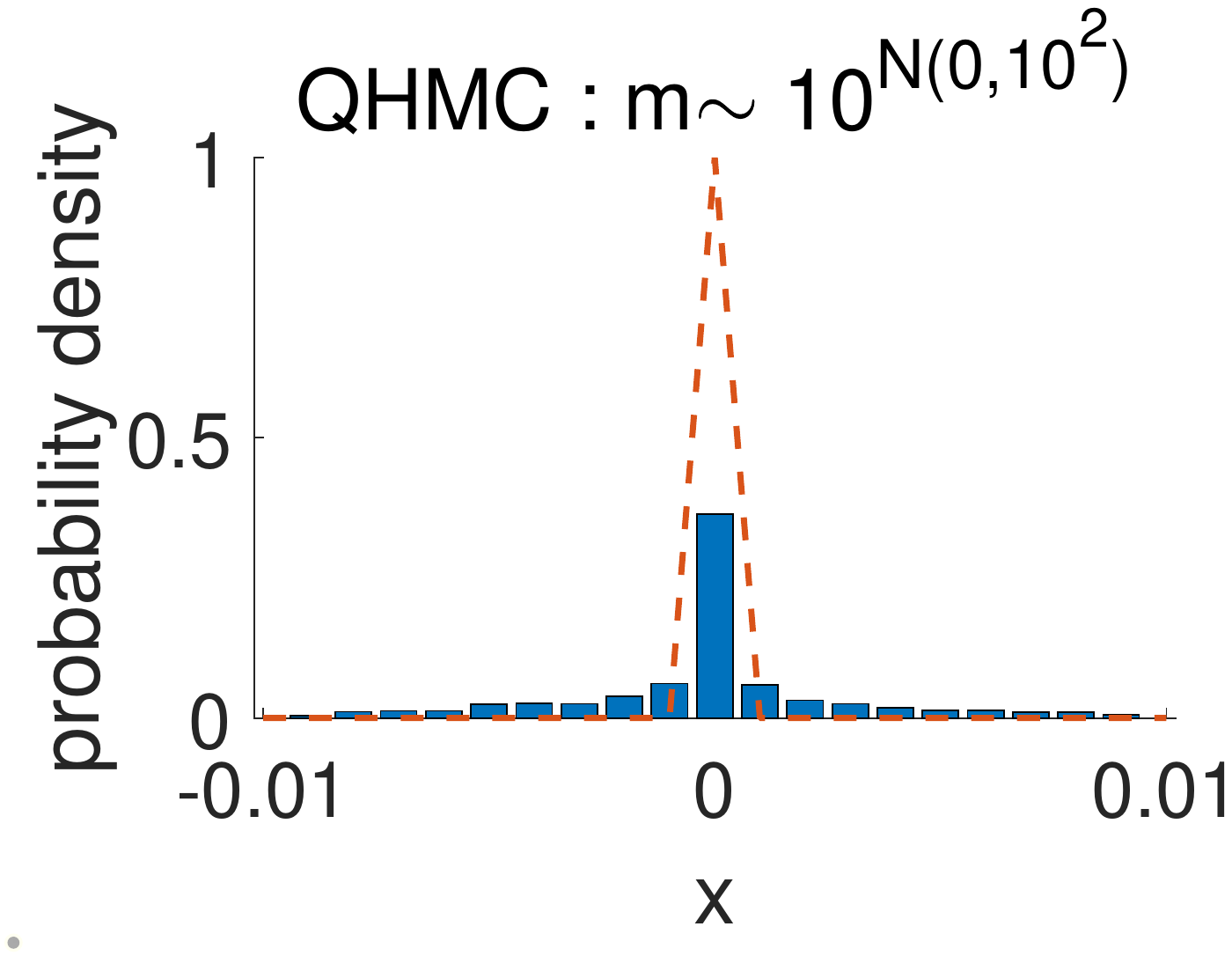}
            \caption[]%
            {{\small}}    
        \end{subfigure}
        \hfill
        \begin{subfigure}[b]{0.32\textwidth}   
            \centering 
            \includegraphics[trim = 35mm 85mm 30mm 85mm, clip,width=\textwidth]{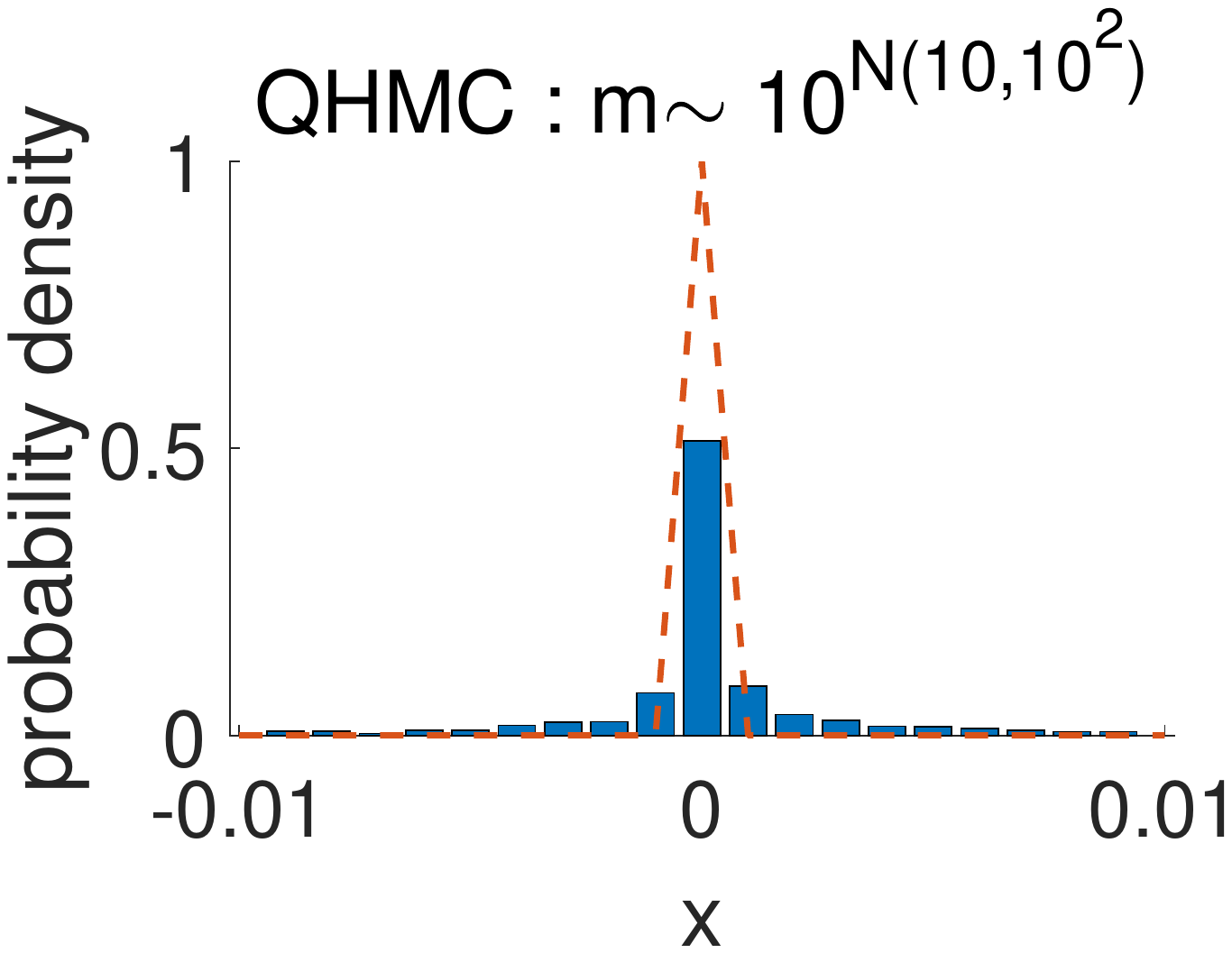}
            \caption[]%
            {{\small}}    
        \end{subfigure}
        \caption[ The average and standard deviation of critical parameters ]
        {\small The results of HMC and QHMC for the  energy function $U(x)=20|x|^{1/10}$. The red dashed line is the true distribution, and the blue histogram is the distribution with 50000 simulation samples. (a)-(c): HMC with different mass values; (d)-(f) QHMC results obtained with different distributions for the mass parameter.}
        \label{fig: lp_0d1}
        \vspace{-15pt}
    \end{figure}

The $\ell_p$ $(0<p<1)$ norm can be used as a regularizer in an optimization problem for model selection and to enforce model sparsity~\citep{zhao2014p,xu20101}. In a Bayesian setting, employing an $\ell_p$ regularization is equivalent to placing a prior density ${\rm exp}(-\frac{\| \mat{x}\|_p^p}  {\tau})$. 

In this experiment, we use QHMC to sample from a 1-D distribution $p(x)\propto {\rm exp}(-|x|^p)$ whose corresponding potential energy function is $U(x)=|x|^p$ for $p=\{1,0.5,0.1\}$. We start from $x_0=0.1$, which is already in the typical set~\footnote{The ``typical set" is a set that contains almost all elements. For instance, $[-3\sigma,3\sigma]$ can be considered as a typical for a 1-D Gaussian distribution with standard deviation $\sigma$,  because this set contains 99.7\% of its elements. In the $\ell_{1/2}$ case, we consider $[-20,20]$ as the typical set because 98.7\% of its elements are in this range.}, therefore the burn-in steps are not needed. The results for $p=1,0.5,0.1$ are presented in Fig.~\ref{fig: lp_1} to Fig.~\ref{fig: lp_0d1} respectively. We can see that as $p$ approaches zero, the advantages of QHMC over HMC becomes increasingly significant.

\begin{figure}[t]
        \centering
        \begin{subfigure}[b]{0.32\textwidth}
            \centering
            \includegraphics[trim = 35mm 85mm 30mm 85mm, clip,width=\textwidth]{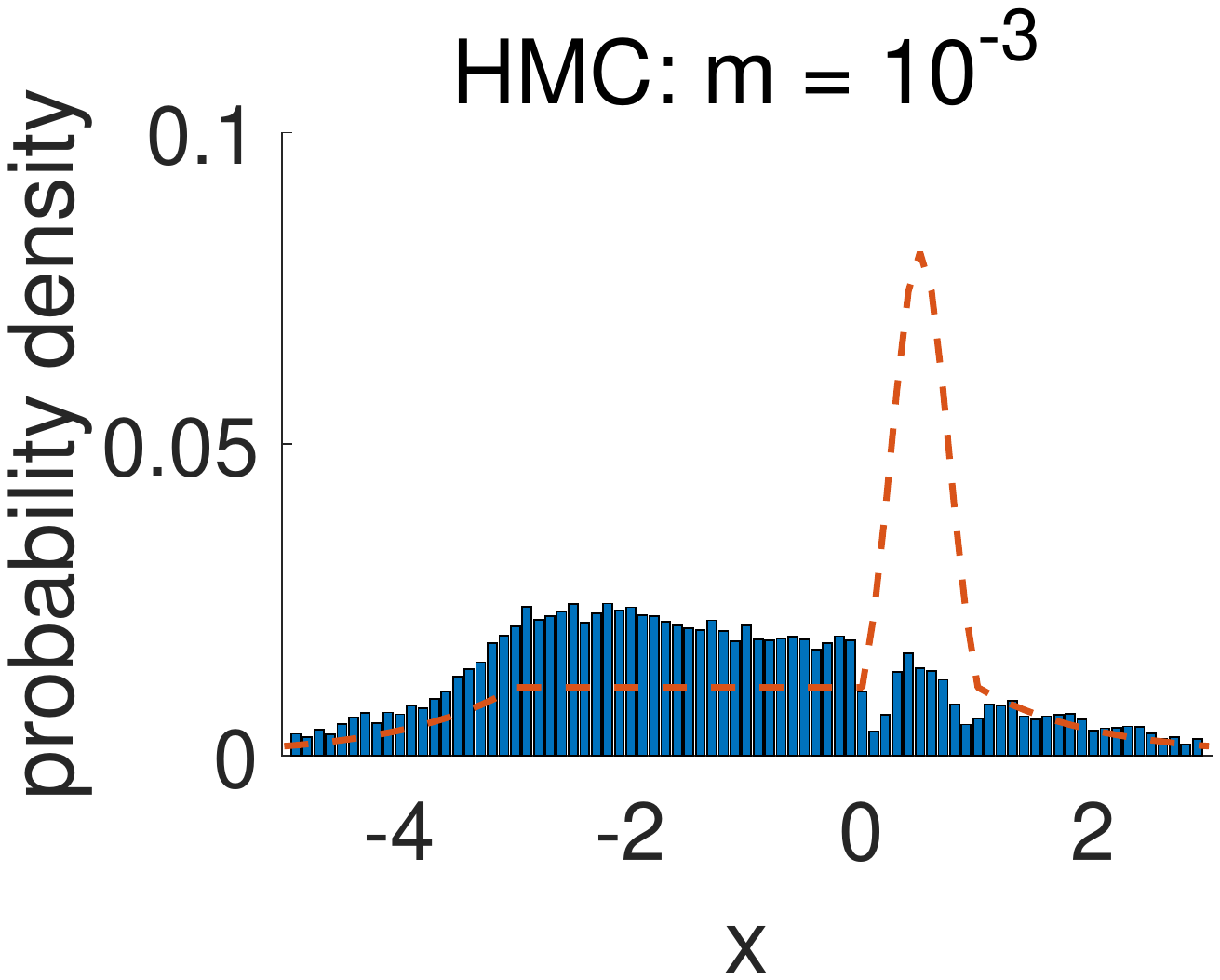}
            \caption[Network2]%
            {{\small}}    
        \end{subfigure}
        \hfill
        \begin{subfigure}[b]{0.32\textwidth}  
            \centering 
            \includegraphics[trim = 35mm 85mm 30mm 85mm, clip,width=\textwidth]{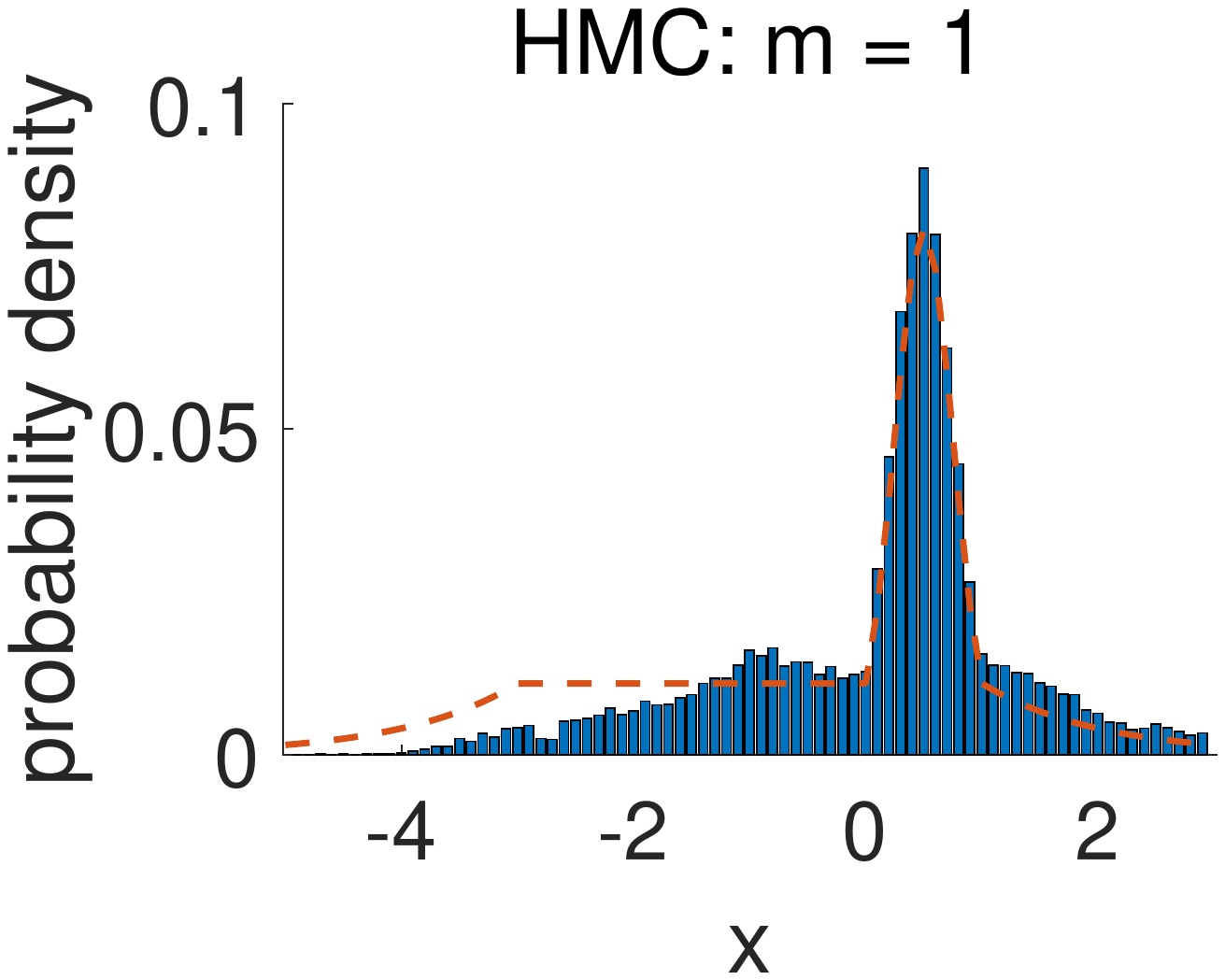}
            \caption[]%
            {{\small}}    
        \end{subfigure}
        \hfill
        \begin{subfigure}[b]{0.32\textwidth}  
            \centering 
            \includegraphics[trim = 35mm 85mm 30mm 85mm, clip,width=\textwidth]{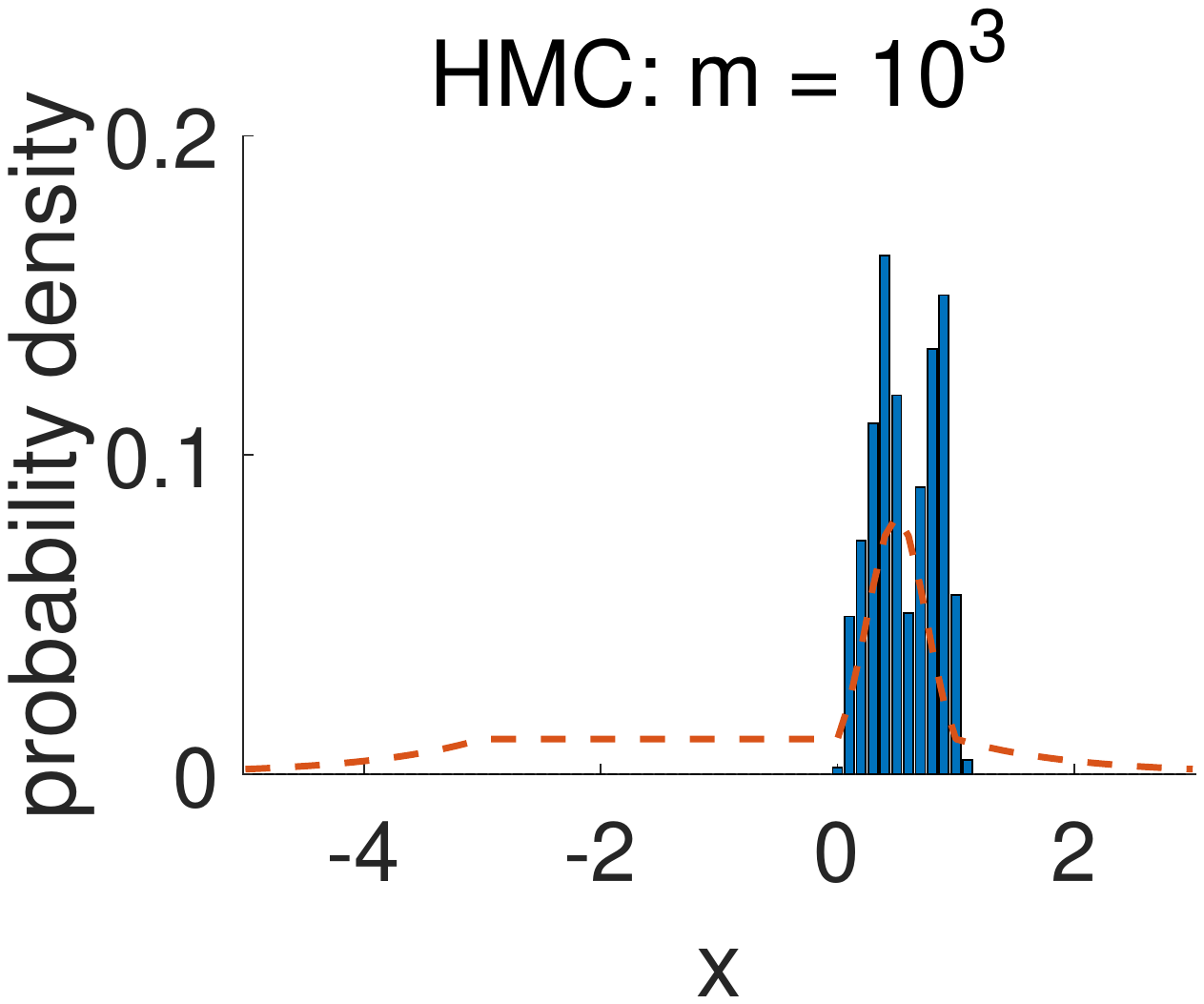}
            \caption[]%
            {{\small}}    
        \end{subfigure}
        \vskip\baselineskip
        \begin{subfigure}[b]{0.32\textwidth}   
            \centering 
            \includegraphics[trim = 35mm 85mm 30mm 85mm, clip,width=\textwidth]{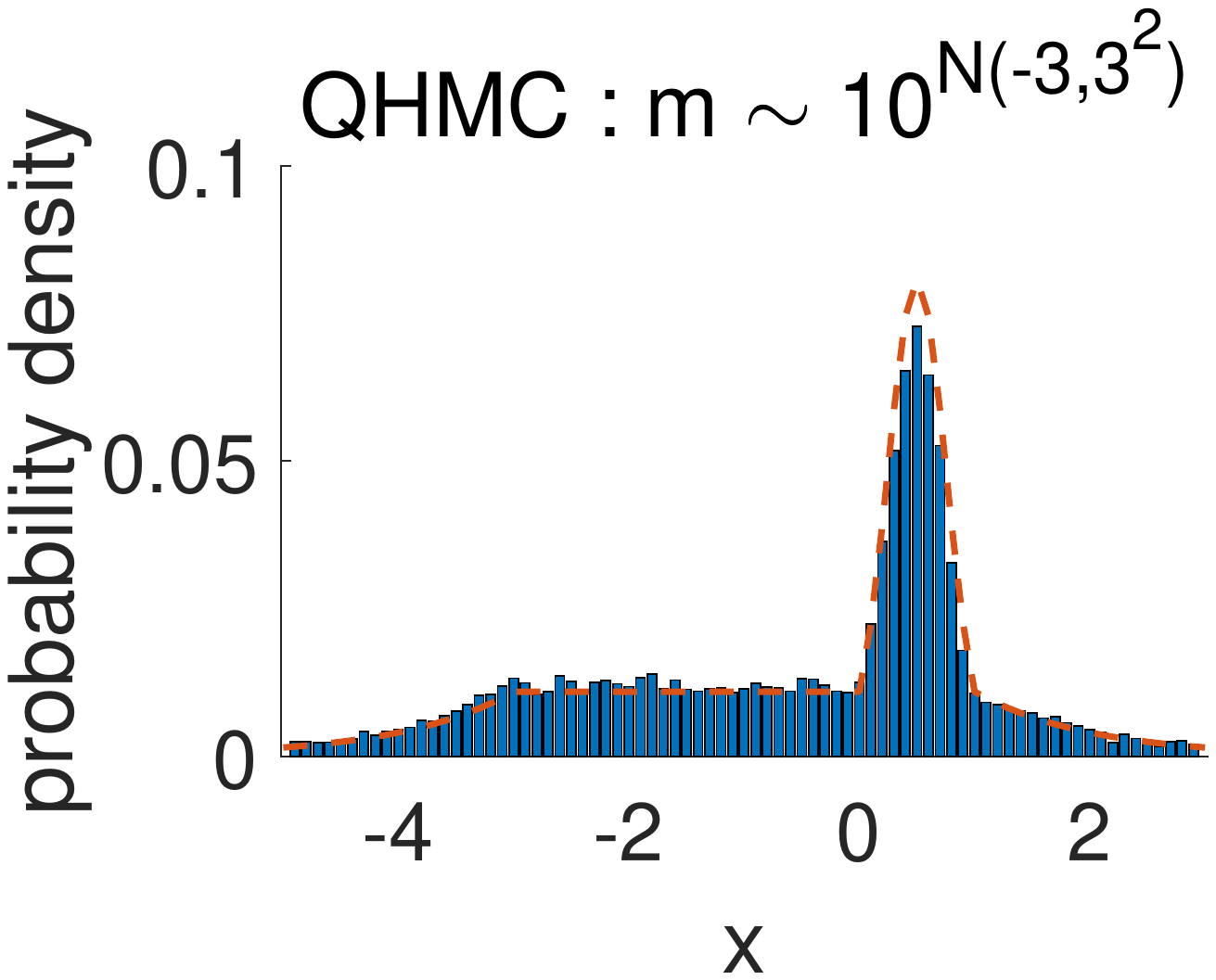}
            \caption[]%
            {{\small}}    
        \end{subfigure}
        \hfill
        \begin{subfigure}[b]{0.32\textwidth}   
            \centering 
            \includegraphics[trim = 35mm 85mm 30mm 85mm, clip,width=\textwidth]{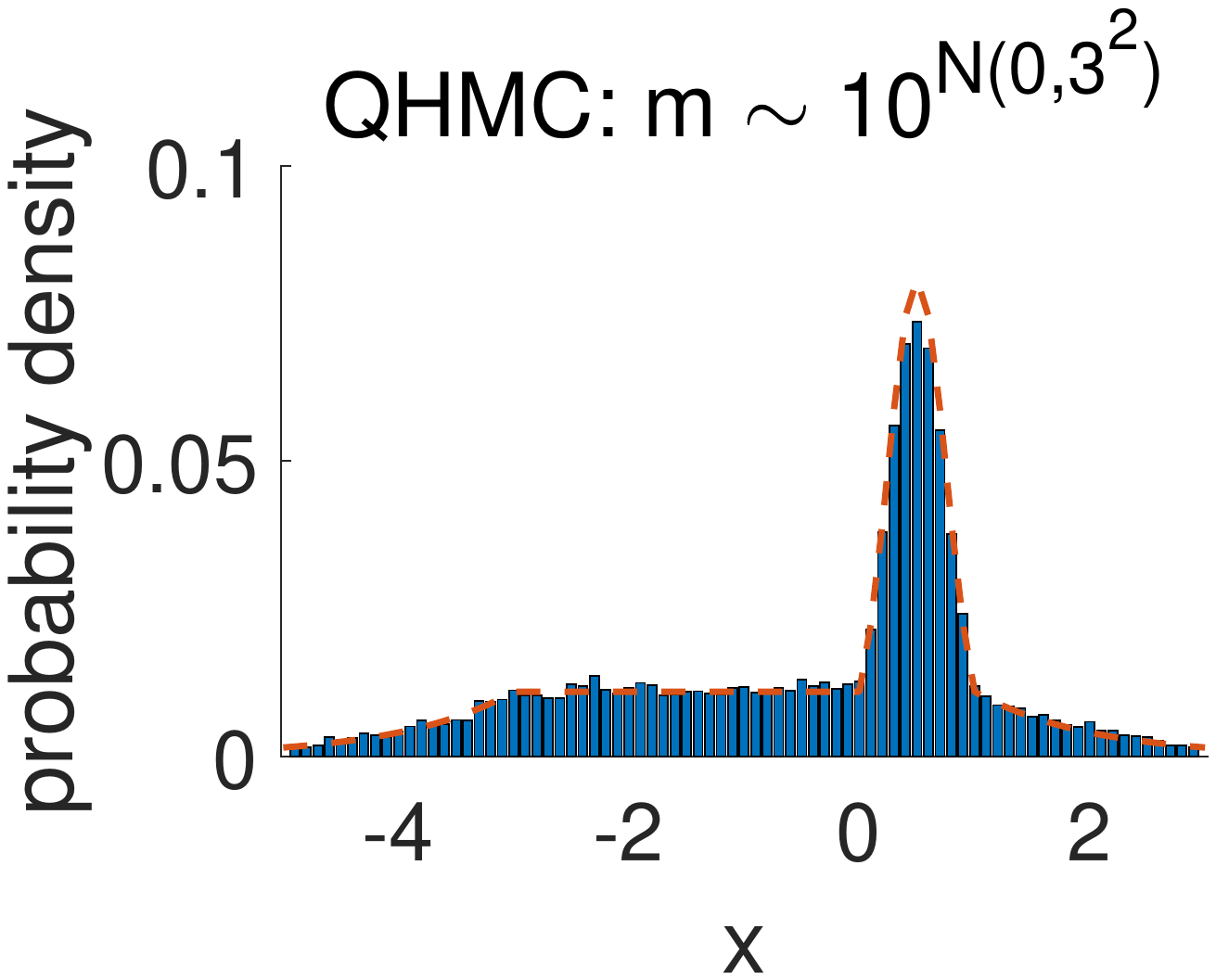}
            \caption[]%
            {{\small}}    
        \end{subfigure}
        \hfill
        \begin{subfigure}[b]{0.32\textwidth}   
            \centering 
            \includegraphics[trim = 35mm 85mm 30mm 85mm, clip,width=\textwidth]{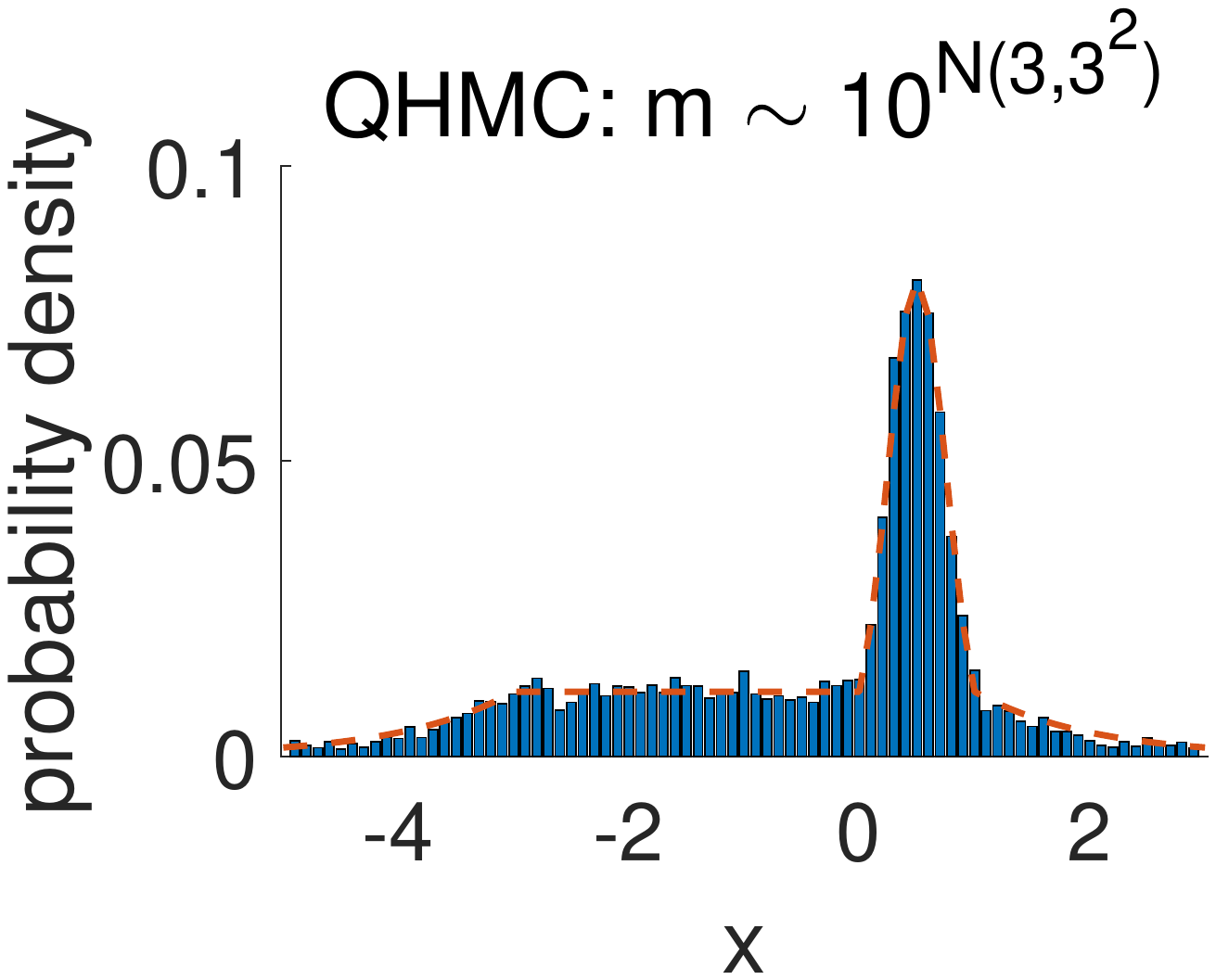}
            \caption[]%
            {{\small}}    
        \end{subfigure}
        \caption[ The average and standard deviation of critical parameters ]
        {\small The spiky+smooth distribution example. We run 50000 paths to get the sample distributions. The ground truth is shown by the dashed red line. It is hard to pick a fixed $m$ in HMC to get accurate results for both the flat smooth region and the spiky region, as shown in (a)--(c). The proposed QHMC can accurately sample the entire target distribution for a wide range of mass distributions, as shown in (d)--(f).}
        \label{fig: well}
        \vspace{-10pt}
    \end{figure}
    
We compare HMC and QHMC implemented with different mass parameters. Take the results of $p=0.5$ as an example (see Fig.~\ref{fig: lp_0d5}). In each row, the mass value in HMC is set as the the median mass value of QHMC. Fig.~\ref{fig: lp_0d5} (a)-(c) show that the distribution can be well sampled by HMC only if the mass value is properly chosen. The region around $x=0$ can be hardly explored when the mass value is too small, as shown in Fig.~\ref{fig: lp_0d5} (a). In Fig.~\ref{fig: lp_0d5} (c), the whole distribution cannot be well explored due to the random walk caused by a large mass. However, the proposed QHMC method does not suffer from this issue. As shown in Fig.~\ref{fig: lp_0d5} (d)-(f), the QHMC can produce accurate sample distributions when the median mass value $10^{\mu_m}$ changes from $10^{-3}$ to $10^3$.

\subsubsection{One-Dimensional Distribution with Both Smooth and Spiky Regions}\label{sec:exp-smooth-spiky}
Section~\ref{sec:mass_adapt} states that different mass magnitudes are preferred for smooth and spiky regions. In this example, we consider the following piecewise (non-normalized) potential energy function:

\begin{equation}\label{eq:well}
U(x)=
\left\{
             \begin{array}{lc}
             -x-3\quad &(x\leq-3)\\
             0\quad &(-3<x\leq 0)\\
             8x(x-1)\quad &(0<x\leq 1)\\
             x-1\quad &(x>1).
             \end{array}
\right.
\end{equation}
This function is very flat and smooth in the range $[-3,0]$ and very spiky in the interval $[0,1]$. 

In order to understand why QHMC outperforms HMC in this synthetic example, we provide a semi-quantitative analysis by introducing the diffusion coefficient $D$ and characteristic length $r_c$. We consider the flat energy function $U(x)=0$ for $-3<x\leq 0$. Suppose that we use a fixed step size $\epsilon$ and $L$ leap-frog steps in each simulation path, and that we run $N$ simulation paths. Because $q_i(i=1,2,\cdots,N)$ are drawn from a Gaussian distribution, their sum also has a Gaussian distribution with a variance that is $N$ times larger. Suppose the particle starts from $x_0=0$, therefore at step $N$ the position of the particle is $x=(\epsilon L/m)\sum_{i=1}^N p_i$, and it is easy to verify the variance of $x$ is $\overline{ x^2}=N(\epsilon L)^2/m$. We can define the diffusion coefficient $D$ as $\overline{x^2}=2DN$ such that $D=(\epsilon L)^2/2m\sim1/m$. Define the characteristic step size as $\epsilon_c=\sqrt{\overline{x^2}/N}=\sqrt{2D}$. Let $r_c$ be the characteristic scale of the smooth region~\footnote{The characteristic length $r_c$ can be seen as the size of typical set, e.g. the standard deviation of a 1-D distribution. Here we do not pursue a rigorous definition and just use it for semi-quantitative analysis.}.  If the particle can explore the smooth region within $N\sim O(1)$ steps, the characteristic step size $\epsilon_c$ and characteristic scale $r_c$ should be at the same order, i.e. $D\sim O(\epsilon_c^2)\sim O(r_c^2)$ or equivalently $m\sim O((\epsilon L)^2/2r_c^2)$.

We can roughly estimate the optimal mass for the piecewise function. On the one hand, Lemma \ref{lemma:3} indicates that the mass should be above $a\epsilon^2/4=3.6\times 10^{-3}$ for the spiky region $[0,1]$. On the other hand, based on the discussion above and the characteristic length $r_c\sim 3$, we know the optimal mass is $m\sim (\epsilon L)^2/2r_c^2= 1.3\times 10^{-3}$ for the smooth region $[-3,0]$. These two requirements cannot be meet simultaneously with a fixed mass as in a standard HMC. As a result, the HMC produces inaccurate distributions for this example, as shown in Fig. \ref{fig: well} (a)-(c).

However, the QHMC can employ a random mass to generate accurate sample distributions in both regions. To demonstrate this, we start from $x_0=0$ and run 50000 paths. As shown in Fig.~\ref{fig: well} (d)-(e), the QHMC produces accurate distributions with $\mu_m$ varying in a wide range.

\subsubsection{One-dimensional Multimodal Distribution}
\begin{figure}[t]
        \centering
        \begin{subfigure}[t]{0.475\textwidth}
            \centering        
            \includegraphics[trim = 35mm 85mm 35mm 85mm, clip,width=\textwidth]{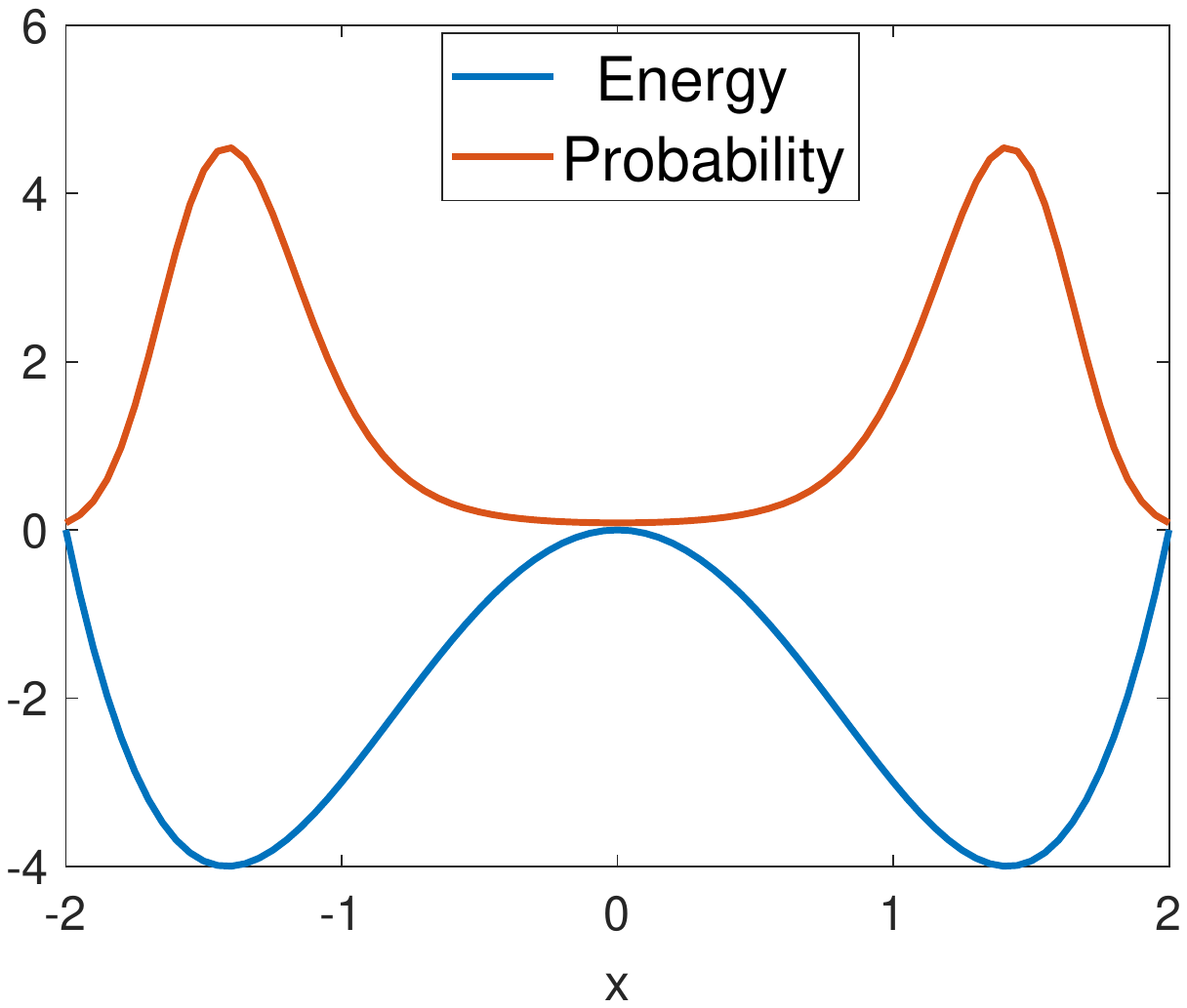}
            \caption[Network2]%
            {{\small}}    
        \end{subfigure}
        \hfill
        \begin{subfigure}[t]{0.475\textwidth}  
            \centering 
            \includegraphics[trim = 35mm 85mm 35mm 85mm, clip,width=\textwidth]{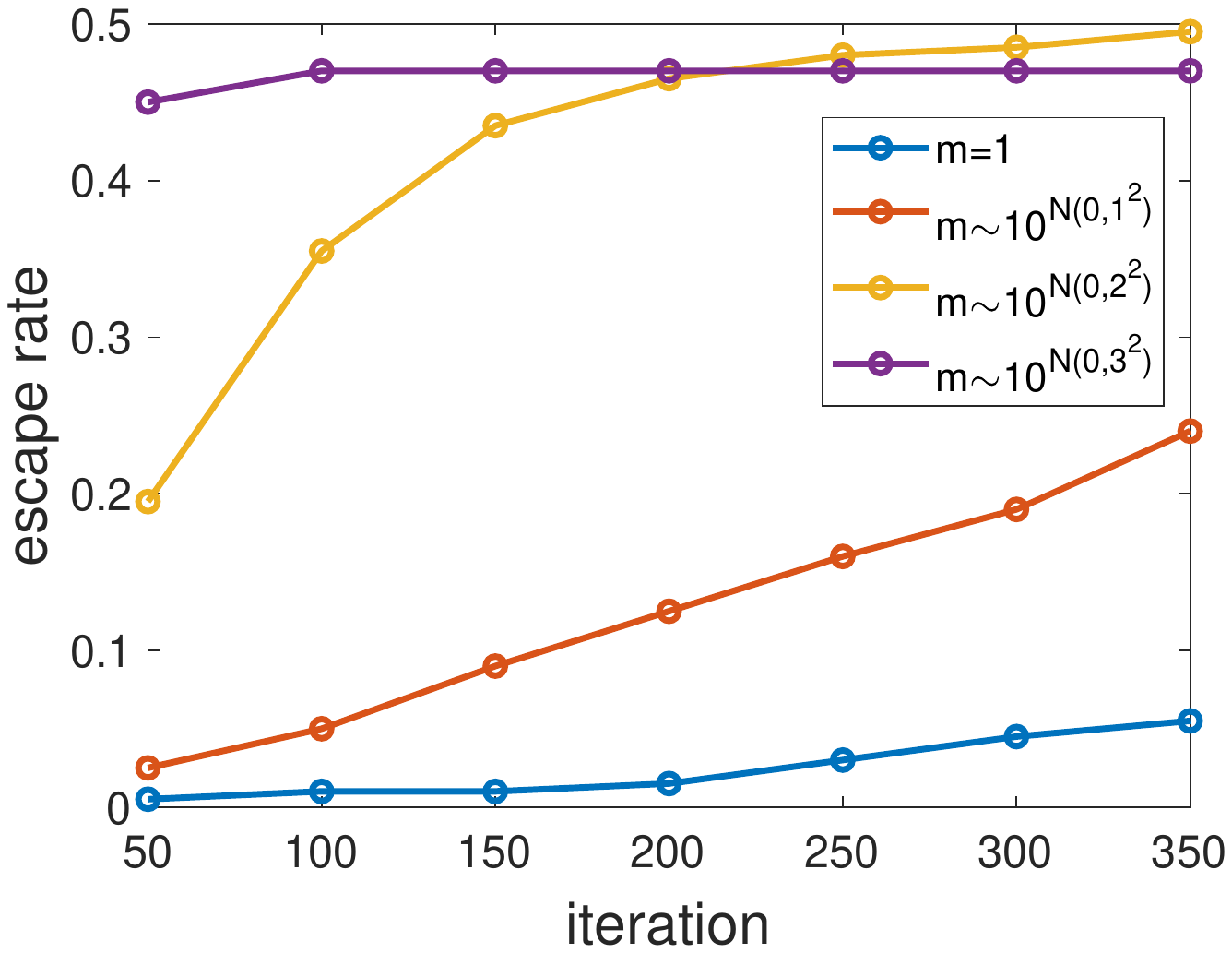}
            \caption[]%
            {{\small}}
        \end{subfigure}
        \begin{subfigure}[t]{0.31\textwidth}
            \centering        
            \includegraphics[trim = 35mm 82mm 35mm 85mm, clip,width=\textwidth]{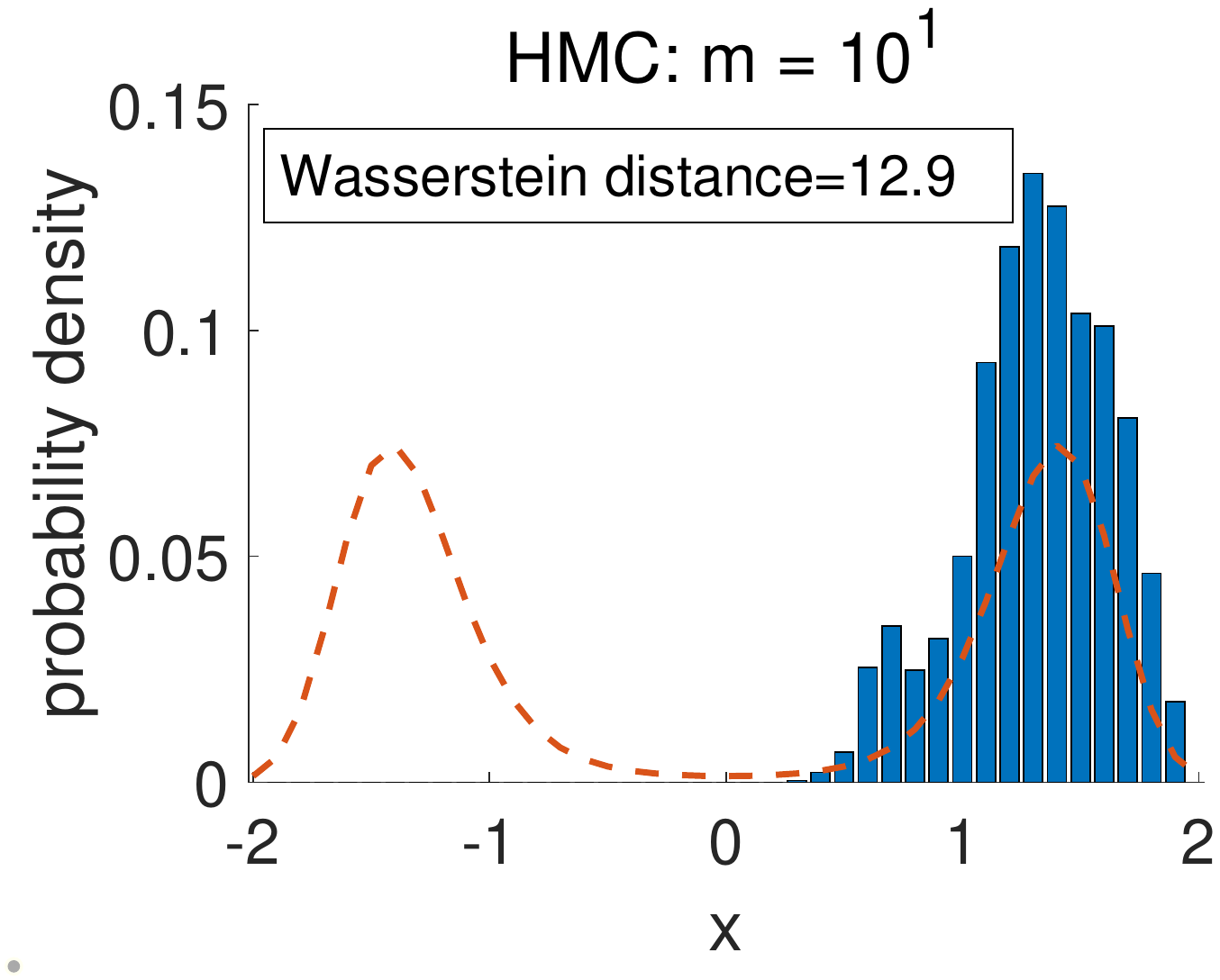}
            \caption[Network2]%
            {{\small}}    
        \end{subfigure}
        \hfill
        \begin{subfigure}[t]{0.31\textwidth}  
            \centering 
            \includegraphics[trim = 35mm 82mm 35mm 85mm, clip,width=\textwidth]{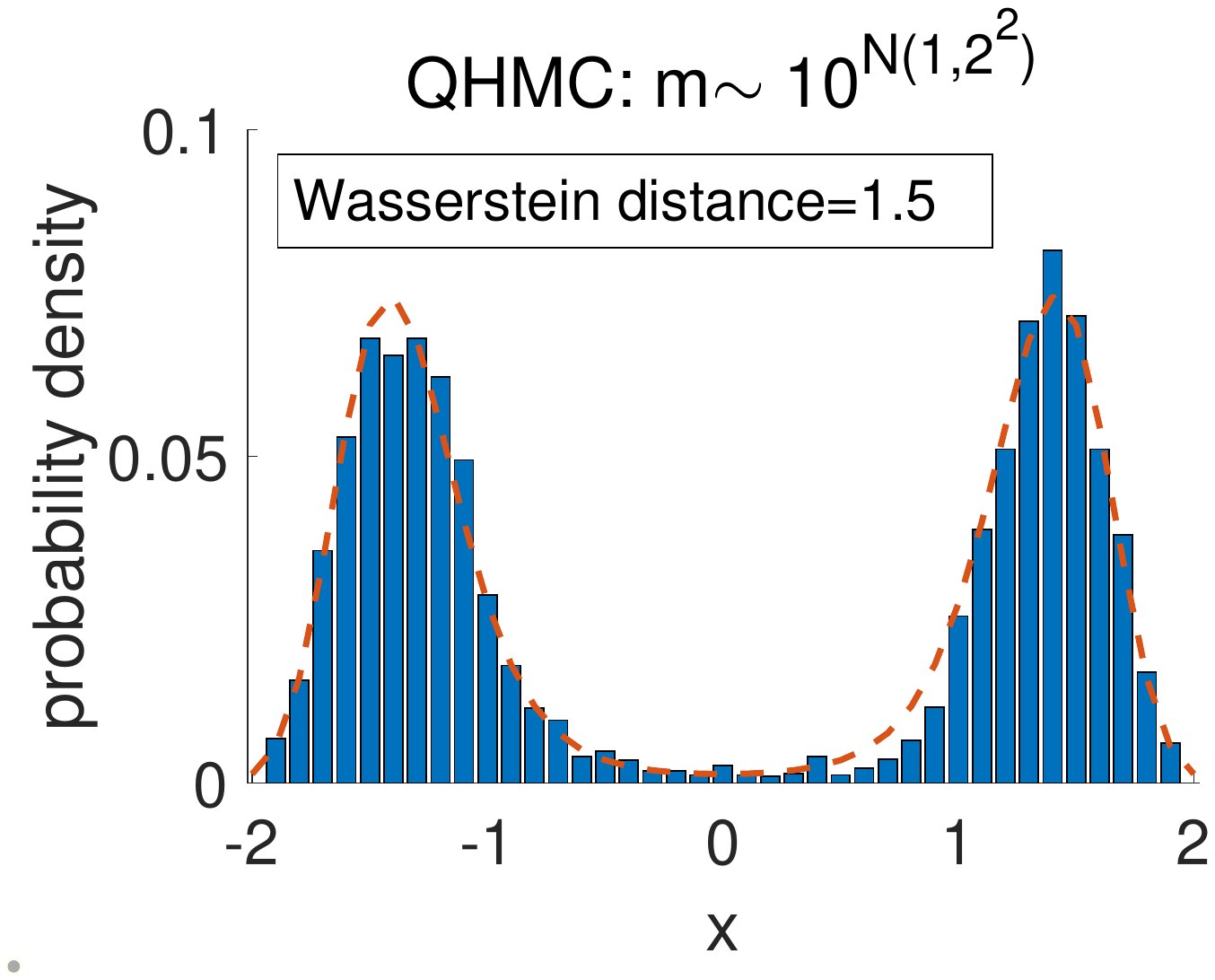}
            \caption[]%
            {{\small}}
        \end{subfigure}
        \hfill
        \begin{subfigure}[t]{0.31\textwidth}  
            \centering 
            \includegraphics[trim = 35mm 82mm 35mm 85mm, clip,width=\textwidth]{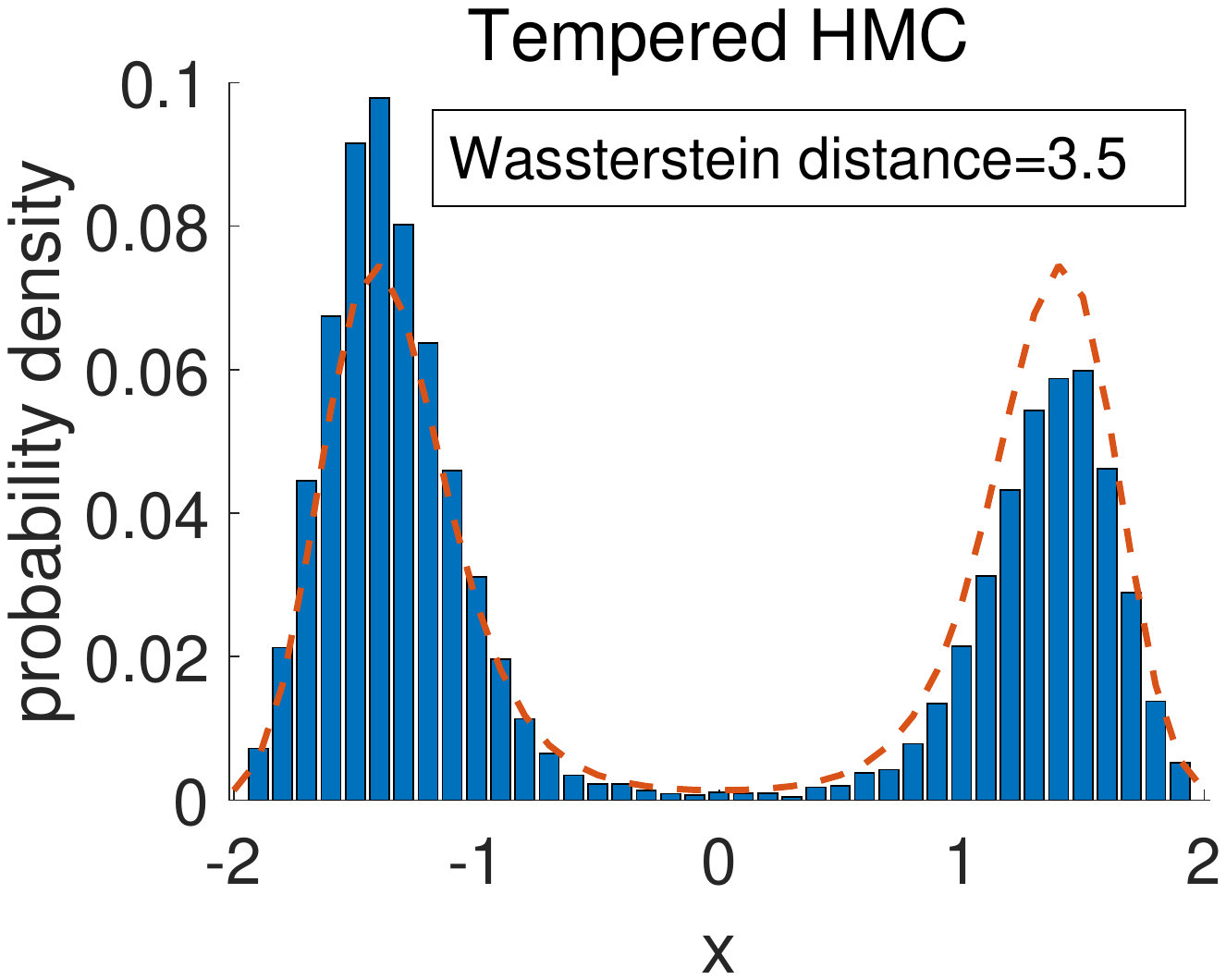}
            \caption[]%
            {{\small}}
        \end{subfigure}
        \caption[]
        {\small A double-well example. (a) The potential energy function $U(x)=x^4-4x^2$ has global minimum at $\pm \sqrt{2}$. We initiate the particle from $x_0=\sqrt{2}$ and run the HMC/QHMC sampling. (b) The escape ratio of HMC and QHMC. (c)-(e): sample distributions from HMC, QHMC and tempered HMC, respectively. We test each algorithm for 100 trials, and plot its sampling result with the 20th smallest Wasserstein distance.}
        \label{fig: multimodal}
        \vspace{-10pt}
    \end{figure}
    
This experiment shows the effectiveness of QHMC in sampling from a multimodal distribution, which is explained in Section~\ref{sec:quantum-tunneling}. 

We consider a double-well posterior distribution $p(x)\propto {\rm exp}(-U(x))$ where the potential energy function $U(x)=x^4-4x^2$, as shown in Fig.~\ref{fig: multimodal} (a). We simulate 200 particles starting from the right minimum point $x_0=\sqrt{2}$, and we check if each particle has crossed the peak at $x=0$ after every 50 iterations. If yes, then the particle has successfully escaped from the right well. We report the ratio of particles that have escaped from the right wells after certain iterations in Fig.~\ref{fig: multimodal} (b). We observe that a moderate variance $\sigma_m\sim 2$ of the log-mass distribution can lead to significantly better performance than simply fixing the mass in standard HMC. We also observe that the particles can escape from the right well more quickly once the log-mass distribution has a larger variance. 

We further compare the proposed QHMC algorithm with standard HMC and tempered HMC~\citep{graham2017continuously}. The HMC with tempering is implemented by setting a low temperature $T_l=1$ and high temperature $T_h=25$ and run a high-temperature step every 30 paths, including 10 paths of burn-in and 20 paths to collect samples. For each method, we run 100 trials of sampling results and sort them by the Wasserstein distance. We show the sample distributions with the 20th smallest Wasserstein distance in Fig.~\ref{fig: multimodal} (c)--(e). The QHMC method produces very accurate results due to its quantum tunneling effect. Tempered HMC can produce a multimodal sample distribution, but its Wasserstein distance is larger than that of QHMC.

\subsubsection{Ill-Conditioned Gaussian Distribution}
In Section \ref{sec:discuss}, we proposed two practical parameterizations of QHMC: S-QHMC and D-QHMC. We show for the ill-conditioned Gaussian distribution, D-QHMC can take into account different scalings along each dimension, hence outperforms S-QHMC and randomized HMC significantly.

The Gaussian distribution $p(\mat{x})\propto {\rm exp}(-\frac{1}{2}\mat{x}^T\mat{\Sigma}^{-1}\mat{x})$ is a well-studied case in sampling problems. The preconditing of mass matrix can help improve sampling performance of ill-conditioned Gaussian distributions and the rule of thumb is $\mat{M}^{-1}\sim\mat{\Sigma}$ as mentioned in e.g. ~\citep{BESKOS20112201}. We consider an ill-conditioned Gaussian with $\mat{\Sigma}=\rm{diag}(100,1)$. The sampler starts from $\mat{x}_0=\mat{0}$ and run 10000 paths to collect samples. 

The results are shown in Fig.~\ref{fig:g2d}. We compare five methods: (a) standard HMC: $m=0.01$; (b) Randomized HMC: $m=0.01$ and $\tau=0.03$; (c) Riemannian HMC: $\mat{M}=0.1\mat{\Sigma}^{-1}$; (d) S-QHMC: $\mu_m=-2,\sigma_m=1$; (e) D-QHMC: $\mu_m^{(1)}=-3, \mu_m^{(2)}=-1, \sigma_m^{(1)}=\sigma_m^{(2)}=1$.  It is clear that D-QHMC outperforms all other four methods on this example.

\begin{figure}[t]
        \centering
        \begin{subfigure}[b]{0.32\textwidth}
            \centering
            \includegraphics[width=\textwidth]{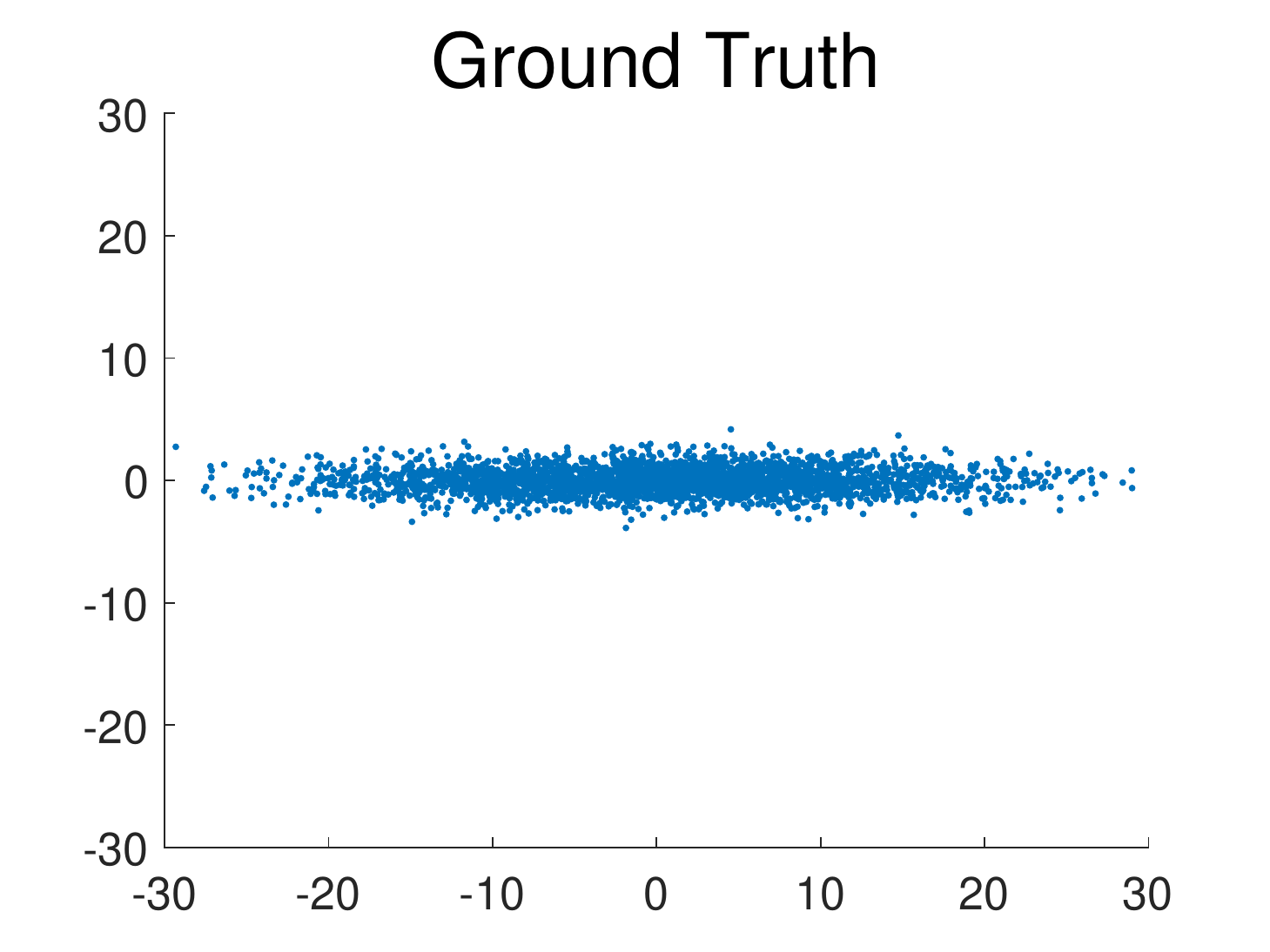}
            \caption[]%
            {{\small}}    
        \end{subfigure}
        \hfill
        \begin{subfigure}[b]{0.32\textwidth}  
            \centering 
            \includegraphics[width=\textwidth]{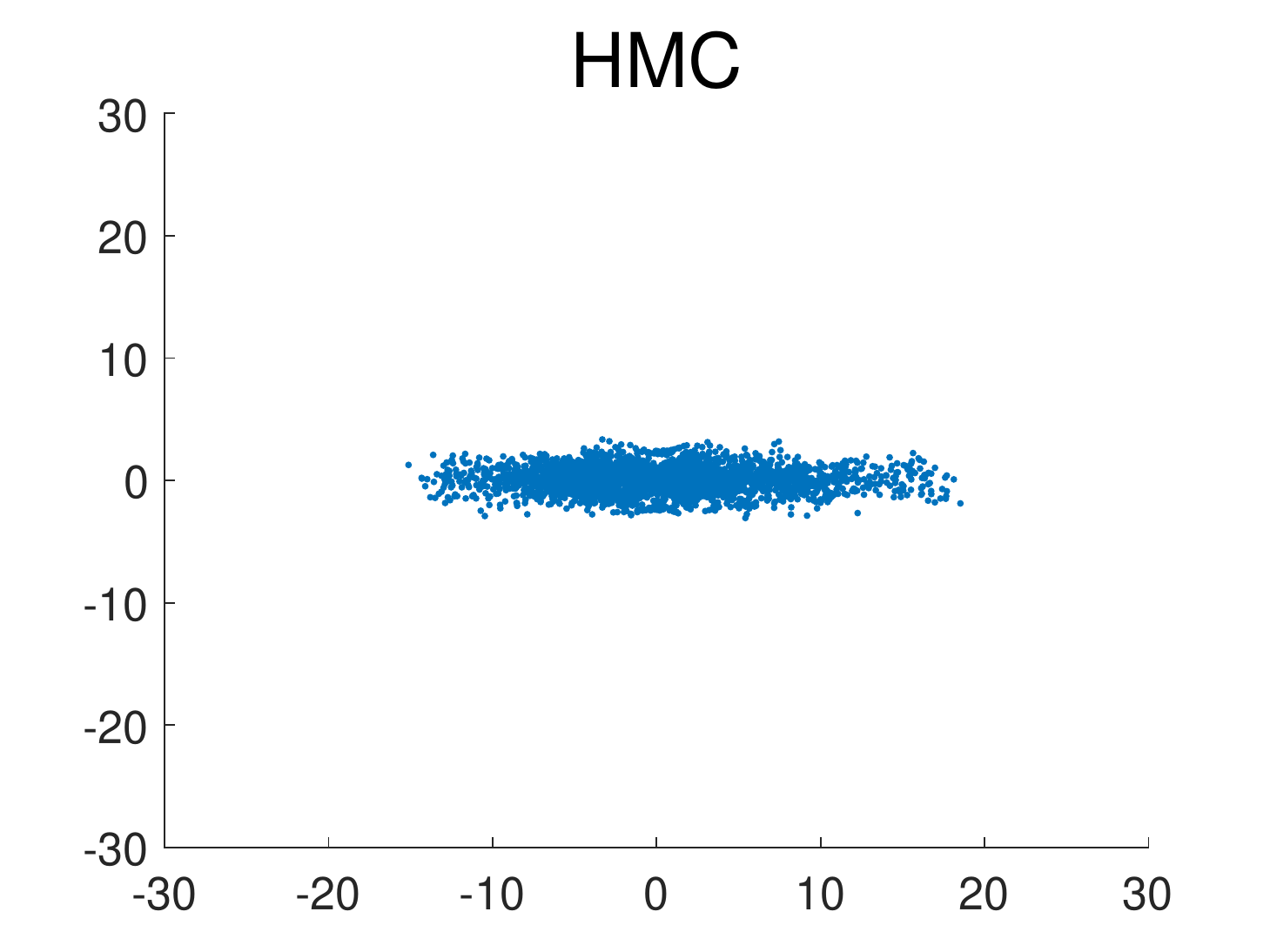}
            \caption[]%
            {{\small}}    
        \end{subfigure}
        \hfill
        \begin{subfigure}[b]{0.32\textwidth}  
            \centering 
            \includegraphics[width=\textwidth]{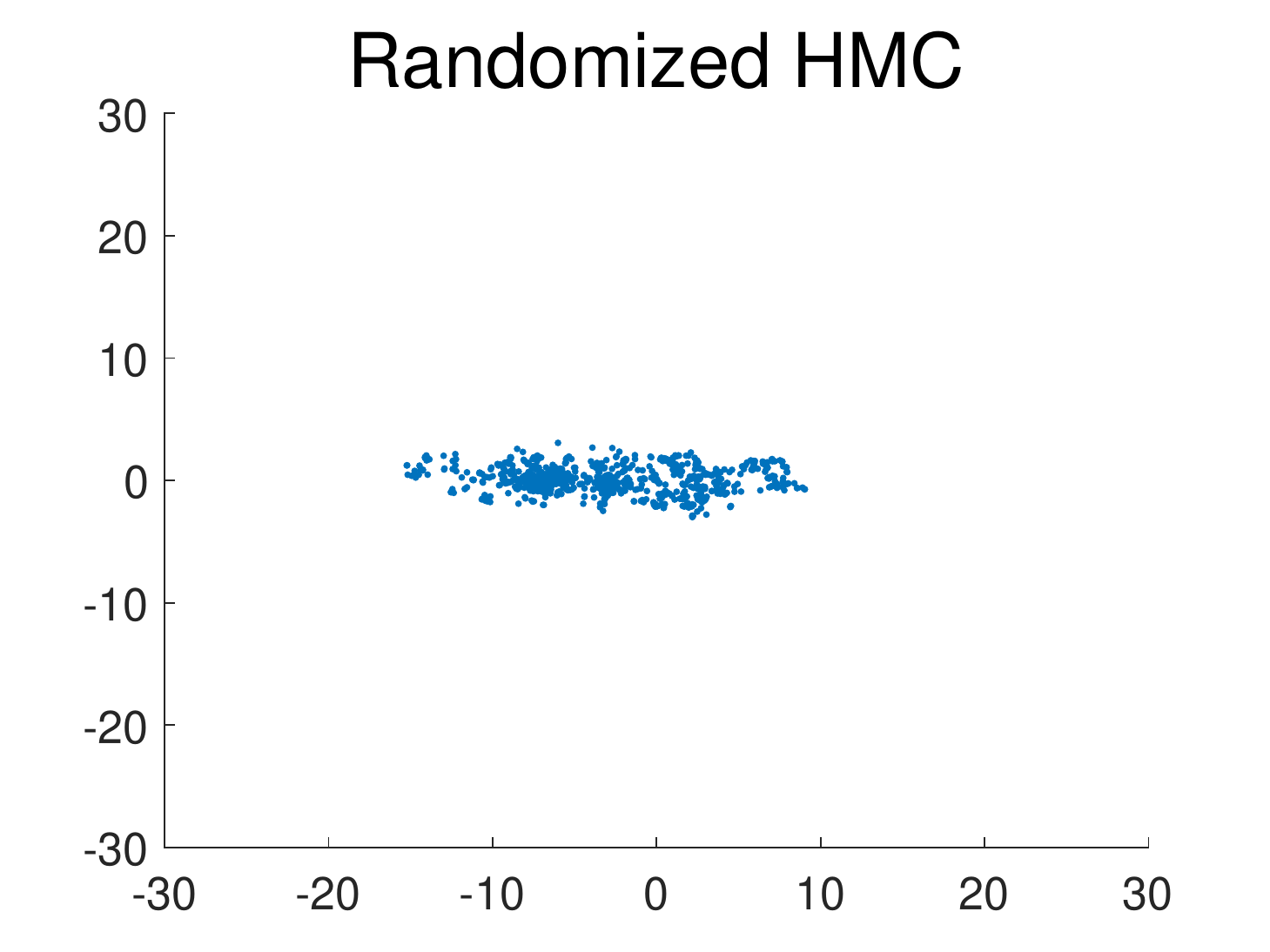}
            \caption[]%
            {{\small}}    
        \end{subfigure}
        \vskip\baselineskip
        \begin{subfigure}[b]{0.32\textwidth}   
            \centering 
            \includegraphics[width=\textwidth]{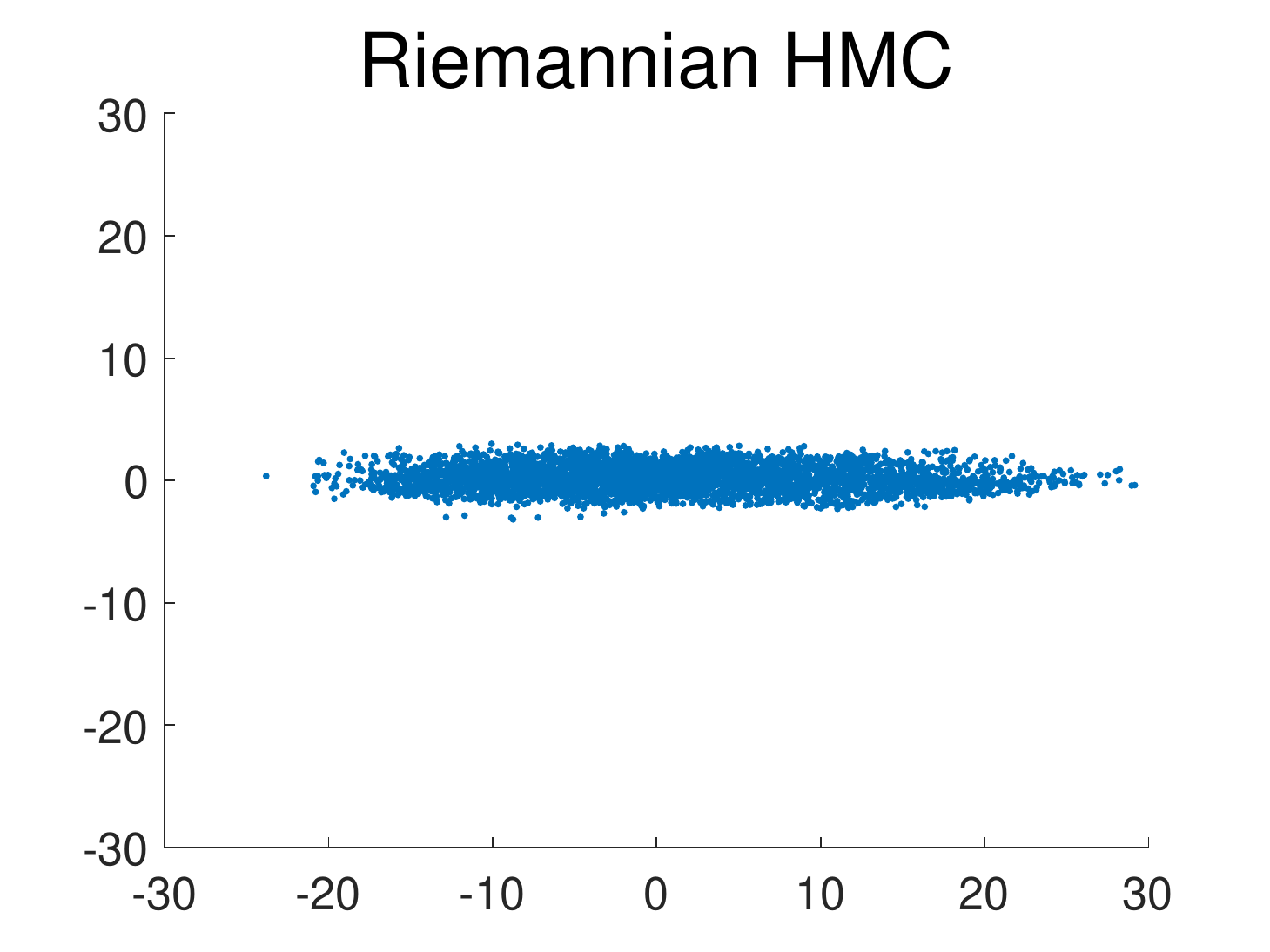}
            \caption[]%
            {{\small}}    
        \end{subfigure}
        \hfill
        \begin{subfigure}[b]{0.32\textwidth}   
            \centering 
            \includegraphics[width=\textwidth]{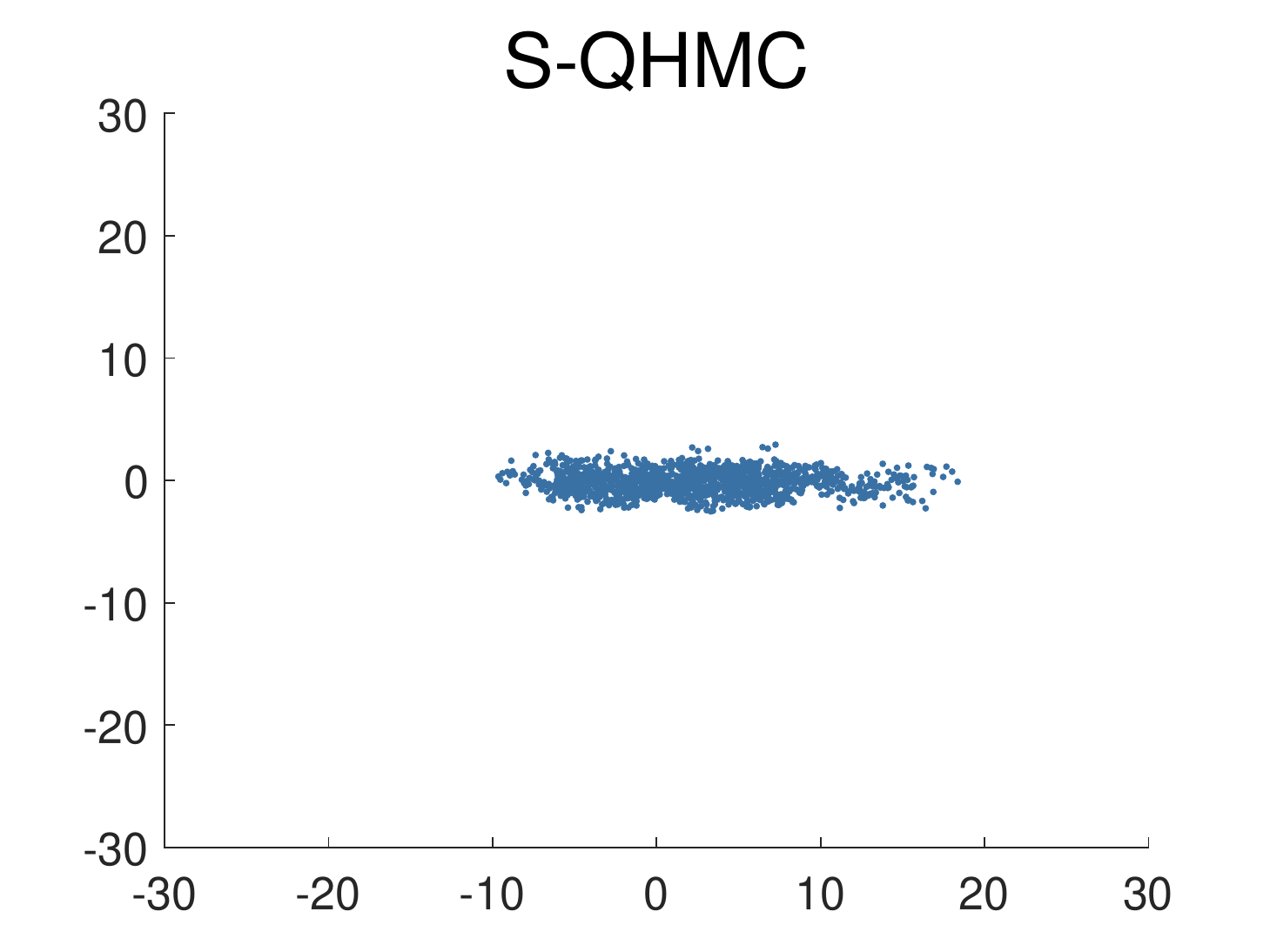}
            \caption[]%
            {{\small}}    
        \end{subfigure}
        \hfill
        \begin{subfigure}[b]{0.32\textwidth}   
            \centering 
            \includegraphics[width=\textwidth]{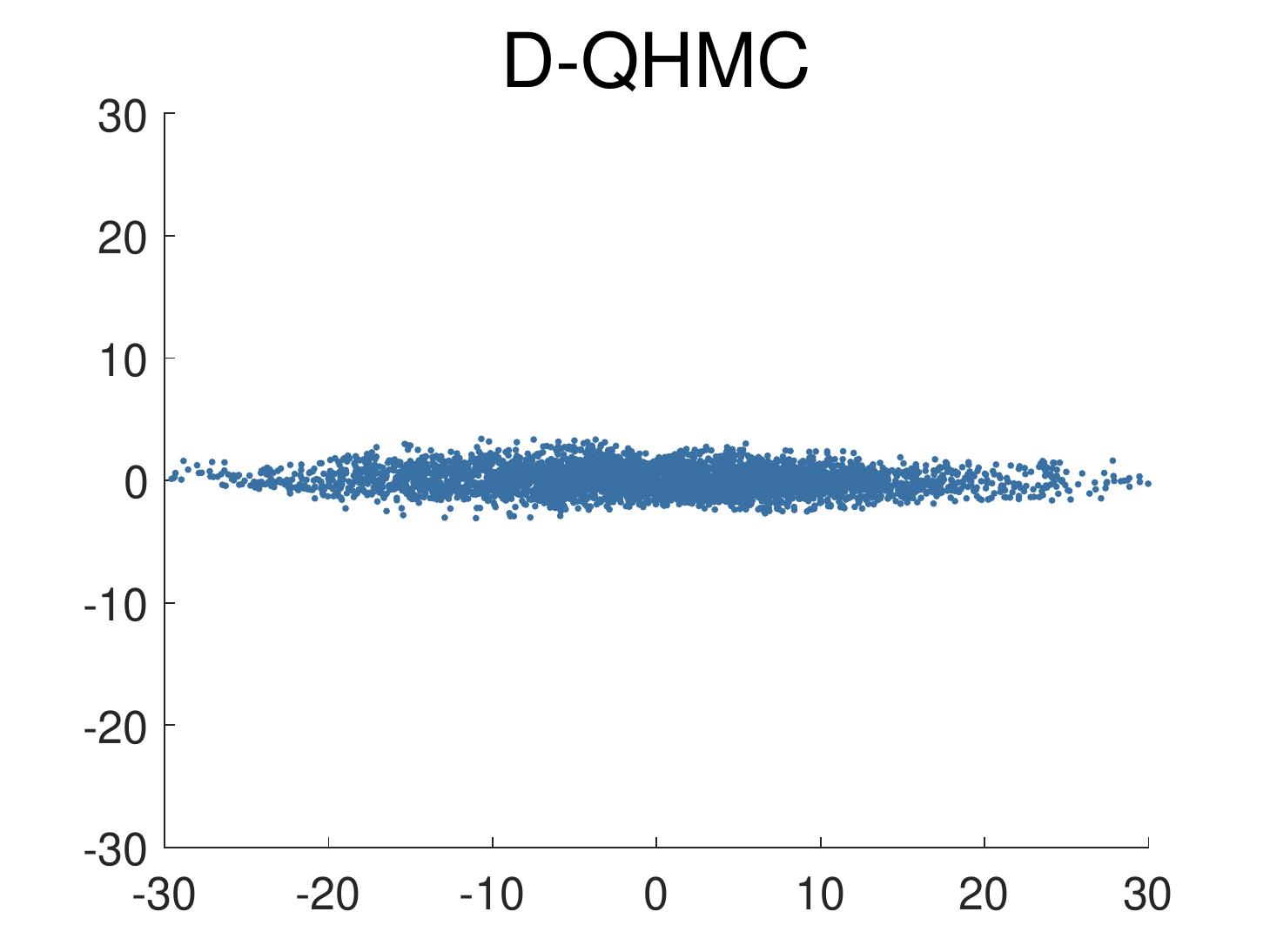}
            \caption[]%
            {{\small}}    
        \end{subfigure}
        \caption{An ill-conditioned Gaussian distribution. (a) shows the samples drawn from the ground truth. (b)-(f) five methods used to sample the distribution: HMC, randomized HMC, riemannian HMC, S-QHMC and D-QHMC. D-QHMC outperforms S-QHMC and other baselines.}
        \label{fig:g2d}
        \vspace{-10pt}
    \end{figure}

\subsubsection{Two-Dimensional Gaussian Mixture Model}\label{exp:2mm}

%A cross distribution: compare mm QHMC, scalar QHMC (equivalent to randomized HMC) and HMC

In this example, we show how the multi-mode QHMC (i.e., M-QHMC) described Section~\ref{sec:discuss} can help sample from a Gaussian-mixture distribution. %We have discussed above how to sample efficiently from a Gaussian distribution. However, when the true distribution is a mixture of many Gaussian distributions with different $\mat{\Sigma}$, it might be hard to choose a good global $\mat{\Sigma}$. Instead, we can adopt the so-called multi-modal QHMC to allow $P_\mat{M}(\mat{M})$ a multi-modal distribution.

We consider the target two-modal distribution:
\begin{equation}
    p(\mat{x})= p_1 {\rm exp}(-\frac{1}{2}\mat{x}^T\mat{\Sigma}_1^{-1}\mat{x})+p_2 {\rm exp}(-\frac{1}{2}\mat{x}^T\mat{\Sigma}_2^{-1}\mat{x})
\end{equation}
with $p_1=p_2=\frac{1}{2} $, $\mat{\Sigma}_1={\rm diag}(1,100)$ and $\mat{\Sigma}_2={\rm diag}(100,1)$. We intend to compare five sampling methods: (1) baseline HMC with $\mat{M}=m\mat{I}$ ($m=0.02$); (2) randomized HMC with $m=0.02$ and $\tau=0.03$; (3) Riemannian HMC with $\mat{M}=0.1\mat{\Sigma}_1^{-1}$; (4) D-QHMC with $\mu_m^{(1)}=\mu_m^{(2)}=-2, \sigma_m^{(1)}=\sigma_m^{(2)}=1$; (5) M-QHMC with $P_\mat{M}(\mat{M})=\frac{1}{2}\delta(\mat{M}-0.1\mat{\Sigma}_1^{-1})+\frac{1}{2}\delta(\mat{M}-0.1\mat{\Sigma}_2^{-1})$.

\begin{figure}[t]
        \centering
        \begin{subfigure}[b]{0.32\textwidth}
            \centering
            \includegraphics[width=\textwidth]{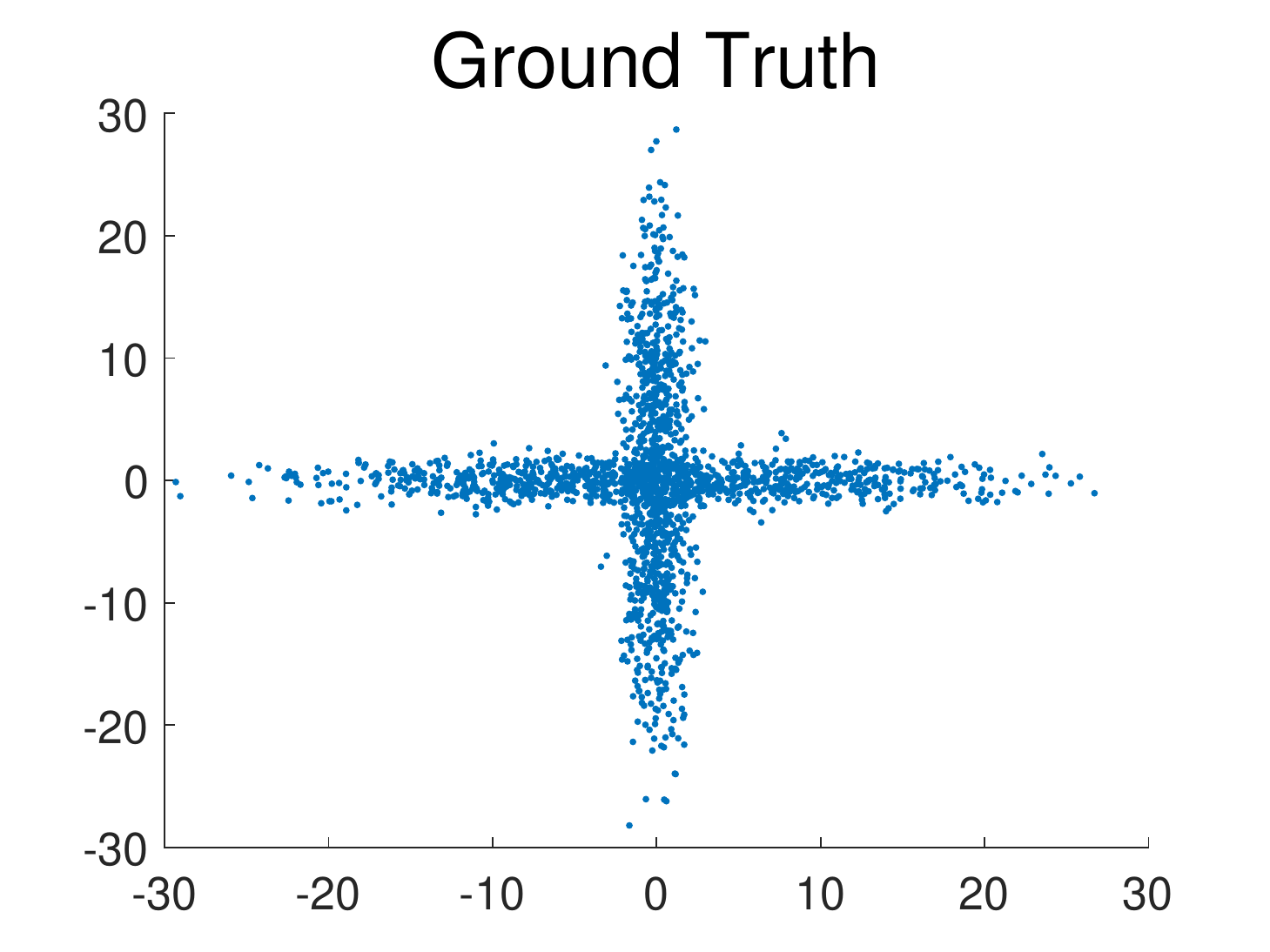}
            \caption[]%
            {{\small}}    
        \end{subfigure}
        \hfill
        \begin{subfigure}[b]{0.32\textwidth}  
            \centering 
            \includegraphics[width=\textwidth]{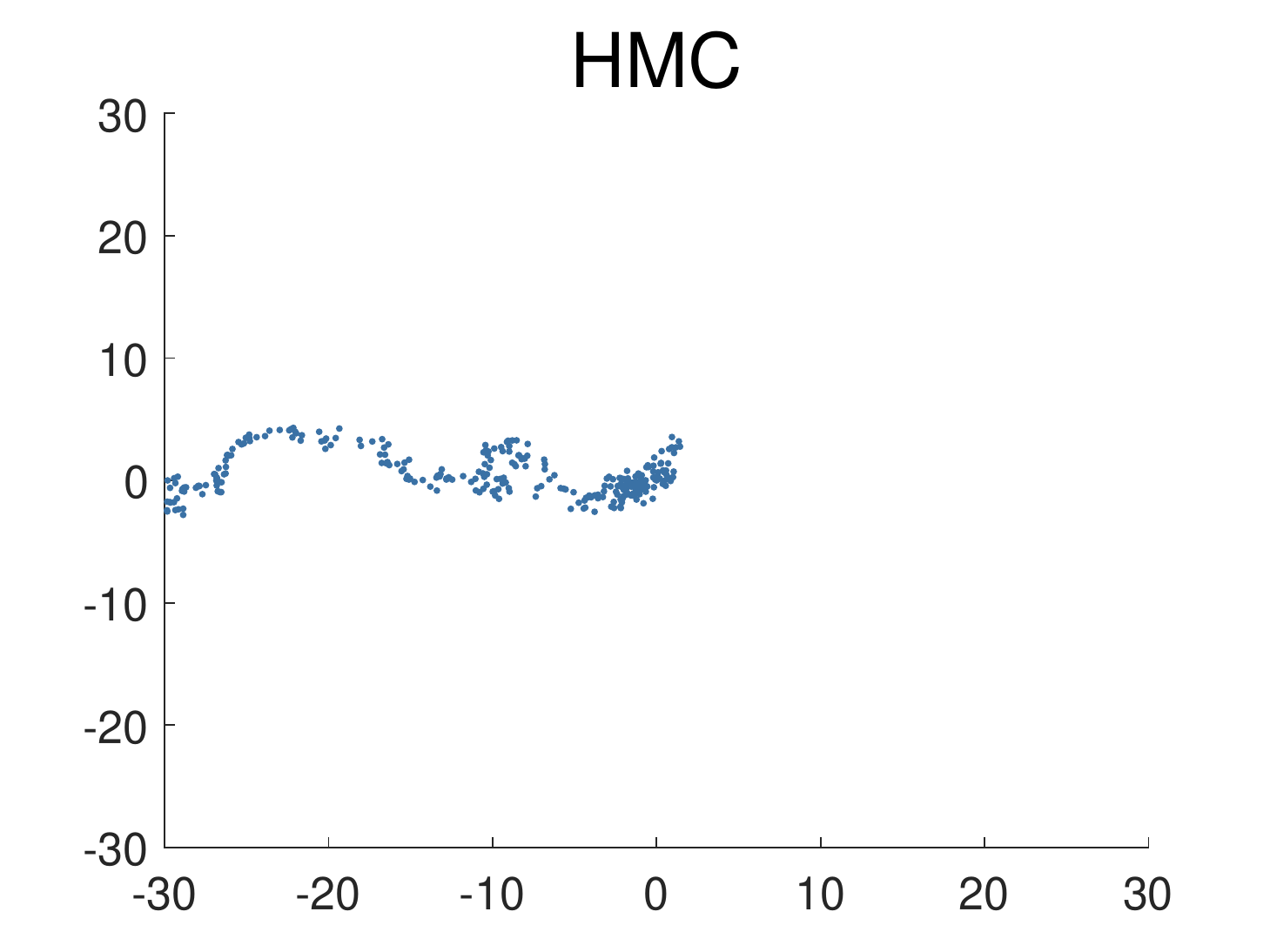}
            \caption[]%
            {{\small}}    
        \end{subfigure}
        \hfill
        \begin{subfigure}[b]{0.32\textwidth}  
            \centering 
            \includegraphics[width=\textwidth]{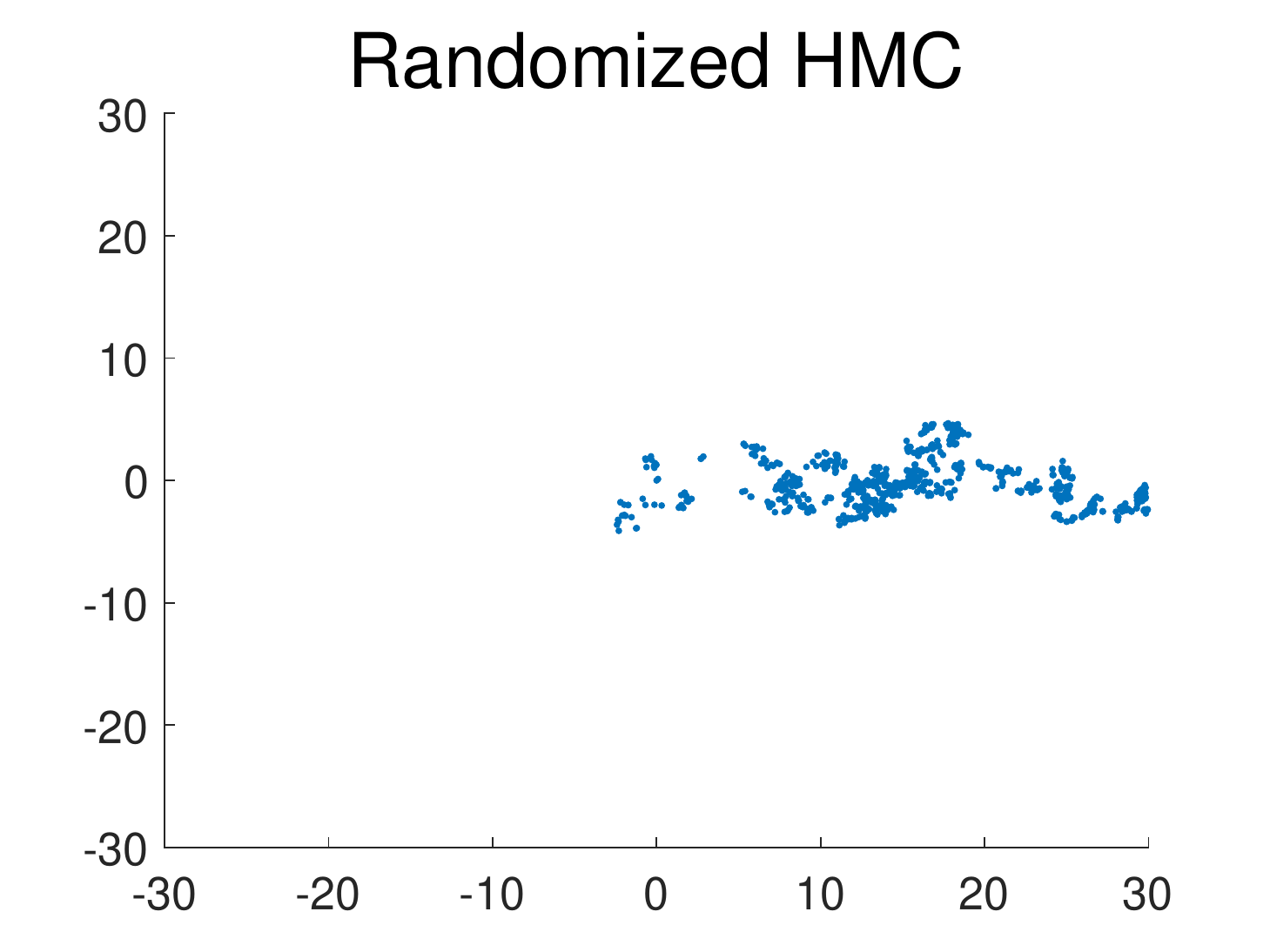}
            \caption[]%
            {{\small}}    
        \end{subfigure}
        \vskip\baselineskip
        \begin{subfigure}[b]{0.32\textwidth}   
            \centering 
            \includegraphics[width=\textwidth]{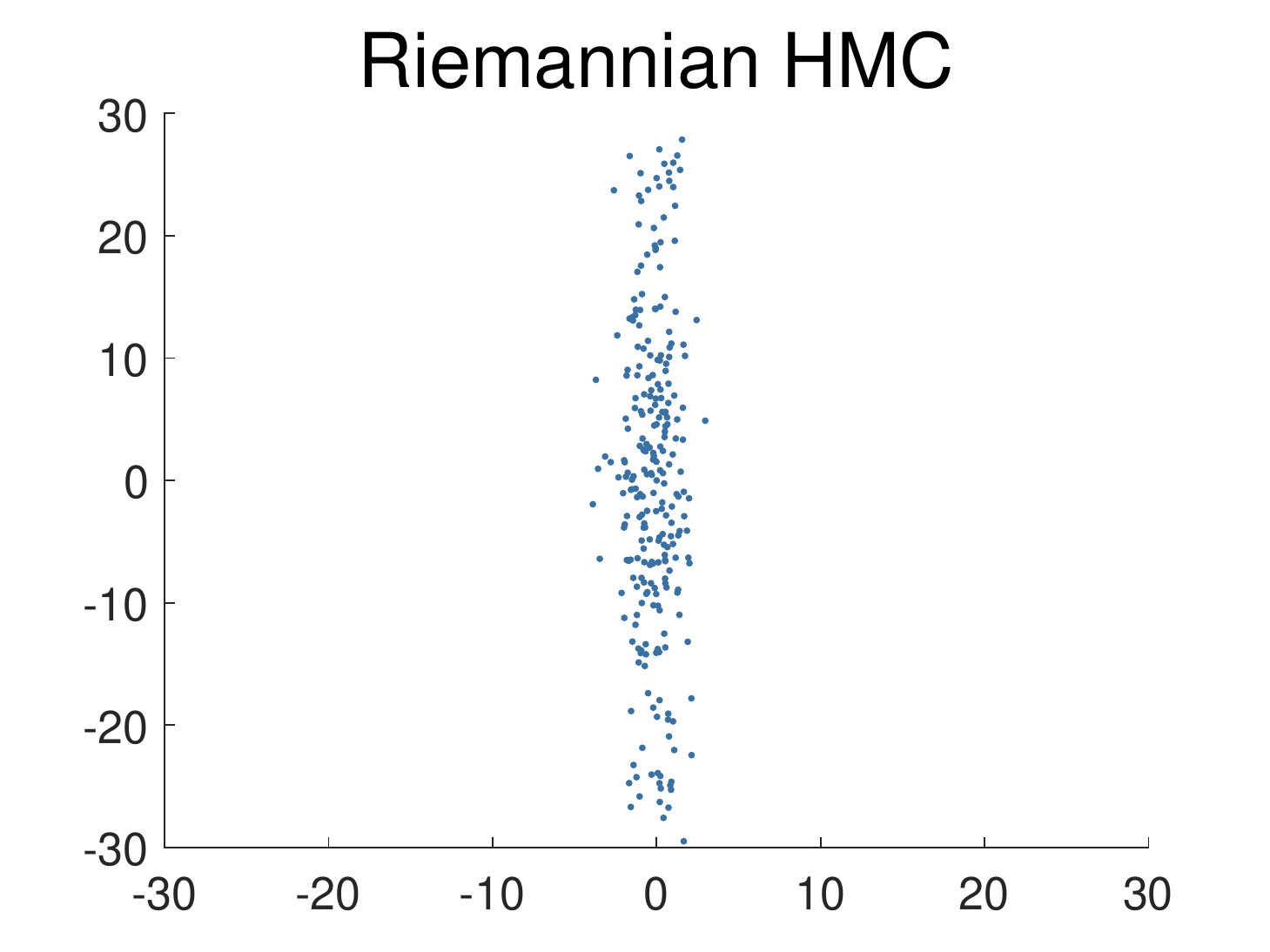}
            \caption[]%
            {{\small}}    
        \end{subfigure}
        \hfill
        \begin{subfigure}[b]{0.32\textwidth}   
            \centering 
            \includegraphics[width=\textwidth]{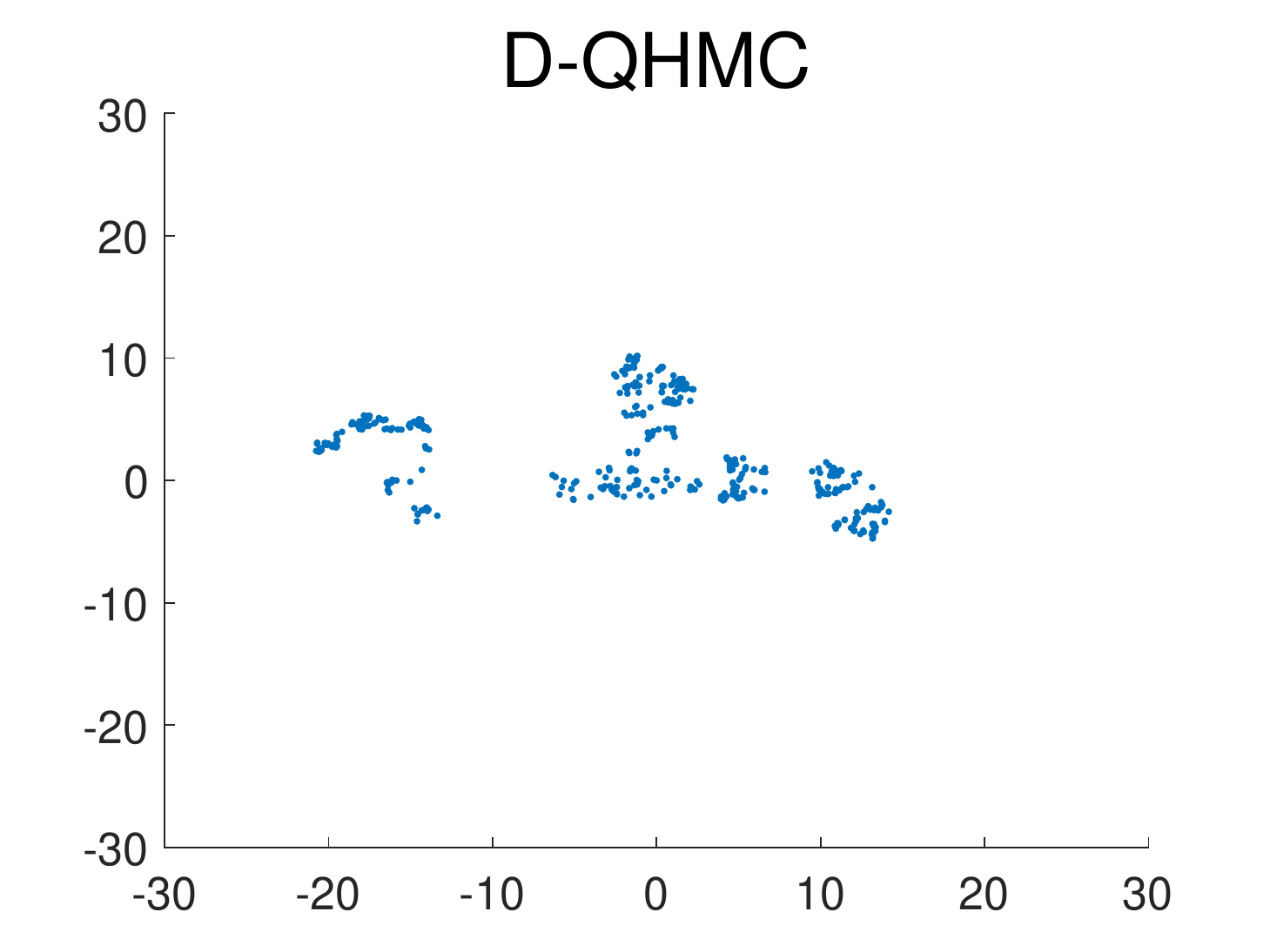}
            \caption[]%
            {{\small }}    
        \end{subfigure}
        \hfill
        \begin{subfigure}[b]{0.32\textwidth}   
            \centering 
            \includegraphics[width=\textwidth]{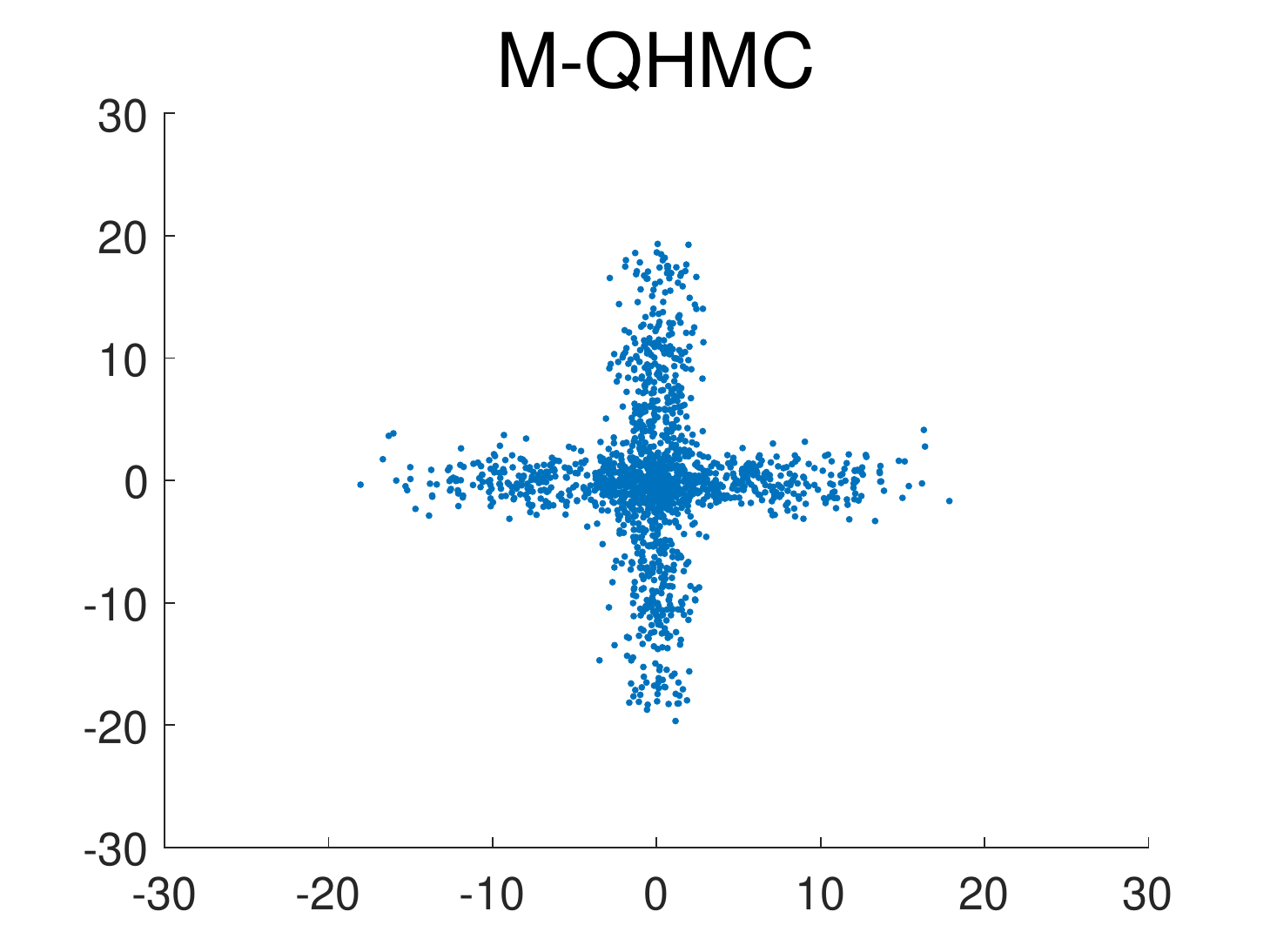}
            \caption[]%
            {{\small}}    
        \end{subfigure}
        \caption{A two-dimensional Gaussian mixture example. (a) samples drawn from the ground truth distribution. (b)-(f) five methods used for sampling: HMC, randomized HMC, riemannian HMC, D-QHMC and M-QHMC. Only M-QHMC explores both Gaussians efficiently.}
        \label{fig:mm2d}
        \vspace{-10pt}
    \end{figure}
    
The results are shown in Fig.~\ref{fig:mm2d}. Among the five methods, only M-QHMC can generate accurate sample distributions. 

\begin{comment}
\subsubsection{Discussions}

\zz{I would suggest removing this subsection.}
The above synthetic examples have shown the advantages of several practical versions of QHMC: S-QHMC, D-QHMC and M-QHMC. In a general setting, the landscape of the target posterior distribution is often unknown {\it a-priori}, and how to choose the optimal $P_\mat{M}(\mat{M})$ is an open and important question that is worth further investigation in the future.

In the following, we apply QHMC to three realistic applications, all of which choose an $\ell_p$ prior distribution aiming for sparse modeling. The potential function is generally $U(\mat{x})=L(\mat{x})+\lambda||\mat{x}||_p^p$ where $L(\mat{x})$ refers to the loss function between model prediction and ground truth on the training data set. $L(\mat{x})$ is in general not isotropic with respect to the components of $\mat{x}$, i.e. $L(\mat{x})$ is not invariant under the permutation of $x_i$ and $x_j$ $(i\neq j)$. Therefore the S-QHMC will probably not be a good choice because it treats all dimensions equally. However, a large regularization term $\lambda||\mat{x}||_p^p$ is usually chosen to enforce desired model structures, and this term is isotropic. Therefore, S-QHMC is suitable to explore the prior distribution. In the following experiments, we assume the regularization term dominates the loss term, hence S-QHMC algorithm can already achieve good sampling performance. We will also implement D-QHMC, finding its performance similar, yet sometimes better than the performance of S-QHMC.
\end{comment}

\subsection{Application: Sparse Modeling via Bridge Regression}

In data mining and machine learning, the following loss function is often minimized
\begin{equation}
    L(\boldsymbol{\beta})=\frac{\mu}{2n}||\mat{y}-\mat{X}\boldsymbol{\beta}||_F^2+\lambda ||\boldsymbol{\beta}||_p^p=L_{ls}(\mat{\beta})+L_{re}(\mat{\beta})
\end{equation}
in order to learn a model. In Bayesian learning, this is equivalent to a linear model with Gaussian noise and $\ell_p$ prior. The likelihood function and prior distribution are $p({\cal D}|\mat{\beta})\propto {\rm exp}(-L_{ls}(\beta))$ and $p(\mat{\beta})\propto {\rm exp}(-L_{re}(\mat{\beta}))$ respectively, thus the posterior distribution is $p(\mat{\beta}|{\cal D})\propto p(\mat{\beta})p({\cal D}|\mat{\beta})\propto {\rm exp}(-L(\beta))$. In ``bridge regression"~\citep{polson2014bayesian,armagan2009variational}, the parameter $p$ is chosen in the range $(0,1)$ in order to select proper features and to enforce model sparsity. In a Bayesian stetting, this is equivalent to placing a prior ${\rm exp} (-\| \boldsymbol{\beta} \|_p^p/{\tau})$ over the unknown model paramters $\boldsymbol{\beta}$.

In this experiment, we consider the case $p=1/2$ and perform bridge regression using the Stanford diabetes dataset~\footnote{https://web.stanford.edu/~hastie/Papers/LARS/diabetes.data}. This dataset includes $n=442$ people, 10 attributes (AGE, SEX, BMI, BP, S1-S6) and 1 health indicator (Y). The goal is to select as few attributes as possible but they still accurately predict the target Y. We split the dataset into a training set with 300 people and a testing set with 142 people. % .  For $i$th individual, his/her health indicator is denoted as $\mat{y}_i$. The $j$th attribute of the $i$th person is $\mat{X}_{i,j}$. The contribution of the $j$th attribute to $y$ is indicated by $\boldsymbol{\beta}_j$. Each attribute is shifted and scaled to be zero mean and unit variance among all people. The loss function of the model can be expressed as
The hyper-parameters $\mu$ and $\lambda$ can be automatically determined if by further introducing some hyper-priors over $\lambda$ and $\mu$~\citep{polson2014bayesian,armagan2009variational}. However, the non-existence of analytical conjugate priors~\citep{armagan2009variational} makes updates (Gibbs sampling) of $\mu$ and $\lambda$ only approximate, but not exact. For the sake of simplicity, we utilize a grid search method in the $(\lambda,\mu)$ plane with the testing MSE (mean-squared error) as a criteria to choose the hyper-parameters. We start from $\boldsymbol{\beta}=\mat{0}$ and use 1000 paths of gradient descent as a burn-in process. After that, we run another 1000 paths of HMC/QHMC and collect the samples. The results are shown in Table \ref{tab:sp-test-mse} for both HMC and QHMC. When the regularization $\lambda$ is small ($\lambda=0.1,1,10$), the difference between HMC and QHMC is insignificant; however, when large regularization is required for very sparse models (e.g. $\lambda=100$ or $1000$), QHMC can produce models with higher accuracy.

\begin{table}[t]
\caption{Testing mean-squared error (MSE) for the bridge regression task}
\begin{center}
\begin{tabular}{|c|c|c|c|c|c|}
    \hline
    \multicolumn{2}{|c|}{\multirow{2}{*}{HMC/QHMC}}&\multicolumn{4}{c|}{$\mu$}\\\cline{3-6}
    \multicolumn{2}{|c|}{}&0.1&1&10&100\\
    \hline
    \multirow{5}{*}{$\lambda$}&0.1&4.20/5.14&0.68/0.69&0.29/0.29&0.25/0.25\\
    \cline{2-6}
    &1&0.71/1.05&0.52/0.52&0.30/0.30&0.26/0.25\\
    \cline{2-6}
    &10&0.51/0.51&0.51/0.46&0.51/0.49&{\bf 0.31/0.28}\\
    \cline{2-6}
    &100& {\bf 0.64/0.48}&{\bf 0.88/0.58}&{\bf 0.72/0.50}&{\bf 0.69/0.52}\\
    \cline{2-6}
    &1000& 3.9/2.7&30.4/23.6&15.9/9.3& {\bf 2.4/0.8}\\
    \hline
\end{tabular}
\end{center}
\label{tab:sp-test-mse}
\vspace{-15pt}
\end{table}

\begin{table}[t]
\caption{CPU time and accuracy comparison of QHMC with some baseline methods.}
\begin{center}
\begin{tabular}{|c|c|c|c|c|}\hline
     & S-QHMC/D-QHMC & HMC & RMHMC & NUTS \\\hline
    CPU time(s) & {\bf 3.9/4.1} & 3.8 & 9.8 & 98.3 \\\hline
    test MSE & {\bf 0.28/0.29} & 0.31 & 0.98 & 0.28\\\hline
\end{tabular}
\end{center}
\label{tab:sp-baseline}
\vspace{-5pt}
\end{table}

\begin{figure}[t]
    \centering
        \begin{subfigure}[t]{0.8\textwidth}
            \centering
            \includegraphics[trim = 40mm 0mm 30mm 0mm,width=\textwidth]{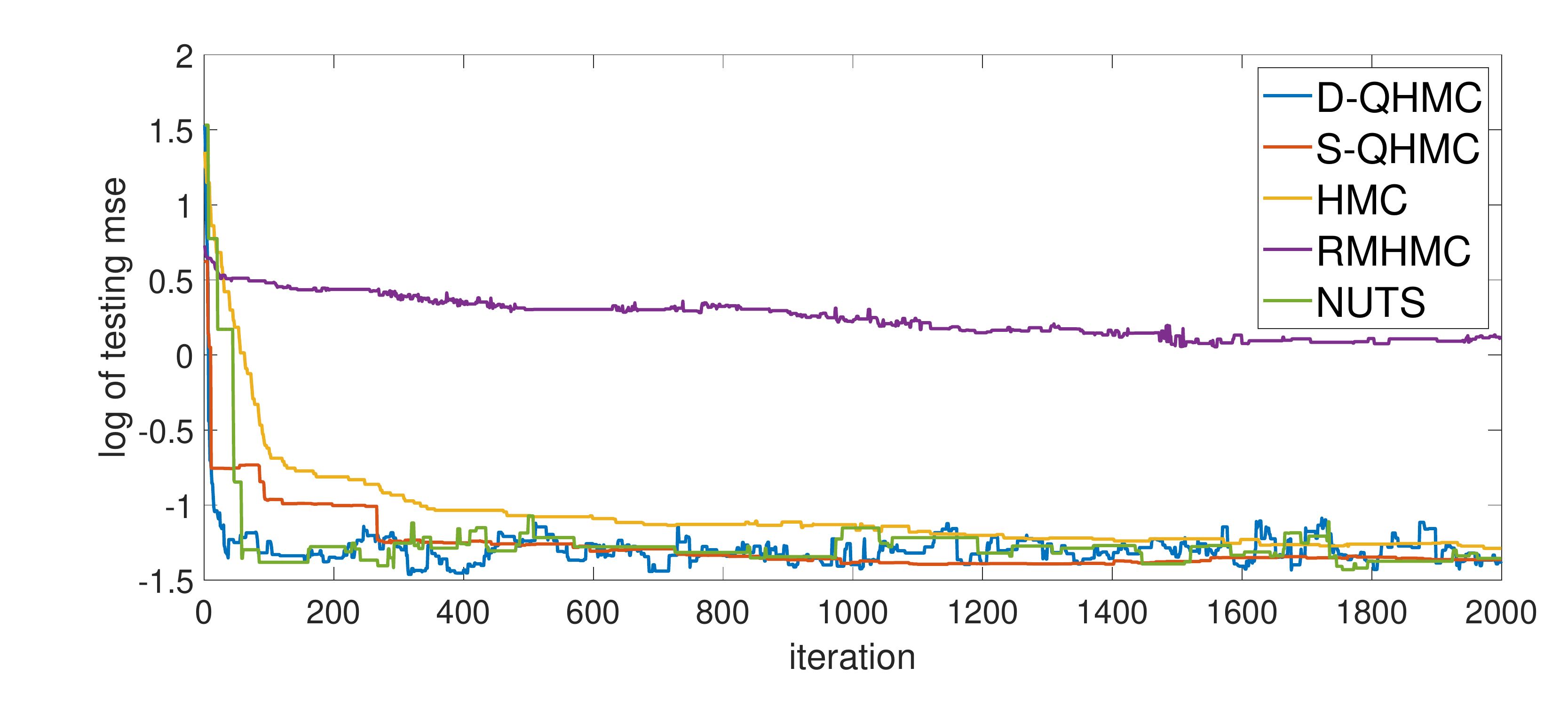}
            \caption[Network2]%
            {{\small}}    
        \end{subfigure}
        \hfill
        \begin{subfigure}[t]{0.9\textwidth}  
            \centering 
            \includegraphics[trim = 30mm 0mm 60mm 0mm, clip,width=\textwidth]{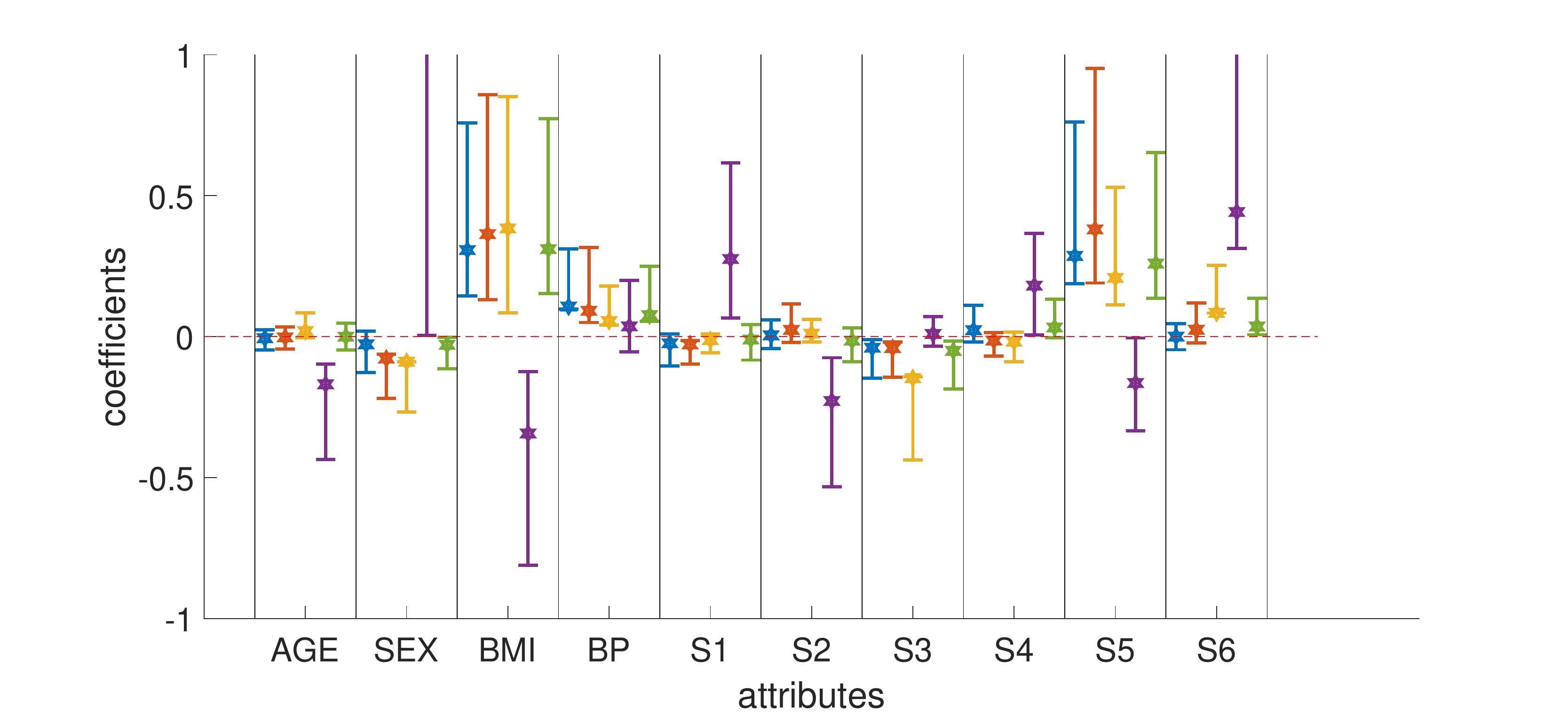}
            \caption[]%
            {{\small}}    
        \end{subfigure}
    \caption{(a) The log testing error as a function of the iteration step. The number of steps required by burn-in process in QHMC ($\sim 500$ steps) is much smaller than that in HMC ($\sim 2000$ steps). (b) Distribution of all attributes in the diabetes dataset using QHMC, HMC, NUTS and RMHMC with $\mu=100$ and $\lambda=10$. QHMC is better at sampling sparser models than standard HMC (QHMC produces smaller variance in coefficients) and has smaller MSE on the testing data.}
    \label{fig:sp_reg}
    \vspace{-10pt}
\end{figure}
We consider the case with $\lambda=10$ and $\mu=100$, and compare S-QHMC and D-QHMC against the standard HMC, NUTS~\citep{hoffman2014no} and RMHMC~\citep{girolami2011riemann}. The resulting test MSE and CPU time are reported in Table \ref{tab:sp-baseline}. RMHMC is implemented with $(\mat{X}^T\mat{X})$ (with $\mat{X}$ being the data matrix) being the metric of the Riemannian manifold in~\citep{girolami2011riemann}, and NUTS is implemented based on Alg.~2 in~\citep{hoffman2014no}. Although RMHMC is an adaptive HMC, it tends to degrade for spiky energy functions. Because the $\ell_p$ penalty is isotropic (i.e., the regularization coefficients for all attributes are the same), imposing an anisotropic metric renders the sampling performance even worse than HMC. NUTS can achieve comparable test MSE with QHMC, nontheless it consumes $25\times$ more CPU time than QHMC due to the expensive recursions of building balanced binary trees. Compared with HMC, QHMC has three advantages. Firstly, QHMC is more accurate as shown by the lower test MSE. Secondly, QHMC has a much shorter burn-in phase than HMC, as shown in Fig.~\ref{fig:sp_reg} (a). This is expected because the possible small masses in QHMC can speed up the burn-in phase. Thirdly, QHMC can produce sparser models than HMC as expected. As shown in Fig.~\ref{fig:sp_reg} (b), seven attributes have nearly zero mean (except BMI, S3 and S5). The samples from QHMC have smaller variance for these seven attributes, implying a more spiky posterior sample distribution of $\boldsymbol{\beta}$. Besides, D-QHMC achieves similar CPU time and test MSE with S-QHMC, but can produce sparser samples (see attributes SEX, S2 and S6 in Fig.~\ref{fig:sp_reg} (b)).

\subsection{Application: Image Denoising}\label{sec:image-denoising}

\begin{figure}[t]
    \centering
        \begin{subfigure}[b]{0.49\textwidth}
            \centering
            \includegraphics[trim = 45mm 120mm 70mm 108mm, clip,width=\textwidth]{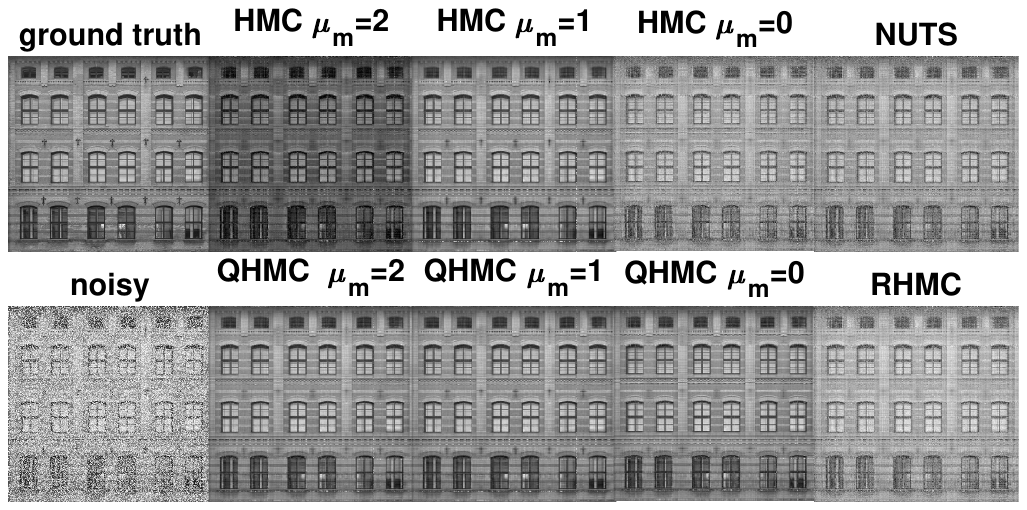}
            \caption[Network2]%
            {{\small}}    
        \end{subfigure}
        \hfill
        \begin{subfigure}[b]{0.46\textwidth}
            \centering
            \includegraphics[trim = 45mm 125mm 70mm 93mm, clip,width=\textwidth]{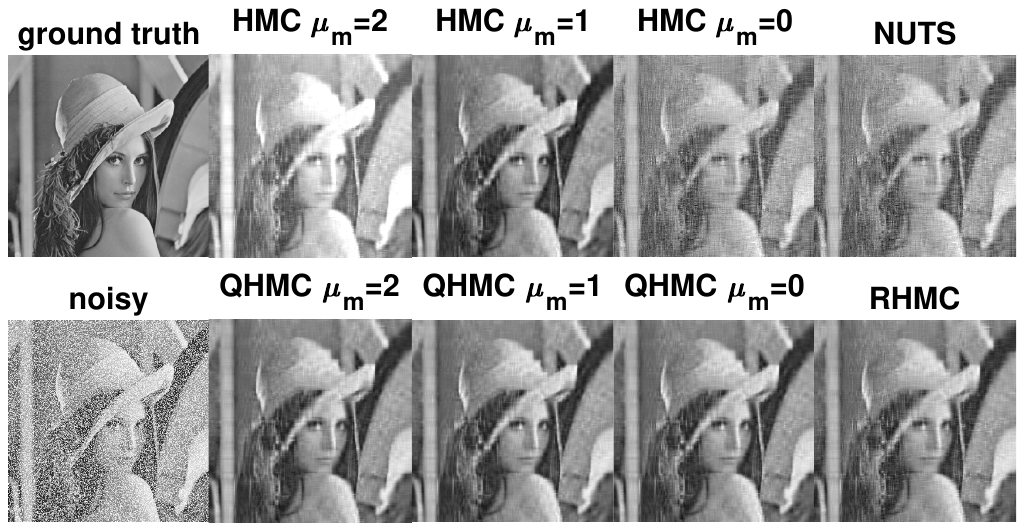}
            \caption[Network2]%
            {{\small}}    
        \end{subfigure}
        
        \begin{subfigure}[b]{0.32\textwidth}
            \centering
          \includegraphics[trim = 35mm 80mm 35mm 90mm, clip,width=1\textwidth]{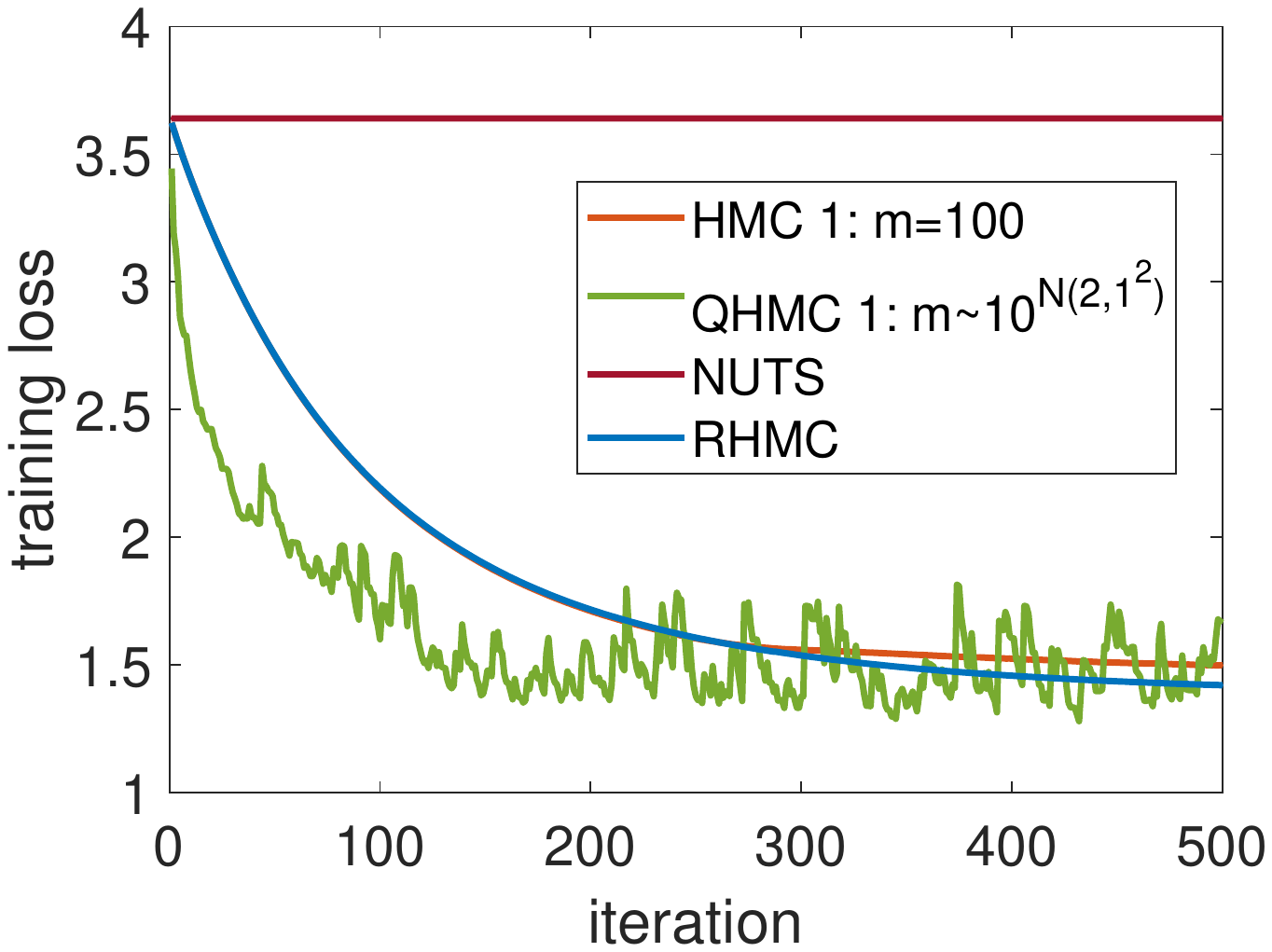}
          \caption[Network2]%
            {{\small}}    
        \end{subfigure}
        \hfill
        \begin{subfigure}[b]{0.32\textwidth}
            \centering
          \includegraphics[trim = 35mm 80mm 35mm 90mm, clip,width=1\textwidth]{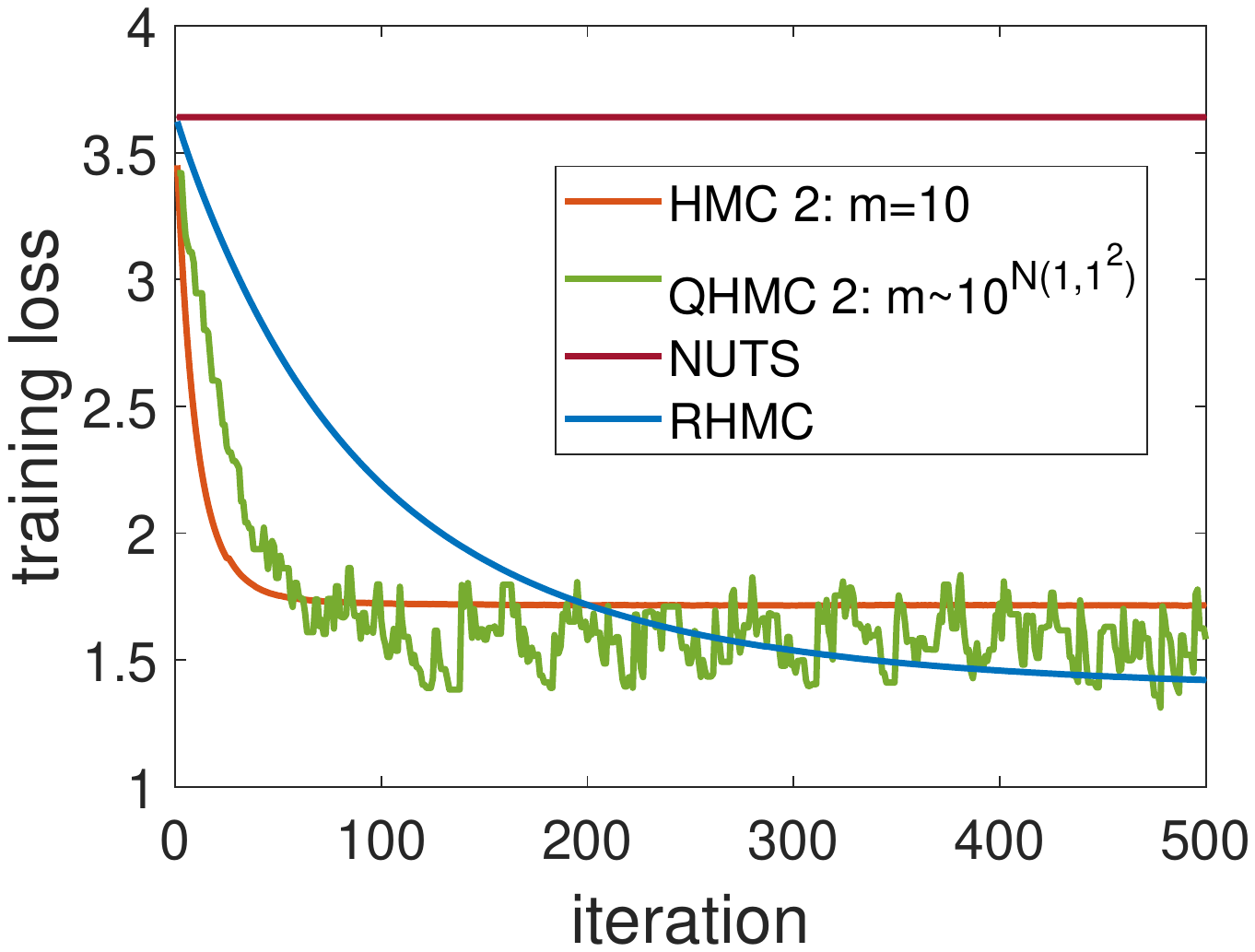}
          \caption[Network2]%
            {{\small}}    
        \end{subfigure}
        \hfill
        \begin{subfigure}[b]{0.32\textwidth}
            \centering
          \includegraphics[trim = 35mm 80mm 35mm 90mm, clip,width=1\textwidth]{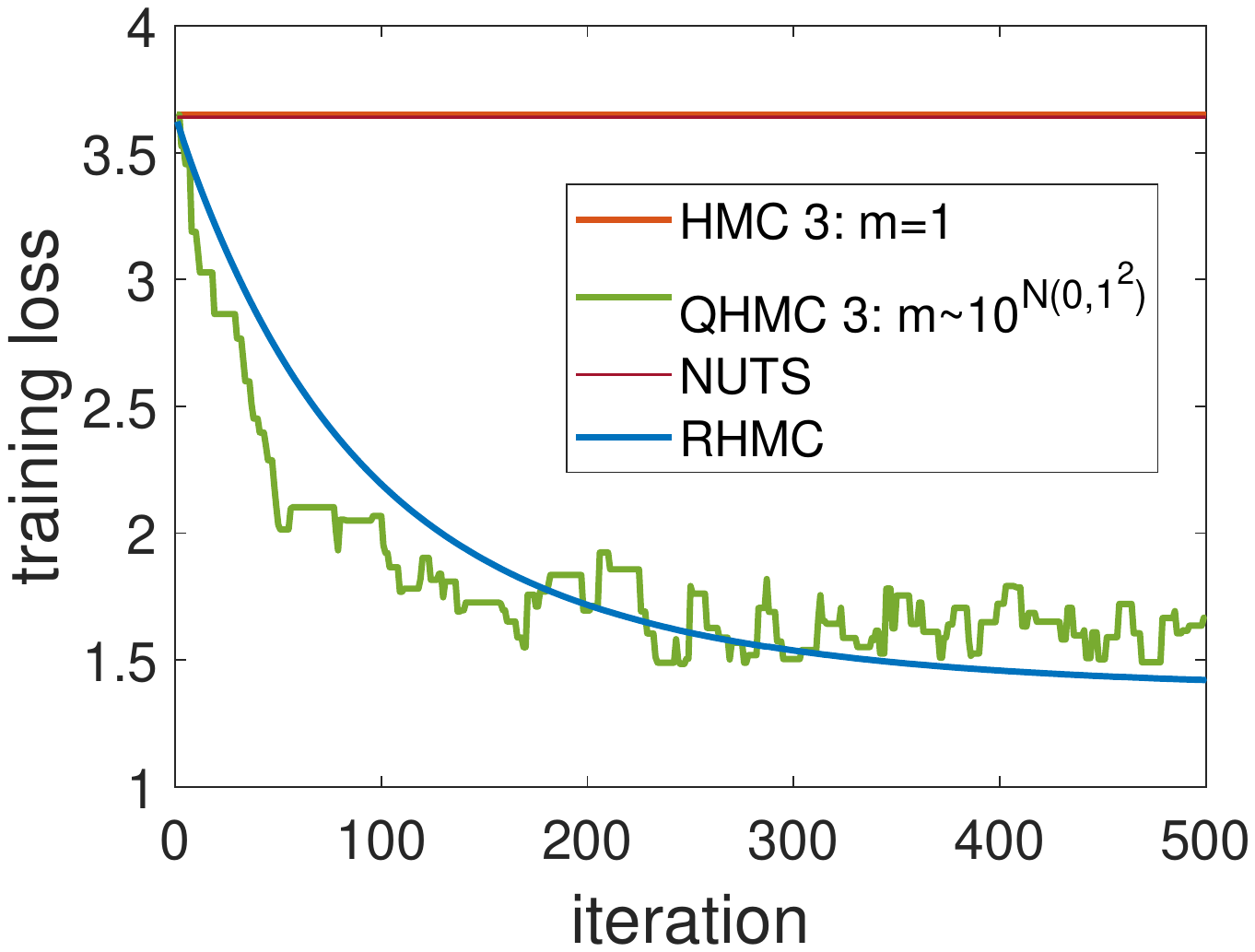}
          \caption[Network2]%
            {{\small}}    
        \end{subfigure}
    \caption[ The average and standard deviation of critical parameters ]
        {\small Image denoising example. Here we choose S-QHMC to implement QHMC. (a)(b) HMC can achieve good reconstruction only for the second parameters, while QHMC has good performance for all settings and its performance is much less sensitive to the choice of mass parameters. Also, Neither NUTS or RMHMC can achieve comparable results with QHMC. (c) For small step size (large mass), QHMC can speed up convergence in the large gradients phase; (d) For proper step size, HMC and QHMC show no significant difference. (e) For large step size (small mass), QHMC can still make progress in terms of decreasing the loss function, while HMC gets stuck at high loss.}
    \label{fig:mnist}
    \label{-15pt}
\end{figure}

In this experiment, we apply QHMC to solve an image denoising problem.

Specifically, we consider a two-dimensional gray-level image, but the extension to high-order tensors is available~\citep{sobral2015online}. We employ the robust low-rank matrix factorization model~\citep{candes2011robust} which models a corrupted image $\mat{Y}$ as the sum of a low-rank matrix $\mat{L}=\mat{A} \mat{B}$, a sparse matrix $\mat{S}$ describing outliers and some i.i.d. Gaussian noise. Standard robust PCA uses $\ell_1$-norm to enforce the sparsity of $\mat{S}$. Instead, we employ $\ell_p$ norm with $p=1/2$ because it is closer to the exact sparsity measurement $\ell_0$-norm. As a result, we can define the potential energy function as follows:

\begin{equation}
   U(\mat{A},\mat{B},\mat{S})=L_{g}+L_{l}+L_{n}=\frac{1}{2}\mu||\mat{Y}-\mat{AB}-\mat{S}||_F^2+\frac{1}{2}\lambda_1 (||\mat{A}||_F^2+||\mat{B}||_F^2)+\lambda_2||\mat{S}||_p^p
\end{equation}

In the Bayesian setting, ${\rm exp}(-L_{g})$ refers to the likelihood function; ${\rm exp}(-L_{l})$ refers to the prior density of the low rank part; ${\rm exp}(-L_{n})$ refers to the prior of the salt-and-pepper noise (which is sparse in nature). The gradient of the above loss function with respect to $\mat{A}$, $\mat{B}$ and $\mat{S}$ are:
\begin{equation}
\left\{
\begin{aligned}
\frac{\partial U}{\partial \mat{A}}&=\mu (\mat{AB}+\mat{S}-\mat{Y})\mat{B}^T+\lambda_1\mat{A} \\
\frac{\partial U}{\partial \mat{B}}&=\mu \mat{A}^T(\mat{AB}+\mat{S}-\mat{Y})+\lambda_1\mat{B}\\
\frac{\partial U}{\partial \mat{S}}&=\mu(\mat{AB}+\mat{S}-\mat{Y})+\lambda_2\frac{p_0}{\mat{S}^{1-p_0}+\epsilon_0}\mathrm{sign}(\mat{S}).
\end{aligned}
\right.
\end{equation}
In the second term of the gradient with respect to $\mat{S}$, all operations are element-wise, and $\mathrm{sign}(x)$ is defined as 
\begin{equation}
    \mathrm{sign}(x)=
    \left\{
\begin{aligned}
1,&\quad x\geq 0\\
-1,&\quad x<0.
\end{aligned}
\right.
\end{equation}

The parameters are fine tuned and set as $\mu=100$, $\lambda_1=1$, $\lambda_2=10$, $p=1/2$ and the rank $r=20$. It is possible to automatically determine the rank by introducing some hyper-priors (e.g.~\citep{zhao2015bayesian}), but here we focus on the sampling of a given model with a spiky loss function.

We use the proposed QHMC method (both D-QHMC and S-QHMC) to draw samples from the model, and we compare it with HMC, RMHMC~\citep{girolami2011riemann} and NUTS~\citep{hoffman2014no}. In all experiments we initialize the model with Singular Value Decomposition (SVD) of the corrupted image and run 300 iterations as burn-in and another 200 iterations to collect the sample images. In HMC and QHMC, we use three groups of parameter choices for the mass matrix. The metric of the Riemannian manifold~\citep{girolami2011riemann} in RMHMC is chosen as a diagonal matrix whose diagonal elements are the singular values of the corrupted image. The NUTS is implemented based on Alg.~2 in~\citep{hoffman2014no}.

We show the image denoising results and convergence behaviors of QHMC, HMC, NUTS and RMHMC in Fig.~\ref{fig:mnist}(a)--(e). We use the peak signal to noise ratio (PSNR) as the measure of denoising effects and report the results in Table~\ref{tab:psnr}. The standard HMC method is very sensitive to parameter choices, because it can only provide good results with the 2nd group of parameter setting. In contrast, our QHMC method shows good performance for all parameters choices. RMHMC achieves similar performance with HMC ($\mu_m=1$), only decreasing the training loss faster for larger iteration steps. However, RMHMC still fails to produce comparable denoising effects with QHMC. NUTS converges very slowly for this example because the ``U-Turn" criteria is easily satisfied in the spiky regions. Consequently, the recursions of balanced binary tree terminate quickly in NUTS, leading to low mixing rates. The difference between D-QHMC and S-QHMC in this example is nearly negligible as shown in Table \ref{tab:psnr}, and in Fig.~\ref{fig:mnist} we only showed the results for S-QHMC.

\begin{table}[t]
    \centering
    \caption{Image denoising example: PSNR for different algorithms}
  %  \small
 \begin{tabular}{|c|c|c|c|c|}
\hline
\multirow{2}{*}{Algorithm} & \multicolumn{3}{c|}{S-QHMC/D-QHMC with $m=10^{{\cal N}(\mu_m, 1)}$}             &  \multirow{1}{*}{NUTS} \\ \cline{2-4}
                           & $\mu_m=2$ & $\mu_m=1$ & $\mu_m=0$ &         ~\citep{hoffman2014no}    \\ \hline
PSNR                       & {\bf 23.72/23.80}      & {\bf 23.98/23.94}      & {\bf 23.96/24.06}     & 17.01                       \\ \hline
\multirow{2}{*}{Algorithm} & \multicolumn{3}{c|}{HMC with $m=10^{\mu_m}$} & \multirow{1}{*}{RMHMC} \\ \cline{2-4}
                            & $\mu_m=2$  & $\mu_m=1$   & $\mu_m=0$    &         ~\citep{girolami2011riemann}                     \\ \hline
PSNR                       & 14.94  & 23.61  & 17.03                   & 22.40                 \\ \hline

\end{tabular}
\normalsize
    \label{tab:psnr}
\end{table}

% \begin{table}[t]
%     \centering
%     \caption{Image denoising example: PSNR for different algorithms}
%     \begin{tabular}{|c|c|c|c|c|c|c|c|c|}\hline
%       Algorithm   & HMC 1 & HMC 2 & HMC 3 & QHMC 1 & QHMC 2& QHMC 3 & NUTS & RHMC\\\hline
%     PSNR & 14.94 & 23.61 & 17.03 & 23.81 & 23.92 & 24.04 & 17.01 & 22.40 \\\hline
%     \end{tabular}
%     \label{tab:psnr}
% \end{table}

\subsection{Application: Bayesian Neural Network Pruning}

Finally we apply the stochastic gradient implementation of our QHMC method to the Bayesian pruning of neural networks. Neural networks~\citep{neal2012bayesian,schmidhuber2015deep} have achieved great success in wide engineering applications, but they are often over-parameterized. In order to reduce the memory and computational cost of deploying neural networks, pruning techniques~\citep{karnin1990simple} have been developed to remove redundant model parameters (e.g. weight parameters with tiny absolute values).

\begin{comment}
The simplest example is a single-layer ReLU network which is presented in Appendix \ref{app:nn_pruning}. However, neural network pruning is a technique most useful for over-parametrized neural networks.  \zz{Perhaps this paragraph and the appendix can be removed.}
\end{comment}

In this experiment we consider training the following two-layer neural network
\begin{equation}
    \mat{\hat{y}}=\mathrm{softmax}(\mathrm{relu}(\mat{W}_2\mathrm{relu}(\mat{W}_1\mat{x}+\mat{b}_1)+\mat{b}_2))
\end{equation}
to classify the MNIST dataset~\citep{lecun-mnisthandwrittendigit-2010}. Here ``relu" means a ReLU activation function;  $\mat{W}_1 \in \mathbb{R}^{784 \times 200}$ and $\mat{W}_2 \in \mathbb{R}^{200 \times 10}$ are the weight matrices for the 1st and 2nd fully connected layers; a softmax layer is used before the output layer. The log-likelihood of this neural network is a cross-entropy function. We introduce the $\ell_p (p=1/2)$ priors, ${\rm exp}(-\lambda|| {\rm Vec} (\mat{W}_1)||_p^p)$ and ${\rm exp}(-\lambda|| {\rm Vec} (\mat{W}_2)||_p^p)$ with $\lambda=0.0001$, for the weight matrices $\mat{W}_1$ and $\mat{W}_2$ to enforce their sparsity. The resulting potential energy function is thus 
\begin{equation}
    U(\mat{W_1},\mat{W_2},\mat{b_1},\mat{b_2})=\sum_{i=1}^N\sum_{j=1}^c -\mat{y}_{i,c}{\rm log}(\hat{\mat{y}}_{i,c})+\lambda||{\rm Vec(\mat{W_1})}||_p^p+\lambda(||{\rm Vec(\mat{W_2})}||_p^p
\end{equation}
where $c$ is the number of classes (10 for MNIST), $\mat{y}$ is the ground truth label vector, and ${\rm Vec}$ is an operator that vectorizes the matrix. For simplicity we employ uniform prior distributions for  $\mat{b_1}$ or $\mat{b_2}$. Due to the large training data set (with 60000 images), we use the stochastic gradient implementation with a batch size of 64 in the stochastic gradient Nos{\'e}-Hoover thermostat (SGNHT)~\citep{ding2014bayesian} and in our proposed quantum stochastic gradient Nos{\'e}-Hoover thermostat (QSGNHT). In~\citep{chen2014stochastic} it is shown that a naive implementation of stochastic gradient HMC can produce wrong steady distributions, therefore we utilize the thermostat method~\citep{ding2014bayesian} to correct the steady distributions, and adopt the QSGNHT algorithm proposed in Section \ref{sec:SQHMC}. Our proposed QSGNHT is slightly different from SGHMC~\citep{chen2014stochastic} in the practical implementation. Firstly, we implement the MH step where a Hamiltonian is estimated based on a batch of samples, although no MH steps are used in~\citep{chen2014stochastic} due to efficiency considerations. Secondly, we set a lower bound $m_0$ for $m$: if a sampled mass $m$ is smaller than $m_0$, then we set $m=m_0$. Both tricks help avoid very large model updates because $m$ can be arbitrarily close to 0 if there is no lower bound for $m$. 

\begin{table}[t]
\caption{Bayesian pruning results of a 2-layer neural network classifier for the MNIST dataset. Comp rate: compression rate; Test acc: test accuracy.}
\begin{center}
\begin{tabular}{|c|c|c|c|c|}
    \hline
    &  SGNHT  & SGHMC & SGLD &
    QSGNHT\\
    & \citep{ding2014bayesian} & \citep{chen2014stochastic} & \citep{welling2011bayesian} & (proposed) \\\hline
    Comp rate & $47.6\pm 1.3$  & $32.2\pm 1.5$ & $30.8\pm 1.4$ & {\bf 54.2} $\pm$ {\bf 2.1} \\\hline
    Test acc (\%) & $94.4\pm 0.2$  & $95.2\pm 0.2$ & $93.2\pm 0.2$ & {\bf 96.6} $\pm$ {\bf 0.2} \\\hline
\end{tabular}
\end{center}
\label{tab:mnist_nn_pruning}
\end{table}

\begin{figure}[H]
    \centering
    \includegraphics[width=1\linewidth]{./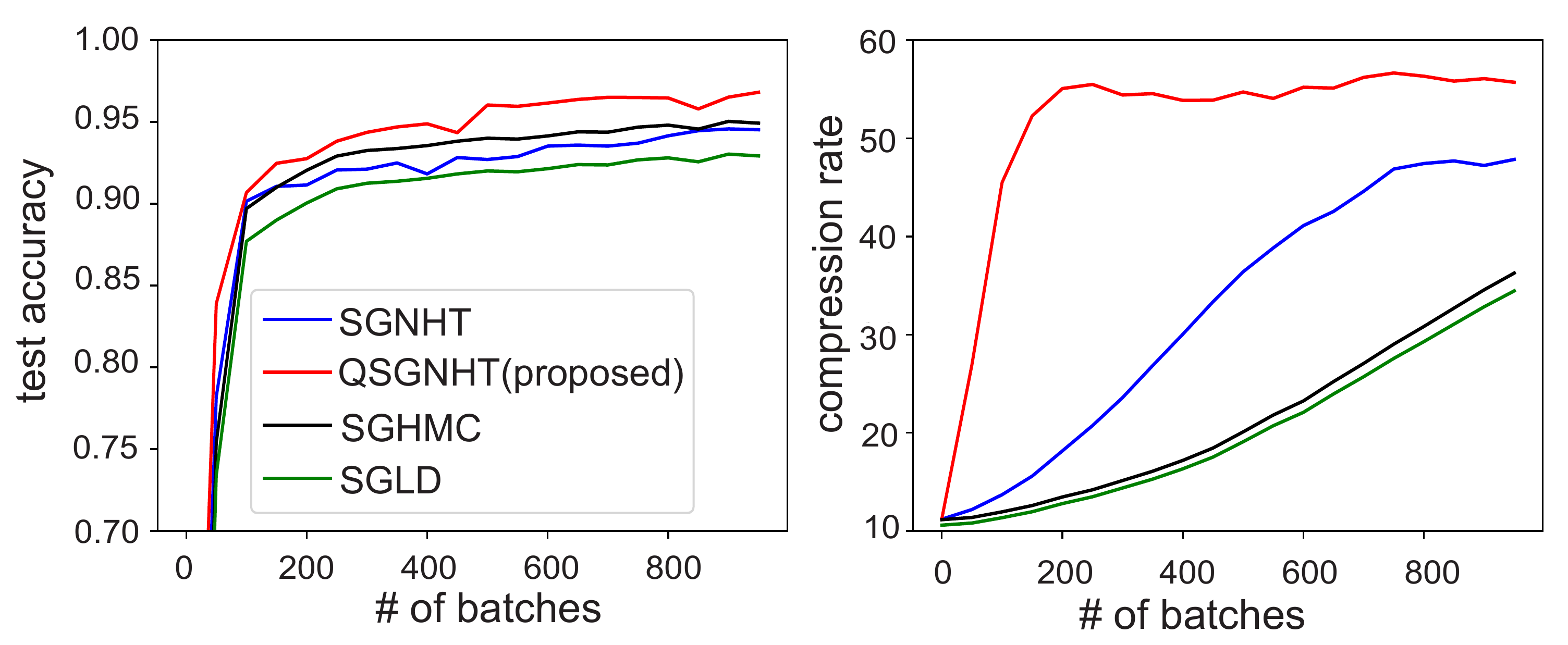}
    \caption{Bayesian pruning of a neural network with QSGNHT (proposed), SGNHT, SGHMC and SGLD. Left: testing accuracy; Right: compression rate.}
    \label{fig:nn-pruning}
    \vspace{-10pt}
\end{figure}

We choose $m=10^1$ for SGNHT and $m\sim10^{{\cal N}(1,0.5^2)}, m_0=1$ for QSGNHT, and run 1000 steps to collect random samples based on Hamiltonian simulations. We set a weight parameter to 0 if its absolute value is below $0.01$. The resulting compression rate is measured as:
\begin{equation}
    \mathrm{Compression\ rate}=\frac{\mathrm{\#\ of\ all \ weight\ parameters}}{\mathrm{\#\ of \ paramters\ after\ pruning}}.
\end{equation}

We plot the test accuracy estimated on 10000 test images and the compression rate as a function of the total number of training batches accessed in gradient evaluations, shown in Fig.~\ref{fig:nn-pruning}. Our method is compared with SGNHT~\citep{ding2014bayesian}, SGHMC~\citep{chen2014stochastic} and SGLD~\citep{welling2011bayesian}. SGHMC is implemented in the framework of SGNHT with the paramter setting $\xi=1$ and $m_\mu=\infty$. SGLD is implemented by excluding the momentum in SGHMC and directly updating the network with gradients. We consider the first 800 batches as the burn-in phase and use the simulation samples in last 200 batches. The test accuracy and compression rate for the neural network samples are reported in Table~\ref{tab:mnist_nn_pruning}. For test accuracy, our proposed QSGNHT achieves $2\%$ accuracy improvement compared with SGNHT~\citep{ding2014bayesian}. SGHMC and SGLD also underperform QSGNHT in terms of test accuracy. Regarding the compression rate, the third-order Langevin dynamics (QSGNHT and SGNHT) outperform the second-order Langevin dynamics (SGHMC) and first-order (SGLD) Langevin dynamics. Besides, the proposed QSGHNT achieves 14\% more compression rate than SGNHT.   %While the thermostat strategy seems unnecessary for SGHMC because Naive SGHMC and SGNHT have very close performance, it is indispensable for implementation of the QSGHMC. In the stochastic setting, very small mass might lead to numerical instabilities, which can be mitigated by using thermostat. In future works, we will investigate more variants of QHMC suitable for large scale problems and stochastic training.

\section{Conclusions}
Leveraging the energy-time uncertainty relation in quantum mechanics, we have proposed a quantum-inspired Hamiltonian Monte Carlo method (QHMC) for Bayesian sampling. Different from the standard HMC, our method sets the particle mass matrix as a random variable associated with a probability distribution. We have proved the convergence of its steady state (both in space and in time sequence) to the true posterior density, and have theoretically justified its advantage over standard HMC in sampling from spiky distributions and multimodal distributions. In order to improve the efficiency of QHMC in massive-data scenarios, we have proposed a stochastic-gradient implementation with Nos{\'e}-Hoover thermostat terms and proved its theoretical properties. We have discussed S-QHMC, D-QHMC and M-QHMC as three special yet useful parametrizations of QHMC and have demonstrated their advantages over HMC and its variants by several synthetic examples. Finally, we have applied our methods to solve several classical Bayesian learning problems including sparse regression, image denoising and neural network pruning. Our methods have outperformed HMC, NUTS, RMHMC and several stochastic-gradient implementations on these realistic examples. In the future, we plan to develop a deeper theoretical understanding of QHMC and more robust and efficient implementation for large-scale learning problems.

\acks{This work was partly supported by NSF CCF-1817037 and NSF CAREER Award No. 1846476. The authors would like to thank Kaiqi Zhang, Cole Hawkins and Dr. Chunfeng Cui for their valuable technical suggestions.}

\bibliography{ref}

\begin{thebibliography}{46}
\providecommand{\natexlab}[1]{#1}
\providecommand{\url}[1]{\texttt{#1}}
\expandafter\ifx\csname urlstyle\endcsname\relax
  \providecommand{\doi}[1]{doi: #1}\else
  \providecommand{\doi}{doi: \begingroup \urlstyle{rm}\Url}\fi

\bibitem[Armagan(2009)]{armagan2009variational}
Artin Armagan.
\newblock Variational bridge regression.
\newblock In \emph{Artificial Intelligence and Statistics}, pages 17--24, 2009.

\bibitem[Bakry et~al.(2008)Bakry, Cattiaux, and Guillin]{bakry2008rate}
Dominique Bakry, Patrick Cattiaux, and Arnaud Guillin.
\newblock Rate of convergence for ergodic continuous markov processes: Lyapunov
  versus poincar{\'e}.
\newblock \emph{Journal of Functional Analysis}, 254\penalty0 (3):\penalty0
  727--759, 2008.

\bibitem[Beskos et~al.(2011)Beskos, Pinski, Sanz-Serna, and
  Stuart]{BESKOS20112201}
A.~Beskos, F.J. Pinski, J.M. Sanz-Serna, and A.M. Stuart.
\newblock Hybrid monte carlo on hilbert spaces.
\newblock \emph{Stochastic Processes and their Applications}, 121\penalty0
  (10):\penalty0 2201 -- 2230, 2011.
\newblock ISSN 0304-4149.
\newblock \doi{https://doi.org/10.1016/j.spa.2011.06.003}.
\newblock URL
  \url{http://www.sciencedirect.com/science/article/pii/S0304414911001396}.

\bibitem[Borovkov(1998)]{borovkov1998ergodicity}
Aleksandr~Alekseevich Borovkov.
\newblock \emph{Ergodicity and Stability of Stochastic Processes}.
\newblock J. Wiley, 1998.

\bibitem[Bou-Rabee and Sanz-Serna(2017)]{Bou_Rabee_2017}
Nawaf Bou-Rabee and Jesús~María Sanz-Serna.
\newblock Randomized hamiltonian monte carlo.
\newblock \emph{The Annals of Applied Probability}, 27\penalty0 (4):\penalty0
  2159–2194, Aug 2017.
\newblock ISSN 1050-5164.
\newblock \doi{10.1214/16-aap1255}.
\newblock URL \url{http://dx.doi.org/10.1214/16-AAP1255}.

\bibitem[Cand{\`e}s et~al.(2011)Cand{\`e}s, Li, Ma, and
  Wright]{candes2011robust}
Emmanuel~J Cand{\`e}s, Xiaodong Li, Yi~Ma, and John Wright.
\newblock Robust principal component analysis?
\newblock \emph{Journal of the ACM (JACM)}, 58\penalty0 (3):\penalty0 11, 2011.

\bibitem[Chaari et~al.(2016)Chaari, Tourneret, Chaux, and
  Batatia]{chaari2016hamiltonian}
Lotfi Chaari, Jean-Yves Tourneret, Caroline Chaux, and Hadj Batatia.
\newblock A {Hamiltonian} {Monte Carlo Method} for {Non-Smooth Energy
  Sampling}.
\newblock \emph{IEEE Transactions on Signal Processing}, 64\penalty0
  (21):\penalty0 5585--5594, 2016.

\bibitem[Chaari et~al.(2017)Chaari, Tourneret, and Batatia]{chaari17nshmc}
Lotfi Chaari, Jean-Yves Tourneret, and Hadj Batatia.
\newblock {A General Non-Smooth} {Hamiltonian Monte Carlo} {Scheme Using}
  {Bayesian Proximity Operator Calculation}.
\newblock In \emph{European Signal Processing Conference (EUSIPCO)}, pages
  1220--1224, 2017.

\bibitem[Chen et~al.(2014)Chen, Fox, and Guestrin]{chen2014stochastic}
Tianqi Chen, Emily Fox, and Carlos Guestrin.
\newblock Stochastic gradient {Hamiltonian} {Monte Carlo}.
\newblock In \emph{International conference on machine learning}, pages
  1683--1691, 2014.

\bibitem[Ding et~al.(2014)Ding, Fang, Babbush, Chen, Skeel, and
  Neven]{ding2014bayesian}
Nan Ding, Youhan Fang, Ryan Babbush, Changyou Chen, Robert~D Skeel, and Hartmut
  Neven.
\newblock Bayesian sampling using stochastic gradient thermostats.
\newblock In \emph{Advances in neural information processing systems}, pages
  3203--3211, 2014.

\bibitem[Donoho et~al.(2006)]{donoho2006compressed}
David~L Donoho et~al.
\newblock Compressed sensing.
\newblock \emph{IEEE Transactions on information theory}, 52\penalty0
  (4):\penalty0 1289--1306, 2006.

\bibitem[Duane et~al.(1987)Duane, Kennedy, Pendleton, and Roweth]{Duane:1987de}
S.~Duane, A.~D. Kennedy, B.~J. Pendleton, and D.~Roweth.
\newblock {Hybrid Monte Carlo}.
\newblock \emph{Phys. Lett.}, B195:\penalty0 216--222, 1987.
\newblock \doi{10.1016/0370-2693(87)91197-X}.

\bibitem[Eckart(1953)]{PhysRev.91.784}
Carl Eckart.
\newblock Relation between time averages and ensemble averages in the
  statistical dynamics of continuous media.
\newblock \emph{Phys. Rev.}, 91:\penalty0 784--790, Aug 1953.

\bibitem[Eldar and Kutyniok(2012)]{eldar2012compressed}
Yonina~C Eldar and Gitta Kutyniok.
\newblock \emph{Compressed Sensing: Theory and Applications}.
\newblock Cambridge university press, 2012.

\bibitem[Girolami and Calderhead(2011)]{girolami2011riemann}
Mark Girolami and Ben Calderhead.
\newblock Riemann manifold langevin and {Hamiltonian} {Monte Carlo} methods.
\newblock \emph{Journal of the Royal Statistical Society: Series B (Statistical
  Methodology)}, 73\penalty0 (2):\penalty0 123--214, 2011.

\bibitem[Graham and Storkey(2017)]{graham2017continuously}
Matthew~M Graham and Amos~J Storkey.
\newblock Continuously tempered {Hamiltonian Monte Carlo}.
\newblock \emph{arXiv preprint arXiv:1704.03338}, 2017.

\bibitem[Gray and Gray(1988)]{gray1988probability}
Robert~M Gray and RM~Gray.
\newblock \emph{Probability, Random Processes, and Ergodic Properties}.
\newblock Springer, 1988.

\bibitem[Hastings(1970)]{hastings1970monte}
W~Keith Hastings.
\newblock {Monte Carlo} sampling methods using {Markov} chains and their
  applications.
\newblock 1970.

\bibitem[Hoffman and Gelman(2014)]{hoffman2014no}
Matthew~D Hoffman and Andrew Gelman.
\newblock The {No-U-Turn} sampler: Adaptively setting path lengths in
  {Hamiltonian Monte Carlo}.
\newblock \emph{Journal of Machine Learning Research}, 15\penalty0
  (1):\penalty0 1593--1623, 2014.

\bibitem[Hou et~al.(2018)Hou, Xia, and Zhou]{hou2018structural}
Rongrong Hou, Yong Xia, and Xiaoqing Zhou.
\newblock Structural damage detection based on l1 regularization using natural
  frequencies and mode shapes.
\newblock \emph{Structural Control and Health Monitoring}, 25\penalty0
  (3):\penalty0 e2107, 2018.

\bibitem[Huang et~al.(2008)Huang, Horowitz, Ma, et~al.]{huang2008asymptotic}
Jian Huang, Joel~L Horowitz, Shuangge Ma, et~al.
\newblock Asymptotic properties of bridge estimators in sparse high-dimensional
  regression models.
\newblock \emph{The Annals of Statistics}, 36\penalty0 (2):\penalty0 587--613,
  2008.

\bibitem[Karnin(1990)]{karnin1990simple}
Ehud~D Karnin.
\newblock A simple procedure for pruning back-propagation trained neural
  networks.
\newblock \emph{IEEE Trans. Neural Networks}, 1\penalty0 (2):\penalty0
  239--242, 1990.

\bibitem[Lan et~al.(2014)Lan, Streets, and Shahbaba]{lan2014wormhole}
Shiwei Lan, Jeffrey Streets, and Babak Shahbaba.
\newblock Wormhole {Hamiltonian Monte Carlo}.
\newblock In \emph{Twenty-Eighth AAAI Conference on Artificial Intelligence},
  2014.

\bibitem[LeCun and Cortes(2010)]{lecun-mnisthandwrittendigit-2010}
Yann LeCun and Corinna Cortes.
\newblock {MNIST} handwritten digit database.
\newblock 2010.
\newblock URL \url{http://yann.lecun.com/exdb/mnist/}.

\bibitem[Leimkuhler and Reich(2009)]{Leimkuhler2009AMA}
Benedict~J. Leimkuhler and Sebastian Reich.
\newblock A metropolis adjusted {Nos{\'e}-Hoover} thermostat.
\newblock 2009.

\bibitem[Lu et~al.(2016)Lu, Perrone, Hasenclever, Teh, and
  Vollmer]{lu2016relativistic}
Xiaoyu Lu, Valerio Perrone, Leonard Hasenclever, Yee~Whye Teh, and Sebastian~J
  Vollmer.
\newblock Relativistic {Monte Carlo}.
\newblock \emph{arXiv preprint arXiv:1609.04388}, 2016.

\bibitem[Ma et~al.(2015)Ma, Chen, and Fox]{ma2015complete}
Yi-An Ma, Tianqi Chen, and Emily Fox.
\newblock A complete recipe for stochastic gradient {MCMC}.
\newblock In \emph{Advances in Neural Information Processing Systems}, pages
  2917--2925, 2015.

\bibitem[Mohasel~Afshar and Domke(2015)]{NIPS2015_5801}
Hadi Mohasel~Afshar and Justin Domke.
\newblock {Reflection, Refraction,} and {Hamiltonian Monte Carlo}.
\newblock In \emph{Advances in Neural Information Processing Systems}, pages
  3007--3015. 2015.

\bibitem[M{\"u}ller-Kirsten(2013)]{ma14ller2013basics}
Harald~JW M{\"u}ller-Kirsten.
\newblock \emph{Basics of Statistical Physics}.
\newblock World Scientific Publishing Company, second edition, 2013.

\bibitem[Myshkov and Julier(2016)]{myshkov2016posterior}
Pavel Myshkov and Simon Julier.
\newblock {Posterior Distribution Analysis} for {Bayesian Inference in Neural
  Networks}.
\newblock \emph{Advances in Neural Information Processing Systems (NIPS)},
  2016.

\bibitem[Neal(2012)]{neal2012bayesian}
Radford~M Neal.
\newblock \emph{Bayesian Learning for Neural Networks}, volume 118.
\newblock Springer Science \& Business Media, 2012.

\bibitem[Neal et~al.(2011)]{neal2011mcmc}
Radford~M Neal et~al.
\newblock {MCMC} using {Hamiltonian} dynamics.
\newblock \emph{Handbook of Markov Chain Monte Carlo}, 2\penalty0
  (11):\penalty0 2, 2011.

\bibitem[Nishimura et~al.(2017)Nishimura, Dunson, and
  Lu]{nishimura2017discontinuous}
Akihiko Nishimura, David Dunson, and Jianfeng Lu.
\newblock Discontinuous {Hamiltonian} {Monte Carlo} for {Discrete Parameters
  and Discontinuous Likelihoods}.
\newblock \emph{arXiv preprint arXiv:1705.08510}, 2017.

\bibitem[Oliveira and Werlang(2007)]{oliveira2007ergodic}
C{\'e}sar R~de Oliveira and Thiago Werlang.
\newblock Ergodic hypothesis in classical statistical mechanics.
\newblock \emph{Revista Brasileira de Ensino de F{\'\i}sica}, 29\penalty0
  (2):\penalty0 189--201, 2007.

\bibitem[Polson and Sun(2019)]{Polson2017BayesianL0}
Nicholas~G Polson and Lei Sun.
\newblock Bayesian $l_0$-regularized least squares.
\newblock \emph{Applied Stochastic Models in Business and Industry},
  35\penalty0 (3):\penalty0 717--731, 2019.

\bibitem[Polson et~al.(2014)Polson, Scott, and Windle]{polson2014bayesian}
Nicholas~G Polson, James~G Scott, and Jesse Windle.
\newblock The bayesian bridge.
\newblock \emph{Journal of the Royal Statistical Society: Series B (Statistical
  Methodology)}, 76\penalty0 (4):\penalty0 713--733, 2014.

\bibitem[Rosenthal(1995)]{rosenthal1995convergence}
Jeffrey~S Rosenthal.
\newblock Convergence rates for {Markov} chains.
\newblock \emph{SIAM Review}, 37\penalty0 (3):\penalty0 387--405, 1995.

\bibitem[Schmidhuber(2015)]{schmidhuber2015deep}
J{\"u}rgen Schmidhuber.
\newblock Deep learning in neural networks: An overview.
\newblock \emph{Neural networks}, 61:\penalty0 85--117, 2015.

\bibitem[Sobral et~al.(2015)Sobral, Javed, Ki~Jung, Bouwmans, and
  Zahzah]{sobral2015online}
Andrews Sobral, Sajid Javed, Soon Ki~Jung, Thierry Bouwmans, and El-hadi
  Zahzah.
\newblock Online stochastic tensor decomposition for background subtraction in
  multispectral video sequences.
\newblock In \emph{Proceedings of the IEEE International Conference on Computer
  Vision Workshops}, pages 106--113, 2015.

\bibitem[Springenberg et~al.(2016)Springenberg, Klein, Falkner, and
  Hutter]{NIPS2016_6117}
Jost~Tobias Springenberg, Aaron Klein, Stefan Falkner, and Frank Hutter.
\newblock {Bayesian Optimization} with {Robust Bayesian Neural Networks}.
\newblock In \emph{Advances in Neural Information Processing Systems}, pages
  4134--4142, 2016.

\bibitem[Tripuraneni et~al.(2017)Tripuraneni, Rowland, Ghahramani, and
  Turner]{tripuraneni2017magnetic}
Nilesh Tripuraneni, Mark Rowland, Zoubin Ghahramani, and Richard Turner.
\newblock Magnetic {Hamiltonian Monte Carlo}.
\newblock In \emph{Proceedings of the 34th International Conference on Machine
  Learning-Volume 70}, pages 3453--3461. JMLR. org, 2017.

\bibitem[Welling and Teh(2011)]{welling2011bayesian}
Max Welling and Yee~W Teh.
\newblock Bayesian learning via stochastic gradient langevin dynamics.
\newblock In \emph{Proceedings of the 28th international conference on machine
  learning (ICML-11)}, pages 681--688, 2011.

\bibitem[Xu et~al.(2010)Xu, Zhang, Wang, Chang, and Liang]{xu20101}
Zongben Xu, Hai Zhang, Yao Wang, XiangYu Chang, and Yong Liang.
\newblock ${L}_{1/2}$ regularization.
\newblock \emph{Science China Information Sciences}, 53\penalty0 (6):\penalty0
  1159--1169, 2010.

\bibitem[Ye and Zhu(2018)]{ye2018stochastic}
Nanyang Ye and Zhanxing Zhu.
\newblock Stochastic fractional hamiltonian monte carlo.
\newblock In \emph{Proceedings of the 27th International Joint Conference on
  Artificial Intelligence}, pages 3019--3025. AAAI Press, 2018.

\bibitem[Zhao et~al.(2014)Zhao, Yang, Cong, Wang, and Intes]{zhao2014p}
Lingling Zhao, He~Yang, Wenxiang Cong, Ge~Wang, and Xavier Intes.
\newblock ${L}_p$ regularization for early gate fluorescence molecular
  tomography.
\newblock \emph{Optics letters}, 39\penalty0 (14):\penalty0 4156--4159, 2014.

\bibitem[Zhao et~al.(2015)Zhao, Zhang, and Cichocki]{zhao2015bayesian}
Qibin Zhao, Liqing Zhang, and Andrzej Cichocki.
\newblock Bayesian {CP} factorization of incomplete tensors with automatic rank
  determination.
\newblock \emph{IEEE transactions on pattern analysis and machine
  intelligence}, 37\penalty0 (9):\penalty0 1751--1763, 2015.

\end{thebibliography}

\end{document}